\documentclass{article}

\usepackage{PRIMEarxiv}

\usepackage[utf8]{inputenc} % allow utf-8 input
\usepackage[T1]{fontenc}    % use 8-bit T1 fonts
\usepackage{hyperref}       % hyperlinks
\usepackage{url}            % simple URL typesetting
\usepackage{booktabs}       % professional-quality tables
\usepackage{amsfonts}       % blackboard math symbols
\usepackage{nicefrac}       % compact symbols for 1/2, etc.
\usepackage{microtype}      % microtypography
\usepackage{xcolor}         % colors
\usepackage{algorithm}
\usepackage{algorithmicx}
\usepackage{algpseudocode}
\usepackage{amsmath}
\usepackage{amssymb}
\usepackage{float}
\usepackage{amsthm}
\usepackage{graphicx}
 \usepackage{multirow} 
 \usepackage{makecell}
 \usepackage{subcaption} 

\newtheorem{theorem}{Theorem}[section] % Theorem 环境，按章节编号
\newtheorem{corollary}{Corollary}[theorem]  % 编号跟着 theorem 走
\newtheorem{proposition}[theorem]{Proposition}
\newtheorem{lemma}[theorem]{Lemma}     % Lemma 环境，和 Theorem 共享编号
\newtheorem*{proposition*}{Proposition}

\theoremstyle{definition}
\theoremstyle{plain}        % 保留斜体正文
\newtheorem*{remark}{Remark}
\title{Geometry-Aware Online Scheduling for LLM Serving: From Theoretical Bound to System Practice}

\author{
    Li Kong\\
  Gaoling School of Artificial Intelligence\\
  Renmin University of China\\
  Beijing, China \\
  \texttt{kongli@ruc.edu.cn} \\
  \And
  Qi Qi\thanks{Corresponding author.} \\
  Gaoling School of Artificial Intelligence\\
  Renmin University of China\\
  Beijing, China \\
  \texttt{qi.qi@ruc.edu.cn} \\
  \AND
  Yinyu Ye \\
  Department of Management Science and Engineering \\
  Stanford University \\
  Stanford, CA 94305-4026 \\
  \texttt{yinyu-ye@stanford.edu} \\
  \And
  Zijie Zhou \\
  Department of Industrial Engineering and Decision Analytics \\
  HKUST \\
  Hongkong, China \\
  \texttt{jerryzhou@ust.hk} \\
}

\begin{document}
\maketitle

\begin{abstract}
The explosive demand for interactive Large Language Model serving has highlighted the management of the Key-Value cache's dynamic memory footprint as a critical area for performance optimization in inference engines. Modern inference systems overwhelmingly rely on time-centric scheduling heuristics, such as Shortest Job First. However, their theoretical optimality is rooted in traditional schedule modeling, failing to capture the highly dynamic, 2D spatio-temporal geometric growth specific to LLM inference mechanisms. To resolve this, we propose the geometry-aware online scheduling by introducing the Smallest Volume First (SVF) algorithm and its highly efficient variant, 1-bit SVF. Theoretically, we provide a rigorous mathematical foundation for our approach. 
% Utilizing a novel proof methodology, we tighten the worst-case competitive ratio ($\text{CR} \le 48 \rightarrow \text{CR} \le 5$) for SVF with known output lengths.
Via a novel volume-certificate proof, we sharpen SVF's worst-case competitive ratio from the prior best of 48 towards \textbf{3} in the high-concurrency regime of LLM serving. % TAG: Updated on 0624
Building upon this core breakthrough, we complete a comprehensive theoretical taxonomy analyzing our algorithms across different traffic scenarios and information availability. Practically, we seamlessly integrate our approach as a plug-and-play layer in vLLM. Extensive evaluations on Llama-3.1 models demonstrate comprehensive performance gains: SVF delivers strong reductions in both average and tail latency, while 1-bit SVF, with merely a single bit information, achieves competitive throughput and latency. This work establishes a theoretically sound and empirically proven approach for resolving memory-constrained scheduling in modern LLM deployments. To facilitate future research, our code is available at \url{https://github.com/Aurora-Kl/Geometry-Aware-Online-Scheduling.git}.
\end{abstract}

% keywords can be removed
% \keywords{First keyword \and Second keyword \and More}

\section{Introduction}
% intro先起个初稿，要和related work相辅相成。 有好些reference得等related work整理好后再加、再完善----5.1要完成这两部分
% 0503 和子杰老师讨论后： 
% 1. intro不用对调度算法做任何建模上的限制，但在section 4建模部分要明确说明模型是memory-bound（不要对建模overclaim）
% 2. intro或者在method部分要建立volume=geometry-aware的概念-Done
% 4. appendix做下discussion，看看我们的算法能和什么类型的工作兼容吧（preflix caching应该不兼容，所以正交不成立）
% 5. 多强调自己的理论价值，单scheduler的话workload是不够的！！！
% TODO:后期回来修改第三点contribution，把改进倍数这些写上去（每篇文章都这么写！！！！！）

% TODO：前两段需要精简一点-更精辟到位的表达。
Large Language Models (LLMs) are now a foundational component of modern Internet services, powering applications that serve millions of concurrent users~\cite{li2024llm, zhen2025taming}.
This evolution from experimental AI milestones to ubiquitous cognitive engines is evident in applications ranging from interactive AI agents and real-time coding assistants to dynamic search engines (e.g., \cite{achiam2023gpt, xiong2024search, liu2024deepseek, murali2024ai}). 
As LLMs become deeply embedded in consumer-facing %(C-end) 
scenarios, the primary bottleneck in deployment has decisively shifted from pure model capability to inference efficiency~\cite{zhen2025taming}. 
In these interactive paradigms, %end-to-end latency 
latency is no longer just a system metric, but rather the determinant of user experience. Excessive latency directly translates to broken conversational flows, diminished engagement, and ultimately, product abandonment. Consequently, minimizing the end-to-end latency perceived by each individual user has emerged as the foremost optimization objective for modern LLM deployment frameworks.

To mitigate the high latencies inherent in heavy workloads, the community has explored various scheduling disciplines. Early implementations relying on First-Come-First-Serve (FCFS)~\cite{yu2022orca, kwon2023efficient} policies often suffer from significant head-of-line blocking, where shorter requests are trapped behind long-running tasks. To resolve this, recent research has shifted towards prediction-based scheduling to forecast request characteristics and enable prioritization.
In classical scheduling theory, Shortest Remaining Processing Time (SRPT) is proven to minimize average latency. However, in LLM inference, preemption incurs prohibitive overheads such as Key-Value (KV) cache eviction and subsequent recomputation. Since modern inference engines manage these complex preemptive mechanics at the system execution level, the focus of scheduling research has naturally shifted towards optimizing the global priority metric, with Shortest Job First (SJF) emerging as the key heuristic.
\textit{A comprehensive discussion of related LLM scheduling literature is provided in Appendix \ref{related_work}.}
%that dictates admittance and execution order. This architectural decoupling has naturally led to the widespread adoption of Shortest Job First (SJF) as the foundational heuristic to guide the prioritization. 
%However, this collective pursuit harbors a critical blind spot: the community implicitly treats SJF as a universal gold standard, failing to recognize that this legacy time-centric heuristic is inherently flawed under the unique geometric resource dynamics of LLM inference.

% --- Paragraph 3: Theoretical Collapse of SJF ---
However, this widespread focus on SJF masks a fundamental mismatch: as a legacy heuristic, SJF is ill-suited for the unique geometric dynamics of LLM inference.
It stems from two unprecedented physical challenges in LLM serving that distinguish it from traditional scheduling. First, autoregressive decoding couples execution time with linear memory expansion; each generated token increases the KV cache footprint. Second, the widespread adoption of continuous batching creates dynamic fluctuations in concurrent batch sizes. Recently, a foundational KV cache-centric model \cite{jaillet2025online} successfully formalized the spatio-temporal interplay inherent to LLM serving, offering a rigorous abstraction of the dynamic memory-time coupling. Under this framework, \cite{wang2025llm} has been theoretically revealed that SJF yields an \textbf{unbounded} worst-case competitive ratio, exposing a fundamental gap between classical scheduling theory and the geometric realities of modern LLM inference.

Crucially, resolving this fundamental gap requires moving beyond empirical trial-and-error to a rigorous theoretical foundation. We anchor our approach in formal mathematical modeling for two vital reasons. First, theoretical bounds provide universal, structural guarantees that hold securely across arbitrary and adversarial workload distributions, ensuring that system performance is robust rather than merely overfitted to specific benchmarks~\cite{zhou2026positionllmservingneeds}. Second, formal theoretical derivation inspires the discovery of novel algorithmic structures, elevating the design process from ad-hoc engineering heuristics to principled geometric solutions. 

% --- Paragraph 4: Our Algorithm Design (SVF & 1-bit SVF) ---
Driven by these theoretical insights, we propose a \textit{geometry-aware} online scheduling approach, instantiated through two complementary algorithms: Smallest Volume First (SVF) and \textit{1-bit SVF}. Instead of isolating execution time, SVF natively evaluates the integral of memory consumption over time, redefining request priority based on the predicted KV cache volume. Concurrently, 1-bit SVF serves as a highly lightweight alternative. It demonstrates that extracting merely a single bit of information is sufficient to achieve effective scheduling.
%From a deployment perspective, our approach serves as a plug-and-play scheduler that can be readily integrated with existing execution-level optimizations to further elevate the overall serving efficiency.
Ultimately, by uniting rigorous mathematical guarantees with state-of-the-art empirical performance, our geometry-aware approach successfully closes the gap between scheduling theory and the physical realities of modern LLM deployment.
% --- Paragraph 5: Theoretical Guarantees & Empirical Results ---改为先high level 再分三点展开

\begin{figure}[t]
    \centering
    \includegraphics[width=0.86\linewidth]{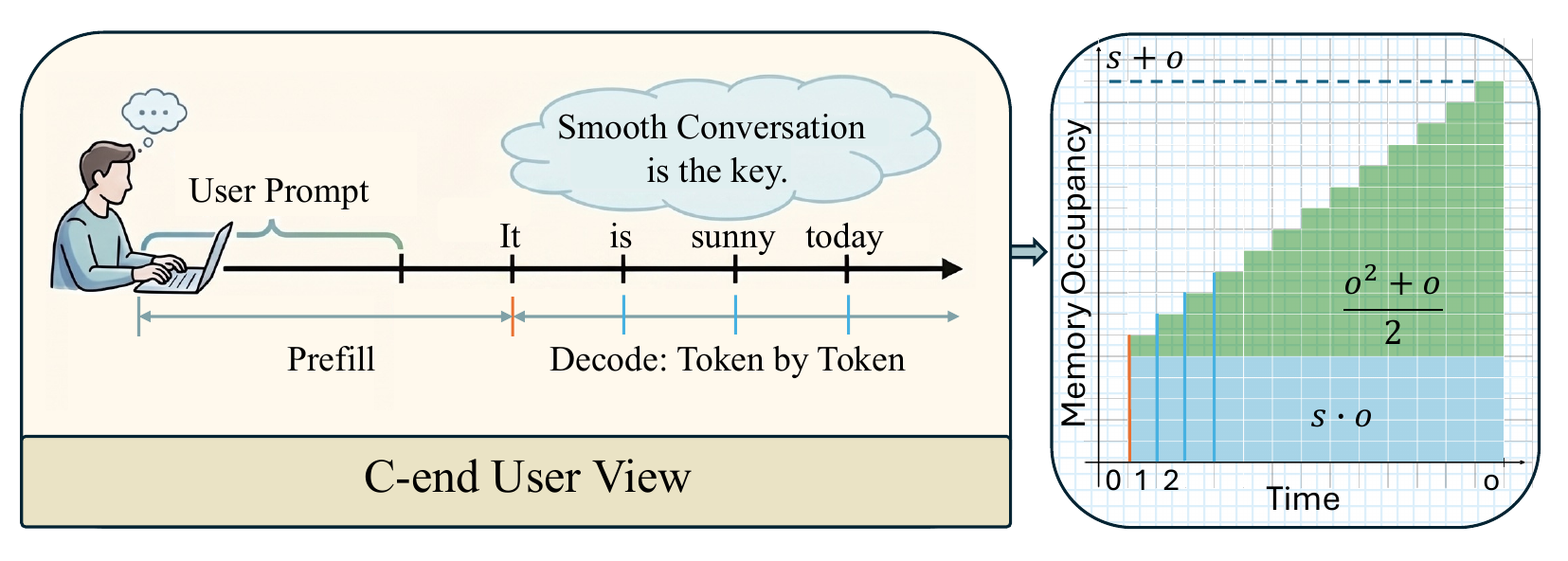}
    \caption{LLM Inference Process and the Calculation of Volume.}
    \label{fig:preliminaries}
\end{figure}

In summary, our primary contributions are threefold:

% --- Paragraph 6: Contributions ---
\begin{itemize}
    \item \textbf{Geometry-Aware Online Scheduling Algorithms:} To tackle the unique spatio-temporal properties of LLM inference, we propose SVF and its lightweight variant 1-bit SVF. These algorithms realize a fundamental shift from legacy 1D time-centric heuristics to 2D geometry-aware prioritization, achieving highly effective scheduling in online settings.

    % \item \textbf{Comprehensive Theoretical Guarantees:} We provide a rigorous mathematical foundation for our paradigm by deriving four distinct competitive ratio bounds. Most notably, under the burst arrival model, we establish a strict constant competitive ratio ($CR \le 5$) for the SVF algorithm. This bound, along with our novel proof methodology based on discrete geometric sequence analysis and memory capacity constraints, represents a significant theoretical advancement over prior LLM scheduling literature. Furthermore, we complete the theoretical taxonomy by extending our analysis to stochastic Poisson arrivals and the 1-bit prediction scenario. Crucially, we derive an exact closed-form bound featuring an exponentially vanishing penalty for 1-bit SVF under steady-state workloads. This mathematically demonstrates that extracting merely a single bit of information is sufficient to approximate the optimal full-information scheduling latency.
    \item \textbf{Comprehensive Theoretical Guarantees:} Based on the foundational modeling proposed by \cite{jaillet2025online}, we provide theoretical guarantees for our approach by deriving four distinct competitive ratio bounds across different traffic models and information availability. 
    % Most notably, under the burst arrival model, we tighten the worst-case competitive ratio ($\text{CR} \le 48 \rightarrow \text{CR} \le 5$) for SVF with known output lengths. This result, along with our novel proof methodology, represents a theoretical advancement over prior LLM scheduling literature.
    Most notably, under burst arrivals our novel volume-certificate proof sharpens SVF's \textbf{worst-case competitive ratio towards 3} in the high-concurrency regime that characterizes LLM serving--improving the prior best constant of 48. 
    This result, along with our novel proof methodology, represents a theoretical advancement over prior LLM scheduling literature.
    
    \item \textbf{Practical Implementation and Validated Gains:} From a deployment perspective, our approach serves as a plug-and-play scheduler that can be integrated with existing execution-level optimizations to further elevate the overall serving efficiency. Seamlessly implemented within vLLM, we comprehensively validate their superiority through extensive evaluations, including baseline comparisons, ablation studies, and system overhead profiling. The results confirm complementary advantages of our algorithms: SVF delivers strong latency reduction in both average and tail metrics, whereas 1-bit SVF, requiring only minimal information, achieves competitive throughput and latency.%These evaluations indicate that our geometry-aware algorithms enhance efficiency by reducing average latency and improve fairness by lowering tail latency.

\end{itemize}
% TODO：后期回来修改第三点contribution，把改进倍数这些写上去（每篇文章都这么写！！！！！）

\section{Method}

\subsection{Preliminaries}
% 貌似这一块也还是在迎合、展开说两个challenges。然后提出我们的方法。所以和3.2一开始的衔接也要做好。
\textbf{LLM inference}.
The generation process of LLMs is typically partitioned into two distinct phases: \textit{prefill} and \textit{decode}. As shown in Figure \ref{fig:preliminaries}, during the prefill phase, the system processes the entire input prompt (of length $s$), generating the first token and allocating the initial KV cache. Subsequently, in the decode phase, the model generates the remaining tokens autoregressively, where each new token expands the KV cache. This token-by-token expansion means a request's memory footprint is not static, but grows linearly over time until the request completes (with a total output length $o$).

\textbf{Continuous Batching}.
Traditional static batching suffers from severe fragmentation due to the highly variable output lengths of LLM requests. Continuous batching resolves this by operating at the token level. It dynamically evicts completed requests and injects new requests from the waiting queue into the active batch after any decoding step. While significantly maximizing GPU compute utilization, this mechanism also introduces a form of dynamism different from traditional scheduling: the system's concurrent batch size constantly fluctuates after each decoding step.

\subsection{Algorithms}
% 解释伪代码的的Event2，强调我们并未修改vLLM底层复杂的Paged Attention和KV Cache管理逻辑。我们的算法可以视为一个plug-and-play scheduler，这大大提升了算法的工业可用性。

\textbf{Smallest Volume First (SVF) Algorithm}.
Motivated by the unique spatio-temporal dynamics of LLM inference, we propose the Smallest Volume First scheduling algorithm, explicitly evaluating the 2D memory footprint of a request over its entire lifecycle. As illustrated in the right panel of Figure \ref{fig:preliminaries}, given a request $r_i$ with prompt length $s_i$ and known output length $o_i$, we can quantify this dynamic memory occupation by its \textit{Volume} $v_i$:
\begin{equation}
    v_i = s_i \cdot o_i + \frac{o_i^2 + o_i}{2}
\end{equation}

By equating the scheduling priority directly with this geometry-aware volume, SVF acts as a strictly greedy policy that sorts requests in ascending order of $v_i$. As detailed in Algorithm \ref{alg:svf}, the practical implementation decouples this logic into two processes to eliminate blocking overhead on the critical execution path. Specifically, \textbf{Process 1} operates as a plug-and-play scheduler that queries the output length predictor $\mathcal{P}$ in a batched manner. Concurrently, \textbf{Process 2} continuously pops requests with the highest priority from $Q_{wait}$ and admits them into the running batch $Q_{run}$, maximizing system concurrency until the KV cache capacity is fully utilized.

% 防御性描述：解释采用greedy算法而非online learning的3点理由，之前就准备好的——那么后续section4的burst和poisson就为两个场景下均有理论保障，进一步维护了系统稳定性
% TODO: 感觉这部分可能还有改进空间。或者看看这里会不会引发什么攻击点。
We deliberately design SVF as an explicit metric-based greedy algorithm over online learning due to the inherent non-stationarity of LLM traffic and the complex states of 2D memory packing. As established in Section \ref{sec:theory}, SVF provides robust, theoretical guarantees across diverse traffic patterns. A detailed rationale for favoring greedy metrics is provided in Appendix \ref{discussion:greedy}.

\textbf{1-bit Smallest Volume First (1-bit SVF) Algorithm}. % 轻量级版本，而不是基于SVF的改进。不能拉踩
As a lightweight alternative to the full-prediction model, we introduce 1-bit SVF, demonstrating that extracting merely \textit{1 extra bit} of information is sufficient for effective scheduling. 
Instead of predicting the exact generation length, this variant utilizes an extremely low-overhead binary classifier to categorize an incoming request as either ``short'' (class $m=0$) or ``long'' (class $m=1$). The system then assigns a theoretically derived global proxy length $O_m$ to calculate the \textit{Proxy Predicted Volume} as:
\begin{equation}
    \hat{v}_i = s_i \cdot O_m + \frac{1}{2}(O_m^2 + O_m).
\end{equation}
By relying on a simple classification boundary rather than exact numerical estimation, this 1-bit design significantly minimizes computational overhead while preserving the core spatio-temporal scheduling guarantees. The complete pseudocode is provided in Algorithm \ref{alg:onebit}.

% \begin{figure}[t]
%     \centering
%     \includegraphics[width=0.45\linewidth]{Sources/algo_case.png}
%     \caption{Enter Caption}
%     \label{fig:placeholder}
% \end{figure}
\section{Theoretical Guarantees}
\label{sec:theory}
In this section, we provide a rigorous mathematical analysis of our geometry-aware scheduling paradigm. We first introduce the formal system model that abstracts the spatio-temporal dynamics of LLM serving, followed by competitive ratio analyses under varying arrival patterns.
We operate under the standard theoretical assumption that the output generation length of each request is known.

\subsection{System Modeling}
% 建模的合理性
% - 合理的对于prefill和decode的cache成本的assumption
% - 充分反映continuous batching机制
% - 考虑多种因素（比如通信成本的难以量化），决定以不可抢占的形式进行
% - 【Anything else? 感觉这部分防御性描述得做够】
We model the online scheduling problem following the memory-constrained framework of \cite{jaillet2025online}. The system operates in discrete time steps with a global memory capacity $M$. An instance comprises $n$ requests, where each request $i \in [n]$ is characterized by its arrival time $a_i$, prompt length $s_i$, and output generation length $o_i$. 
To formulate the optimal non-preemptive schedule as an Integer Program, we introduce a binary decision variable $x_{i,t} \in \{0, 1\}$ indicating if request $i$ begins processing at time $t$. The time horizon is bounded by an upper limit $\overline{T} \le \sum_{i \in [n]} (a_i + o_i)$ to ensure a well-defined formulation. %The scheduling problem can then be rigorously formulated as follows:
The scheduling problem can then be rigorously formulated as follows:
\begin{subequations} % 使用 subequations 将整个问题组合并编号 (例如 1a, 1b, 1c)
\label{eq:optimization_problem} % 给整个问题一个标签
\begin{alignat}{2}
    % \min \quad & \sum_{i \in [n]} \left( \sum_{t=a_i}^{\overline{T}} t \cdot x_{i,t} + o_i - a_i \right) \label{eq:obj} \\
    \text{OPT} = \min \quad & \sum_{i \in [n]} \left( \sum_{t=a_i}^{\overline{T}} t \cdot x_{i,t} + o_i - a_i \right) \label{eq:obj} \\
    \text{s.t.} \quad & \sum_{t=a_i}^{\overline{T}} x_{i,t} = 1, && \quad \forall i \in [n] \label{eq:c1} \\
    & \sum_{i=1}^{n} \sum_{k=\max\{a_i, t-o_i\}}^{t-1} (s_i + t - k) \cdot x_{i,k} \le M, \quad && \forall t \in [\overline{T}] \label{eq:c2} \\
    & x_{i,t} \in \{0,1\}, \quad && \forall i \in [n], \forall t \in [\overline{T}] \label{eq:c3}
\end{alignat}
\end{subequations}

\textbf{Objective and Constraints Analysis.} Under non-preemptive execution, the objective \eqref{eq:obj} minimizes the total end-to-end latency across all requests, where a request $i$ starting at $t$ incurs a latency of $(t + o_i) - a_i$. We denote the minimum achievable value of this objective as $\text{OPT}$, representing the \textit{hindsight optimal} performance. Constraint \eqref{eq:c1} ensures that every request is scheduled exactly once. Constraint \eqref{eq:c2} restricts the aggregated memory footprint to the GPU capacity $M$ at any time step $t$. Specifically, any active request $i$ that started at $k \in [\max\{a_i, t-o_i\}, t-1]$ imposes a dynamic memory demand of $s_i + t - k$, precisely capturing both its initial prompt allocation and the token-by-token growth. Finally, constraint \eqref{eq:c3} defines the binary decision variables.

\textbf{Key Abstractions.} To bridge mathematical tractability with physical system realities, the formulation relies on three core abstractions. First, it adopts \textit{unified spatio-temporal units}, abstracting away hardware-specific compute disparities by equating each token's generation to a single discrete time step and a standardized memory unit. Second, evaluating the capacity constraint \eqref{eq:c2} at every step $t$ faithfully captures the token-level continuous batching dynamics. Finally, it enforces \textit{non-preemptive execution} to strictly isolate the fundamental efficacy of the scheduling metric from hard-to-quantify system-level overheads (e.g., cross-GPU communication or KV cache swapping).
% =============================================================================
\subsection{LP Relaxation and Volume-Centric Insight}
\label{subsec:lp_relaxation}
To uncover the structural properties of optimal memory packing, we relax the discrete integer problem into a continuous Linear Program (LP). We group requests by their total volume $k$, denoting the total count of such requests as $n_k$. Let continuous variables $a_k^t \ge 0$ represent the fractional number of requests from group $k$ completed at time $t$. The relaxed offline optimal, $\text{OPT}_{\text{LP}}$, is formulated as: 
\begin{subequations}
\label{eq:lp_relaxation}
\begin{alignat}{2}
    \text{OPT}_{\text{LP}} := \min \quad & \sum_{t} t \cdot \sum_{k} a_k^t \label{eq:lp_obj} \\
    \text{s.t.} \quad & \sum_{t'=1}^{t} \sum_{k} k \cdot a_k^{t'} \le t \cdot M, \quad && \forall t \label{eq:lp_c1} \\
    & \sum_{t} a_k^t = n_k, \quad && \forall k \label{eq:lp_c2}
\end{alignat}
\end{subequations}
Here, constraint \eqref{eq:lp_c1} bounds the cumulative processed volume by the available memory-time capacity budget $t \cdot M$. 
Analyzing \eqref{eq:lp_obj} from a cost perspective reveals a structural truth: assigning a request completion to time $t$ incurs a direct objective cost of $t$. To minimize the total cost, the optimal policy must aggressively pack the maximum number of requests into the earliest possible time steps. Given the budget $t \cdot M$ in \eqref{eq:lp_c1}, maximizing the item count mathematically dictates prioritizing requests with the smallest volume $k$. 
This formally proves that sorting by volume is not an ad-hoc heuristic, but the exact optimal structural property strictly dictated by the problem's foundational fractional relaxation.
% =============================================================================

\subsection{Worst-Case Bounds under Burst Arrivals}
\label{sec:burst}
% 这些bounds的证明可能都得放附录，但正文得放proof Sketch. 以及可能的Remark——解释这个bound在物理系统上意味着什么，比如不需要复杂调参，任意极端情况都接近最优调度

We first analyze the adversarial burst arrival scenario, where a massive influx of requests arrives simultaneously at $t=0$. This models extreme system stress events, such as traffic spikes following product launches, providing a rigorous testbed for worst-case head-of-line blocking resilience. 
% For analytical clarity, we assume a standard safety baseline where the peak memory footprint of any single request does not exceed half of the memory capacity, namely $p_i = s_i + o_i \le M/2$. 
For analytical clarity, we parameterize the workload by its peak per-request memory fraction $\alpha:=\max_i p_i/M$, with $p_i=s_i+o_i$. This is a measured physical quantity rather than a tunable knob: at least $\left\lfloor 1/ \alpha \right\rfloor$ requests can co-reside in memory, so $1/\alpha$ is exactly the system's concurrency headroom. Two anchor points recur below: $\alpha=1/2$, the conservative half-capacity margin under which \cite{jaillet2025online} analyzes its shortest-first policy; and the vanishing-fraction regime $\alpha \to 0$ (equivalently $s_i,o_i = o(M)$), under which \cite{wang2025llm} established the prior best constant of 48.

%To evaluate the performance of our geometry-aware paradigm, 
% We first identify the absolute physical processing limit of the system.
We first establish a new, tighter lower bound on OPT by employing a volume-centric capacity analysis, improving upon the LP-relaxation-based bound from~\cite{jaillet2025online}. The complete proof is in Appendix \ref{sec:appendix_proof_opt_lb}.
\begin{proposition}[Lower Bound of OPT]
\label{prop:opt_lower_bound}
Let $vol_1 \le vol_2 \le \dots \le vol_N$ be the requests sorted by their geometric volumes. The total end-to-end latency of any offline optimal schedule (OPT) satisfies:
\begin{equation}
    \text{TEL}(\text{OPT}) \ge \frac{1}{M} \sum_{j=1}^N \sum_{i=1}^j vol_i
\end{equation}
\end{proposition}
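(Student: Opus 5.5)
The plan is a direct volume-packing argument on any feasible schedule, in particular an optimal one. Under burst arrivals all $a_i=0$, so the latency of request $i$ equals its completion time $C_i := t_i + o_i$, where $t_i$ is its start step, and $\text{TEL}(\text{OPT}) = \sum_i C_i$. Take an optimal schedule and relabel the requests by completion time, so that $C_{(1)} \le C_{(2)} \le \dots \le C_{(N)}$. Ties can be broken arbitrarily. The target is the per-rank inequality
\begin{equation*}
M \cdot C_{(j)} \;\ge\; \sum_{i=1}^{j} vol_i \qquad \text{for every } j,
\end{equation*}
where the right-hand side uses the volumes sorted by size, as in the statement. Summing this over $j$ and dividing by $M$ gives the proposition.

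First I will check that the volume $v_i$ is exactly the memory-time area that request $i$ consumes under constraint \eqref{eq:c2}. If $i$ starts at $k=t_i$, it appears in the constraint at steps $t=k+1,\dots,k+o_i$ with footprint $s_i+(t-k)$. Summing over these steps gives
\begin{equation*}
\sum_{\ell=1}^{o_i}(s_i+\ell) = s_i o_i + \frac{o_i^2+o_i}{2} = v_i .
\end{equation*}
All of this area lies in the steps $1,\dots,C_i$.

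Second, fix $j$ and sum constraint \eqref{eq:c2} over $t=1,\dots,C_{(j)}$. The left-hand side is the total memory used in those steps. It is at least the full area of the $j$ requests that finish by $C_{(j)}$, because every other request contributes nonnegatively. The right-hand side is $M\,C_{(j)}$. So $M\,C_{(j)}$ is at least the sum of the volumes of $j$ distinct requests. Any $j$ distinct requests have total volume at least the sum of the $j$ smallest volumes, $\sum_{i\le j} vol_i$, which proves the per-rank inequality. Summing over $j=1,\dots,N$ then yields $\sum_j C_{(j)} \ge \frac{1}{M}\sum_j\sum_{i\le j} vol_i$.

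The only delicate step is the discrete-time bookkeeping. I need to confirm that a request counted as completed at $C_i$ has all $o_i$ of its constraint-active steps inside $[1, C_i]$. This depends on the index range $k\in[\max\{a_i,t-o_i\},t-1]$ in \eqref{eq:c2}, which gives active steps $t_i+1$ through $t_i+o_i$. I also need to confirm that the prefill allocation $s_i$ is charged to exactly those steps and to no earlier one. Once these off-by-one conventions are fixed, the rest is an exchange (rearrangement) argument. It uses no property of the optimal schedule beyond feasibility, so the bound holds for every schedule.
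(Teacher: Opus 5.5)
Your proposal is correct and follows essentially the same route as the paper. Both proofs order OPT's requests by completion time, bound the volume finished by the $j$-th completion by the memory-time budget $M\,C_{(j)}$, and sum over $j$. The differences are minor refinements: you apply the ``$j$ smallest volumes'' bound rank by rank, where the paper sums first and then invokes the rearrangement inequality. You also verify explicitly from constraint \eqref{eq:c2} that $v_i$ equals the memory-time area of request $i$, which the paper takes for granted.
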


%This bound represents the tightest possible packing of 2D spatio-temporal volumes in a rigid memory-time container $M$. 
%The double summation captures the inevitable backlog inherent to any scheduling sequence. The rigorous derivation is provided in Appendix \ref{sec:appendix_proof_opt_lb}.

Next, we evaluate the efficiency of our SVF. Instead of relying on traditional operations research methodologies---including grouping task lengths into buckets and discretizing execution into localized time windows---our proof directly derives a unified volume-rate certificate from the admission rule.

% \begin{theorem}[Worst-case Bound of SVF]
% \label{thm:svf_burst}
% Under the burst arrival model with $p_i \le \alpha M$, SVF achieves a constant worst-case competitive ratio: $\text{CR} =\frac{ \text{TEL}(\text{SVF})}{ \text{TEL}(\text{OPT})}\le 5$.
% \end{theorem}

\begin{theorem}[Worst-case Bound of SVF]
\label{thm:svf-burst}
Under the burst arrival model with $p_i = s_i + o_i \le \alpha M$,
SVF achieves a worst-case competitive ratio
\begin{equation}
  \text{CR} = \frac{\text{TEL}(\text{SVF})}{ \text{TEL}(\text{OPT})} \le  1 + \frac{2}{1-\alpha}.
\end{equation}
\end{theorem}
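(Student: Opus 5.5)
We compare SVF with the lower bound of Proposition~\ref{prop:opt_lower_bound}. Writing $L:=\frac{1}{M}\sum_{j=1}^N\sum_{i=1}^j vol_i$, the goal is to show
\[
\text{TEL}(\text{SVF}) \le \sum_j o_j + \tfrac{2}{1-\alpha}\,L \le \Bigl(1+\tfrac{2}{1-\alpha}\Bigr)\text{TEL}(\text{OPT}).
\]
For the last step we use two facts. First, $\text{TEL}(\text{OPT})\ge\sum_j o_j$, since every request needs at least its $o_j$ decode steps. Second, $\text{TEL}(\text{OPT})\ge L$ by Proposition~\ref{prop:opt_lower_bound}. Since all arrivals are at $t=0$, the latency of request $j$ under SVF is $S_j+o_j$, where $S_j$ is its start time. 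It therefore suffices to prove the \emph{volume certificate}
\[
S_j \le \frac{2}{1-\alpha}\cdot\frac{1}{M}\sum_{i\le j} vol_i
\]
for every $j$, with requests indexed in SVF (ascending volume) order.

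\textbf{Admission rule.} SVF is a greedy non-preemptive list schedule in volume order. We take the rule to be that the head of the queue is admitted at time $t$ as soon as its whole future peak trajectory fits alongside the current batch; in the burst setting this makes SVF start requests in index order. Fix $j$ and consider $[0,S_j)$. During this interval only requests $i<j$ run, so the total area consumed is at most $\sum_{i<j} vol_i$.

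\textbf{Busy time steps.} Call a step $t<S_j$ \emph{busy} if the memory committed at $t$ is at least $(1-\alpha)M$. By a charging argument, the number of busy steps is at most $\frac{1}{(1-\alpha)M}\sum_{i<j} vol_i$. Some care is needed here: what is committed is the reserved future footprint $s_i+o_i$, not the instantaneous footprint. Because each request's memory grows linearly from $s_i$ to $s_i+o_i$, the reserved peak is at most twice the average memory over its lifetime in the relevant sense, namely $vol_i \ge \tfrac12 o_i(s_i+o_i)$. This loss of a factor $2$ is where the $2$ in the bound comes from.

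\textbf{Non-busy time steps.} At a non-busy step the committed memory is below $(1-\alpha)M$. Since $p_j\le\alpha M$, request $j$ or an earlier queued request would fit and would be admitted. Under the in-order admission rule, the queue cannot be blocked at a non-busy step while $j$ is waiting, so non-busy steps before $S_j$ are impossible. Hence
\[
S_j \le \frac{2}{(1-\alpha)M}\sum_{i<j} vol_i.
\]
Summing over $j$ and adding $\sum_j o_j$ gives the claim.

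\textbf{Main obstacle.} The hard step is the busy-step charging with the factor $2$. Committed peak memory $\sum_{\text{active}}(s_i+o_i)$ counts future growth, so a time step can look full while the actual area being used is small, for example when a long request has just started. The plan is to charge each active request's reserved peak $p_i$ over its $o_i$ steps, giving total charge $o_ip_i \le 2\,vol_i$ because $vol_i=s_io_i+\frac{o_i^2+o_i}{2}\ge \frac12 o_i(s_i+o_i)$. Since request $i$ is active for exactly $o_i$ steps, this converts "committed $\ge(1-\alpha)M$" into the area bound above.

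If instead the scheduler only checks instantaneous feasibility, as in \cite{jaillet2025online}, the fallback is to argue via the projected memory at future completion times. The same $o_ip_i\le 2vol_i$ inequality still carries the charge.
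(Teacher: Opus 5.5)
Your proposal is correct and follows essentially the paper's argument. Every waiting step certifies committed peak mass $\sum_{i\in A(t)}p_i>(1-\alpha)M$, with $A(t)$ contained in the SVF prefix, since otherwise the queue head, which has $p\le\alpha M$, would be admitted; summing over the waiting steps and swapping the sums charges each predecessor at most $o_ip_i<2\,vol_i$, which you combine with Proposition~\ref{prop:opt_lower_bound} and $\text{TEL}(\text{OPT})\ge\sum_j o_j$. The only slip is that you first state the busy-step count as $\frac{1}{(1-\alpha)M}\sum_{i<j}vol_i$ without the factor $2$, which your own charging and your final bound correctly supply.
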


\begin{corollary}[Vanishing-fraction regime]
\label{cor:svf-vanishing}
As $\alpha \to 0$ (i.e.\ $s_i, o_i = o(M)$), the bound of Theorem~\ref{thm:svf-burst} approaches $\textbf{3}$. In this regime the prior best constant is $48$, attained by the \emph{offline} batch-partitioning policy of~\cite{wang2025llm}; SVF improves it to $3$ with a purely \emph{online} greedy admission rule.
\end{corollary}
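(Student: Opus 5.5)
The corollary follows directly from Theorem~\ref{thm:svf-burst}. No new scheduling argument is needed: the work is to check that the hypothesis $s_i,o_i=o(M)$ puts us in the regime where $\alpha\to 0$, and then to evaluate the limit of the bound.

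\textbf{Step 1 (identify the regime).} By definition $\alpha=\max_i p_i/M$ with $p_i=s_i+o_i$. If every $s_i$ and $o_i$ is $o(M)$, then for any fixed finite instance, or for any family of instances whose request sizes stay $o(M)$ uniformly as $M$ grows, we have $\max_i (s_i+o_i)/M\to 0$. Hence $\alpha\to 0$, and the condition $p_i\le \alpha M$ of Theorem~\ref{thm:svf-burst} holds with this $\alpha$. The only point needing care is uniformity. If infinitely many requests are allowed, we should interpret $o(M)$ as $\sup_i p_i/M\to 0$, which is exactly the paper's definition of $\alpha$ as a maximum. I would state the corollary under this reading.

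\textbf{Step 2 (evaluate the bound).} For every $\alpha\in[0,1)$, Theorem~\ref{thm:svf-burst} gives $\text{CR}\le f(\alpha):=1+\frac{2}{1-\alpha}$. The function $f$ is continuous and increasing on $[0,1)$, and $f(0)=3$. Therefore $\limsup \text{CR}\le \lim_{\alpha\to 0^+} f(\alpha)=3$. Monotonicity also gives a uniform statement: for any $\varepsilon>0$, every instance with $\alpha\le \varepsilon/(2+\varepsilon)$ satisfies $\text{CR}\le 3+\varepsilon$. Two concrete values are worth recording for context: $\alpha=1/2$ gives $f=5$, and $\alpha\le 1/10$ already gives $f\le 3.23$.

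\textbf{Step 3 (comparison).} The comparison with the constant $48$ is not something to prove. It cites the result of \cite{wang2025llm}, which holds in the same vanishing-fraction burst setting and uses an offline batch-partitioning policy. The only thing to check is that both bounds measure total end-to-end latency against the same hindsight-optimal OPT of \eqref{eq:optimization_problem} under burst arrivals, so that the two constants are comparable. SVF is online in the sense that it admits greedily in volume order and never uses future information beyond the burst set.

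I expect no real obstacle. The only subtle point is the uniformity issue in Step~1, namely making precise that $s_i,o_i=o(M)$ means $\alpha\to 0$. All of the substantive difficulty lives in Theorem~\ref{thm:svf-burst} itself, which we take as given.
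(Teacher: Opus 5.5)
Your proposal is correct and matches the paper, which treats this corollary as an immediate consequence of Theorem~\ref{thm:svf-burst}: the bound $1+\frac{2}{1-\alpha}$ is continuous and increasing in $\alpha$ and equals $3$ at $\alpha=0$, while the comparison with $48$ is a citation of prior work rather than something proved. Reading $s_i,o_i=o(M)$ as $\sup_i p_i/M\to 0$, and giving the explicit threshold $\alpha\le \varepsilon/(2+\varepsilon)$ for $\mathrm{CR}\le 3+\varepsilon$, are harmless refinements of the same argument.
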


\begin{corollary}[Conservative half-capacity margin]
\label{cor:svf-half}
At $\alpha = 1/2$---the safety margin adopted in the analysis of~\cite{jaillet2025online}---Theorem~\ref{thm:svf-burst} gives $\mathrm{CR} \le 5$.
\end{corollary}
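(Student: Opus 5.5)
The plan is to bound $\text{TEL}(\text{SVF})$ against the lower bound of Proposition~\ref{prop:opt_lower_bound}. Index requests so that $vol_1\le\dots\le vol_N$; this is exactly SVF's admission order. Since all requests arrive at $t=0$, the latency of request $j$ is its completion time $C_j=S_j+o_j$, where $S_j$ is its start time. I will write
\[
\text{TEL}(\text{SVF})=\sum_j S_j+\sum_j o_j
\]
and bound the two sums separately. The additive ``$1$'' in the target ratio should absorb $\sum_j o_j$, and the factor $2/(1-\alpha)$ should come from a volume-rate certificate for the start times.

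\emph{Step 1 (trivial term).} Every schedule must run each request for $o_j$ steps, so $\text{TEL}(\text{OPT})\ge\sum_j o_j$. Hence $\sum_j o_j\le \text{TEL}(\text{OPT})$, which supplies the $1$ in the bound.

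\emph{Step 2 (volume-rate certificate).} Fix $j$. At every time $t<S_j$, request $j$ is waiting in the queue. Because SVF is greedy in ascending volume, request $j$ (or some earlier-indexed request) could not be admitted at $t$. The only way this happens is that the memory currently in use plus the peak demand of the next candidate exceeds $M$. Since every peak satisfies $p_i\le\alpha M$, the occupied memory $U(t)$ must satisfy $U(t)>(1-\alpha)M$. Moreover, on $[0,S_j)$ the system only ever runs requests with index $<j$, because of the priority order and non-preemption. Integrating occupancy over time gives
\[
(1-\alpha)M\,S_j<\int_0^{S_j}U(t)\,dt\le\sum_{i<j}vol_i ,
\]
so that $S_j\le \frac{1}{(1-\alpha)M}\sum_{i<j}vol_i$.

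The delicate point is the admission test. The paper's model reserves memory dynamically, so a request is admitted only if the projected future footprint stays within $M$. I have to argue that blocking implies $U(t)\ge(1-\alpha)M$, or at least that the time-average of $U$ over $[0,S_j)$ is at least $(1-\alpha)M/2$. The factor $2$ in $2/(1-\alpha)$ suggests the latter is the honest statement. A request's footprint grows linearly from $s_i$ to $p_i$, so a schedule that reserves peak memory keeps occupancy only about half-used on average, roughly a triangle inside a rectangle. This triangle-versus-rectangle loss is the main obstacle.

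\emph{Step 3 (handling the factor 2).} I would try a charging argument in which each blocked time step is charged to the projected peak footprint of the running requests. This shows that their total projected volume, counted as peak times remaining time, is at least $(1-\alpha)M$ per blocked step. Each running request $i$ then contributes at most $p_i o_i\le 2\,vol_i$ to this reserved area, using $vol_i=s_io_i+(o_i^2+o_i)/2\ge p_io_i/2$. This yields
\[
S_j\le\frac{2}{(1-\alpha)M}\sum_{i<j}vol_i .
\]
Summing over $j$ and applying Proposition~\ref{prop:opt_lower_bound} gives
\[
\sum_j S_j\le\frac{2}{1-\alpha}\,\text{TEL}(\text{OPT}).
\]
Adding Step 1 completes the bound $\text{CR}\le 1+2/(1-\alpha)$. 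The corollaries then follow by substituting $\alpha\to 0$ and $\alpha=1/2$.
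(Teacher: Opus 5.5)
Your final argument in Step 3 is essentially the paper's proof of Theorem~\ref{thm:svf-burst}: a blocked step forces the running requests' peak sum above $(1-\alpha)M$, swapping the sums charges each predecessor at most $p_io_i\le 2\,vol_i$, and you conclude with Proposition~\ref{prop:opt_lower_bound} plus $\text{TEL}(\text{OPT})\ge\sum_j o_j$; the corollary is then just $1+2/(1-\tfrac12)=5$. Drop the Step 2 claim that current occupancy $U(t)$ exceeds $(1-\alpha)M$ during blocking, since under the forward-looking admission check it need not hold, which is exactly why only the peak-sum certificate with its factor $2$ is valid.
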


\begin{proof}[Proof Sketch]
The technical obstacle is that a local description of the current batch does not directly yield a latency bound. The key idea is to turn every blocked step into a certificate of volume processing. Fix request $j$, and let $A(t)$ be the active set at a step while $j$ is waiting. If $\sum_{i\in A(t)}p_i \le (1-\alpha) M$, then, since $p_j \le \alpha M$, the system could safely insert $j$; hence $j$'s being blocked certifies $\sum_{i\in A(t)}p_i > (1-\alpha)M$. Summing this certificate over the $W_j$ waiting steps gives $(1-\alpha)M\cdot W_j < \sum_{t<W_j}\sum_{i\in A(t)}p_i.$
SVF is now essential: while $j$ waits, every active request has smaller volume, so it lies in the SVF prefix $P_j$. After swapping the time and request sums, each predecessor $i\in P_j$ can contribute for at most $o_i$ steps, and $p_i o_i < 2\,\mathrm{vol}_i$. Thus $W_j < \frac{2}{(1-\alpha)M}\sum_{i\in P_j}\mathrm{vol}_i.$
This is the decisive alignment: the algorithmic waiting time is charged to exactly the same prefix-volume sums that lower-bound OPT in Proposition~\ref{prop:opt_lower_bound}. Summing over all $j$, SVF's queuing latency is at most $\frac{2}{(1-\alpha)}\,\mathrm{TEL}(\mathrm{OPT})$. The remaining $\sum_j o_j$ decoding time is unavoidable for any schedule, giving one more $\mathrm{TEL}(\mathrm{OPT})$. Therefore $\mathrm{TEL}(\mathrm{SVF}) \le (1+\frac{2}{1-\alpha})\,\mathrm{TEL}(\mathrm{OPT})$, yielding the clean $1+\frac{2}{1-\alpha}$ form.
The complete proof is deferred to Appendix \ref{sec:appendix_proof_svf_cr}.
\end{proof}

% \textbf{Remark.} The theorem establishes a worst-case guarantee: regardless of how extreme or adversarial the traffic bursts and task length distributions may be, our algorithm's total latency will never exceed five times that of the omniscient offline optimal.
% Achieving $CR\le 5$ requires a fundamental departure from existing proof architectures. Classical analyses rely on local combinatorial reductions, such as time-window discretizations and length bucketing~\cite{jaillet2025online}. Alternatively, the recent approach with $CR\le 48$ attempts to relax the overall policy into strictly constrained proxy policies~\cite{wang2025llm}. Both paradigms inherently accumulate massive analytical slack. Instead of following these paths, our proof introduces a novel macroscopic certificate: every blocked step directly implies an active peak-memory mass strictly greater than $M/2$. Upon spatio-temporal integration, this translates into a prefix-volume charge. Because SVF and the OPT lower bound share the identical volume-centric ordering, the algorithmic cost perfectly mirrors the theoretical baseline. This exact structural alignment completely bypasses intermediary relaxations and discretization losses, directly yielding the clean $4+1$ form.
\begin{remark}
The theorem certifies a worst-case guarantee: no matter how adversarial the traffic burst or task-length distribution, SVF's total latency never exceeds $1+\frac{2}{1-\alpha}$ times that of the omniscient offline optimum. Here $\alpha$ is a physical property of the workload---the largest single-request memory fraction, so $1/\alpha$ is the system's concurrency headroom---rather than a tunable constant. The limit $\alpha\to1$, where the bound diverges, is the degenerate regime in which memory holds essentially one request at a time and continuous batching, hence scheduling itself, ceases to exist; the divergence faithfully reflects the vanishing of all scheduling freedom rather than any weakness of SVF. Production serving instead operates at small $\alpha$, where the bound stays close to $3$ (Table~\ref{tab:alpha-cr}).

Attaining this bound requires a departure from existing proof architectures. Classical analyses rely on local combinatorial reductions such as time-window discretization and length bucketing~\cite{jaillet2025online}, while the prior $48$-bound relaxes the policy into an offline, batch-partitioning procedure~\cite{wang2025llm}; both inherently accumulate analytical slack. Our proof instead introduces a macroscopic certificate: every blocked step implies an active peak-memory mass exceeding $(1-\alpha)M$, which upon spatio-temporal integration charges precisely the prefix-volume sums that lower-bound OPT. Because SVF and the OPT lower bound share the identical volume-centric ordering, the algorithmic cost mirrors the theoretical baseline, bypassing intermediary relaxations and directly yielding the clean $1+\frac{2}{1-\alpha}$ form.
\end{remark}

\begin{table}[t]
  \centering
  \caption{Worst-case competitive ratio $1+\frac{2}{1-\alpha}$ as a
  function of the peak per-request memory fraction
  $\alpha = \max_i p_i / M$. Lower $\alpha$ means higher concurrency;
  production serving operates in the small-$\alpha$ regime, where the
  bound stays near $3$.}
  \label{tab:alpha-cr}
  \begin{tabular}{cccc}
    \toprule
    $\alpha = \max_i p_i / M$ & Min.\ concurrency $\lfloor 1/\alpha \rfloor$
      & $\mathrm{CR} \le 1 + \dfrac{2}{1-\alpha}$ \\
    \midrule
    $1/2$        & $2$        & $5.00$ \\
    $1/4$        & $4$        & $\textbf{3.67}$ \\
    $1/10$       & $10$       & $\textbf{3.22}$ \\
    $1/20$       & $20$       & $\textbf{3.11}$ \\
    $1/100$      & $100$      & $\textbf{3.02}$ \\
    $\to 0$      & $\to \infty$ & $\to\textbf{ 3}$ \\
    \bottomrule
  \end{tabular}
\end{table}

Then, we further demonstrate the resilience of our geometric paradigm by establishing a deterministic bound even with merely a single bit of information. By utilizing a minimax geometric mean proxy, 1-bit SVF bounds the competitive ratio to $\mathcal{O}(T)$, where $T$ elegantly corresponds to the bounded system hyperparameter of maximum model length.

\begin{theorem}[Worst-case Bound of 1-Bit SVF]
\label{thm:svf_1bit_burst}
Under the burst arrival model with $s+o\le T$, 1-bit SVF algorithm configures a binary classification threshold $\theta = \sqrt{T}$ and assigns proxy lengths $O_0 = T^{1/4}$ for short requests ($o \le \theta$) and $O_1 = T^{3/4}$ for long requests ($o > \theta$), achieving a competitive ratio bounded by $\text{CR} =\frac{ \text{TEL}(\text{1-bit SVF})}{ \text{TEL}(\text{OPT})}= \mathcal{O}(T)$.
\end{theorem}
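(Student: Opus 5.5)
We will reuse the volume-certificate argument behind Theorem~\ref{thm:svf-burst} and pay a multiplicative distortion for working with proxy volumes instead of true ones. The OPT side needs no new work: Proposition~\ref{prop:opt_lower_bound} gives $\text{TEL}(\text{OPT}) \ge \frac{1}{M}\sum_j\sum_{i\le j} vol_i$, and also $\text{TEL}(\text{OPT}) \ge \sum_j o_j$. The task is to bound the waiting time $W_j$ of each request under 1-bit SVF by prefix volumes of the true SVF order, up to an $\mathcal{O}(T)$ factor.

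\textbf{Step 1: certificate in terms of a predecessor set.} Fix $j$. The blocked-step argument of Theorem~\ref{thm:svf-burst} is policy-independent: while $j$ waits, the active peak mass is at least about $M - T$. Integrating over time and swapping sums gives
\[
W_j \lesssim \frac{1}{M}\sum_{i\in \hat P_j} p_i o_i \le \frac{2}{M}\sum_{i\in\hat P_j} vol_i ,
\]
where $\hat P_j$ is the set of requests admitted before $j$ under proxy order, i.e.\ $\hat v_i \le \hat v_j$ up to ties. If $M$ is comparable to $T$, we instead fall back on $W_j \le \sum_{i\in\hat P_j} o_i$ (at least one request is always running), and the argument below is the same.

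\textbf{Step 2: relate $\hat P_j$ to the true order via proxy distortion.} For class $m$ with $o \le \theta=\sqrt T$ we have $O_0 = T^{1/4}$ and $o/O_0 \in [T^{-1/4}, T^{1/4}]$. For the long class, $o\in(\sqrt T, T]$ and $O_1 = T^{3/4}$, so $o/O_1 \in [T^{-1/4}, T^{1/4}]$. This is exactly why geometric means were chosen as proxies. Since $vol \asymp s\,o + o^2$ is monotone in $o$ and at most quadratic in it, every request satisfies $\hat v_i \in [T^{-1/2}, T^{1/2}]\cdot vol_i$ up to constants. It follows that if $\hat v_i \le \hat v_j$, then $vol_i \le c\,T\, vol_j$.

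\textbf{Step 3: charge to OPT.} Split $\hat P_j$ into requests with $vol_i \le vol_j$, which lie in the true prefix $P_j$, and the "inverted" requests with $vol_j < vol_i \le cT\,vol_j$. For the first group, summing over $j$ reproduces the Proposition~\ref{prop:opt_lower_bound} expression exactly. For the inverted group, note that each inverted $i$ satisfies $vol_j \ge vol_i/(cT)$. We exchange the order of summation and charge each inverted pair $(i,j)$ to $i$'s own prefix term, which contains $vol_j$ as a predecessor. This bounds the inverted contribution by $cT\sum_i\sum_{j\le i} vol_j$. Adding the decoding term $\sum_j o_j \le \text{TEL}(\text{OPT})$ then yields $\text{CR}=\mathcal{O}(T)$.

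The main obstacle is Step 3's inverted-pair charge. The bound $vol_i \le cT\,vol_j$ controls the size of a single inverted predecessor, but not how many there are. The naive charge, replacing $vol_i$ by $cT\,vol_j$, sums over pairs and can produce an extra factor of $n$. The first thing to try is a symmetric counting argument: for each pair $i<j$ in true order, at most one of the two is charged, and the pair's cost $\min(vol_i,vol_j)\cdot cT$ is absorbed by the term $vol_{\min}$ that appears once in the OPT double sum for the later index. If that bookkeeping is too loose, fall back to the cruder counting bound: each request's volume is between $1$ and $T^2$, so the proxy class sizes give the $\mathcal{O}(T)$ factor directly.
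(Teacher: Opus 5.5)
Your Steps 1--3 are essentially the paper's proof: you apply the blocked-step certificate of Lemma~\ref{lemma:svf_wait_time} to the proxy-order prefix $\hat P_j$, use the geometric-mean distortion $T^{-1/2}\hat v_i \le vol_i \le T^{1/2}\hat v_i$ to get $vol_i \le T\,vol_j$ whenever $\hat v_i \le \hat v_j$, and split into correctly ordered and inverted pairs, charging each inverted pair $(i,j)$ via $vol_i\le T\,vol_j$ to the term $vol_j$ in $i$'s true prefix. The \emph{main obstacle} you raise afterwards is not real: an inverted pair has $vol_j<vol_i$, so it already appears among the pairs of $\sum_i\sum_{j:\,vol_j<vol_i} vol_j$, which counts each $vol_j$ once per larger request, so the naive charge loses no factor of $n$ and your fallbacks are unnecessary (the volume-range fallback would by itself give only a $T^2$ factor, and the $M\approx T$ fallback $W_j\le\sum_{i\in\hat P_j}o_i$ cannot be compared with the $\frac{1}{M}$-scaled volume bound without losing a further factor of up to $M$, whereas the paper simply keeps the factor $\frac{2}{1-\alpha}$ with $\alpha$ treated as a constant below $1$).
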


\begin{remark}
Since $T$ is a static model configuration, this formally guarantees that no matter how massive the burst workload becomes ($N \to \infty$), the worst-case degradation is permanently capped by a system constant. Detailed algebraic proofs are provided in Appendix \ref{sec:appendix_proof_1bit}.
\end{remark}

% =============================================================================
\subsection{Stochastic Bounds under Poisson Arrivals}
To evaluate the steady-state performance of our geometry-aware algorithms, we analyze the system under a Poisson arrival process with rate $\lambda$. \cite{xiang2025servegen} demonstrates that the output generation lengths follow an exponential distribution. To faithfully capture the discrete, token-by-token mechanics, we adopt the discrete analog of the exponential distribution. Consequently, we model the discrete output length as a Geometric distribution $o_i \sim \text{Geo}(\mu)$, yielding an expected length $\mathbb{E}[o] = 1/\mu$.

% Similar to the burst scenario, we assume the peak memory footprint of any request satisfies $p_i \le M/2$. 
We define the worst-case system utilization as $\rho = \frac{\lambda E[vol]}{(1-\alpha)M/2}$. A stable system requires $\rho \in [0, 1)$.

\begin{theorem}[Stochastic Bound of SVF]
\label{thm:svf_poisson}
Under the Poisson arrival process with rate $\lambda$ and the peak memory constraint $s+o \le \alpha M$, SVF achieves an expected competitive ratio bounded by:
\begin{equation}
    \mathbb{E}[\text{CR}_{\text{SVF}}] \le 1 + \frac{2}{(1-\alpha)(1-\mu)(1-\rho)}.
\end{equation}
\end{theorem}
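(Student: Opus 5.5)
The plan is to mirror the volume-certificate argument of Theorem~\ref{thm:svf-burst}, now in expectation and in steady state. I will decompose each request's latency into waiting time $W_j$ plus decoding time $o_j$. The decoding part is paid by every schedule, so it contributes the additive $1$ after dividing by $\mathbb{E}[\text{TEL}(\text{OPT})]$.

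\textbf{Step 1: per-step certificate.} As in the burst proof, whenever request $j$ is waiting in $Q_{wait}$, SVF's admission rule forces $\sum_{i\in A(t)} p_i > (1-\alpha)M$. Otherwise $j$, with $p_j\le\alpha M$, could have been inserted.

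\textbf{Step 2: charge the blocking mass.} Under online arrivals the active set may contain requests admitted before $j$ arrived, possibly with larger volume. So the blocking mass splits into two parts:
\begin{itemize}
\item a part that is \emph{residual work present at $a_j$};
\item a part from requests of smaller volume admitted while $j$ waits, including those arriving after $a_j$, which SVF prioritizes.
\end{itemize}
Integrating the certificate over $j$'s waiting window and using $p_i o_i < 2\,vol_i$, each step of blocking must be paid by at least $(1-\alpha)M/2$ units of volume processed. The system therefore behaves like a work-conserving single server with service rate $(1-\alpha)M/2$ in volume units. Its load is exactly $\rho=\lambda\mathbb{E}[vol]/((1-\alpha)M/2)$.

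\textbf{Step 3: priority-queue formula.} I would apply a Cobham/M/G/1 non-preemptive priority argument to this volume server. The expected wait of a request is bounded by the expected residual work in system divided by $1-\rho$. The residual work, via a Pollaczek–Khinchine style mean-value computation, is of order $\lambda\mathbb{E}[vol\cdot o]$-type terms over the service rate.

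\textbf{Step 4: compare with OPT.} Finally I would compare $\mathbb{E}[W]$ with a lower bound on $\mathbb{E}[\text{TEL}(\text{OPT})]$ per request. At minimum this is $\mathbb{E}[o]=1/\mu$, combined with the volume-prefix bound of Proposition~\ref{prop:opt_lower_bound} applied to the requests co-present in the system.

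The Geometric output law enters through memorylessness. The remaining length of an active request is again $\text{Geo}(\mu)$, so the number of steps a predecessor occupies memory is controlled by a factor $1/(1-\mu)$. This comes from the discrete tail sum $\sum_{k\ge0}(1-\mu)^k$, or equivalently from the ratio $\mathbb{E}[o^2]/\mathbb{E}[o]$ for the geometric law. This should produce the $\frac{2}{(1-\alpha)(1-\mu)(1-\rho)}$ term.

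The main obstacle is Step~2/3. SVF is not work-conserving in volume, because memory fragmentation means up to $\alpha M$ may be idle. Moreover, larger-volume requests admitted earlier can block $j$, which breaks the clean prefix structure used in the burst case. I would handle the first issue exactly through the $(1-\alpha)$ certificate. For the second, I would bound the work of earlier-admitted jobs by their residual volume, using memorylessness so that its expectation matches a fresh job's contribution. If the expected ratio cannot be bounded directly, I would instead bound the ratio of expectations and present that as $\mathbb{E}[\text{CR}]$.
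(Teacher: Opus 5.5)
Your upper-bound skeleton matches the paper's: an effective volume rate $(1-\alpha)M/2$ whenever the queue is non-empty, a non-preemptive (Cobham-type) priority recursion over volume classes, and a residual term for jobs already running. You derive the rate from the burst certificate plus $p_io_i<2\,vol_i$. That is a legitimate sample-path alternative to the paper's surviving-set averaging lemma, provided you also use $p_ir_i\le 2\cdot(\text{residual volume})$ for jobs running at $a_j$ with $r_i$ steps left.

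Step~3, however, is false as a per-request statement. The recursion gives $W(v)=\frac{D(1)W_0}{D(v)D(v+1)}$ with $D(v)=\frac{(1-\alpha)M}{2}-\sum_{x<v}\lambda_x x$. For the largest volume classes this exceeds $W_0/\bigl(\frac{(1-\alpha)M}{2}(1-\rho)\bigr)$. Only the average obeys that bound, and reaching it takes two steps you omit:
\begin{enumerate}
\item the telescoping (conservation) identity $\lambda\,\mathbb{E}[vol\cdot W_q]\le W_0\,\lambda\mathbb{E}[vol]/\bigl(\frac{(1-\alpha)M}{2}-\lambda\mathbb{E}[vol]\bigr)$;
\item comonotonicity of volume and wait under SVF (Harris/Chebyshev), giving $\mathbb{E}[vol\cdot W_q]\ge\mathbb{E}[vol]\,\mathbb{E}[W_q]$.
\end{enumerate}
The second step is exactly where the smallest-volume order is used. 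Separately, memorylessness does not make a running job's residual match a fresh job's: after $k$ tokens its mean is $(s+k)/\mu+1/\mu^2$. The paper instead aggregates via $\mathbb{E}[M(t)]=\lambda\mathbb{E}[vol]$ and $\mathbb{E}[N(t)]=\lambda/\mu$, which gives $W_0=\lambda\mathbb{E}[vol]/\mu+\lambda/\mu^3$.

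The genuine gap is Step~4. Proposition~\ref{prop:opt_lower_bound} is a burst statement, and applying it to ``requests co-present in the system'' does not yield a steady-state lower bound on per-request OPT latency. Your account of the $1/(1-\mu)$ factor is also arithmetically wrong: $\sum_{k\ge0}(1-\mu)^k=1/\mu$ and $\mathbb{E}[o^2]/\mathbb{E}[o]=(2-\mu)/\mu$.

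In the paper, the factor comes from the OPT side. An area (backlog) argument gives $\mathbb{E}[\text{OPT}]\ge W_{exact}/M$ with $W_{exact}=\lambda\,\mathbb{E}\bigl[s\frac{o^2-o}{2}+\frac{o^3-o}{3}\bigr]$. Since $\mathbb{E}[o^2-o]=2(1-\mu)/\mu^2$ and $\mathbb{E}[o^3-o]=6(1-\mu)/\mu^3$, this equals $(1-\mu)W_0$. Dividing the SVF waiting bound by it yields $\frac{1}{1-\mu}\cdot\frac{2}{(1-\alpha)(1-\rho)}$.

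You could close your own route without that lemma. Once you have the averaged bound $\mathbb{E}[W_q]\le W_0/\bigl(\frac{(1-\alpha)M}{2}(1-\rho)\bigr)$, use only $\mathbb{E}[\text{OPT}]\ge 1/\mu$ together with $\mathbb{E}[vol]=\mathbb{E}[s]/\mu+1/\mu^2\ge 1/\mu^2$. This gives $\mu W_0=\lambda\mathbb{E}[vol]+\lambda/\mu^2\le 2\lambda\mathbb{E}[vol]$, hence $\mathbb{E}[\text{SVF}]/\mathbb{E}[\text{OPT}]\le 1+\mu\,\mathbb{E}[W_q]\le 1+\frac{2\rho}{1-\rho}$. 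That already implies the stated bound. As written, though, your Step~4 never produces an explicit constant, and the mechanism you propose for $1/(1-\mu)$ would not give it.
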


\begin{proof}[Proof Sketch]
The proof relies on macroscopic volume conservation. By plotting the cumulative arrival volume against the cumulative processed volume, the area between these curves represents the total volumetric backlog. We establish that SVF maintains an expected processing rate $\bar{r} > (1-\alpha)M/2$ whenever the queue is non-empty. Leveraging the memoryless property of the Geometric distribution, we quantify the expected non-preemptive residual volume $\mathbb{E}[U_{run}]$. Applying the Harris inequality separates the correlated volume and waiting time, allowing us to align the upper bound of SVF's queuing delay with the area-based lower bound of OPT.
\end{proof}

\begin{theorem}[Stochastic Bound of 1-Bit SVF]
\label{thm:svf_1bit_poisson}
For the 1-bit SVF algorithm configured with a threshold $\theta$ and proxy lengths assigned as their conditional expectations—specifically, $O_0 = \frac{1}{\mu} - \frac{\theta(1-\mu)^\theta}{1 - (1-\mu)^\theta}$ for short requests ($o \le \theta$) and $O_1 = \theta + \frac{1}{\mu}$ for long requests ($o > \theta$)—the expected competitive ratio is bounded by:
\begin{equation}
    \mathbb{E}[\text{CR}_{\text{1-bit}}] \le 1 + \left( \frac{\mathbb{E}[vol]}{\mathbb{E}[vol] - \epsilon} \right) \frac{2}{(1-\alpha)(1-\mu)(1-\rho)}
\end{equation}
where the volume distortion penalty is $\epsilon = \frac{1}{2} \left( \frac{\theta (1-\mu)^\theta}{1 - (1-\mu)^\theta} \right)^2$.
\end{theorem}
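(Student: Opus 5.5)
The plan is to reuse the stochastic machinery behind Theorem~\ref{thm:svf_poisson} and isolate one new ingredient: 1-bit SVF orders requests by proxy volumes $\hat v_i$ instead of true volumes $v_i$. I would first check that the proxy ordering does not break the volume-rate certificate. The blocking argument does not depend on the priority rule. If request $j$ is waiting and the active peak mass satisfies $\sum_{i\in A(t)} p_i \le (1-\alpha)M$, then $j$ could be inserted, because $p_j\le\alpha M$. Hence, whenever the queue is non-empty, 1-bit SVF still keeps an expected processing rate $\bar r>(1-\alpha)M/2$. The memoryless residual-volume term $\mathbb{E}[U_{run}]$ is also unchanged, since it depends only on the Geometric law of $o_i$ and not on the order of admission. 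This reproduces the factor $\frac{2}{(1-\alpha)(1-\mu)(1-\rho)}$ verbatim.

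The second step compares the queuing delay under proxy ordering with the delay under true-volume ordering. I will write $v_i=\hat v_i+(v_i-\hat v_i)$ and control the misordering through the error between $o$ and its class proxy. Long requests are handled cleanly: conditioned on $o>\theta$, $o-\theta\sim\mathrm{Geo}(\mu)$, so $O_1=\theta+1/\mu$ matches in conditional mean. Short requests are where the distortion enters. The conditional mean $O_0$ is $1/\mu$ minus the truncation shift $\delta:=\frac{\theta(1-\mu)^\theta}{1-(1-\mu)^\theta}$. Because $v$ is quadratic in $o$, $\mathbb{E}[\hat v]$ differs from $\mathbb{E}[v]$ by a Jensen-type term. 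I expect to bound that gap by $\frac12\delta^2=\epsilon$, using $\frac12(O_m^2+O_m)$ against $\frac12\mathbb{E}[o^2+o]$ and the linearity of the $s\cdot O_m$ part. This should give $\mathbb{E}[\hat v]\ge\mathbb{E}[v]-\epsilon$.

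The third step converts this into the multiplicative penalty $\frac{\mathbb{E}[vol]}{\mathbb{E}[vol]-\epsilon}$. In the SVF proof, the queuing delay is charged to the volumetric backlog and aligned with the area lower bound of OPT through the Harris inequality. Under 1-bit SVF, the charged predecessors form a prefix in $\hat v$-order. Their true volume is therefore bounded by their proxy volume scaled by the ratio of mean true volume to mean proxy volume. Applying Harris again to decouple this ratio from waiting times yields the prefactor. Adding the unavoidable decoding term $\sum_j o_j\le\mathrm{TEL}(\mathrm{OPT})$ contributes the leading $1$.

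The main obstacle is the third step. Charging is done in proxy order while OPT's lower bound is in true-volume order, and I need a clean inequality that turns an additive per-request distortion into a multiplicative ratio of expectations without losing the $(1-\rho)$ structure. I would first try a coupling: exchanging any two requests whose true and proxy orders disagree increases the prefix-volume sum by at most their distortion. I would then take expectations under the stationary Poisson measure, and use the Harris inequality to justify factoring the expectations, since both the mismatch and the waiting time are monotone in volume.
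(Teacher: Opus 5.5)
Your proposal has a genuine gap: the Step 2 inequality $\mathbb{E}[\hat v]\ge\mathbb{E}[vol]-\epsilon$ is false, and Step 3 depends on it.

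Each proxy is a conditional mean, $O_m=\mathbb{E}[o\mid m]$. So the per-class distortion is $\Delta_m=\mathbb{E}[\tfrac{o^2+o}{2}\mid m]-\tfrac{O_m^2+O_m}{2}=\tfrac12\mathrm{Var}(o\mid m)$. Hence $\mathbb{E}[vol]-\mathbb{E}[\hat v]=p_1\Delta_1+p_2\Delta_2$, with $p_1=1-(1-\mu)^\theta$ and $p_2=(1-\mu)^\theta$.

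Matching the conditional mean does not make the long class clean. By memorylessness, $\Delta_2=\tfrac{1-\mu}{2\mu^2}$, which is the larger of the two distortions. In fact $p_1\Delta_1+p_2\Delta_2=\epsilon+\Delta_1$. This exceeds $\epsilon$ as soon as $\theta\ge 2$: for $\mu=\tfrac12$, $\theta=3$ it equals $\tfrac{5}{14}$, against $\epsilon=\tfrac{9}{98}$. As $\theta\to\infty$ it tends to $\tfrac{1-\mu}{2\mu^2}$, while $\epsilon\to 0$.

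So the ratio-of-means prefactor your Step 3 aims for, $\mathbb{E}[vol]/\mathbb{E}[\hat v]$, actually equals $\mathbb{E}[vol]/(\mathbb{E}[vol]-\epsilon-\Delta_1)$. That is strictly weaker than the statement and has no exponential decay. Step 3 also assumes that prefix true volume scales uniformly by this ratio. That is not a valid inequality, because $(\hat v_k+\Delta_{m(k)})/\hat v_k$ varies across classes. The identity $\epsilon=\tfrac12\delta^2$ is an algebraic coincidence, not the size of a Jensen gap.

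The missing idea is that only the spread of the distortion across classes costs anything; a uniform floor is free. The paper proceeds in three steps.

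\emph{Recursion over proxy classes.} The paper redoes the priority recursion over proxy classes, weighting class $k$ by its conditional true mean volume $v_k$. With $D=\tfrac{(1-\alpha)M}{2}$, the telescoping identity $\sum_k\frac{R_k-R_{k-1}}{(D-R_{k-1})(D-R_k)}=\frac{1}{D-R_K}-\frac{1}{D}$ holds for any ordering of the $v_k$. It gives $\mathbb{E}[vol\cdot W_q]\le W_0\,\mathbb{E}[vol]/(D-\lambda\mathbb{E}[vol])$ with no misordering issue. This recursion, not Step 1 alone, is what produces the $(1-\rho)$ factor. Moreover, the Poisson bound $\mathbb{E}[\mathrm{OPT}]\ge\max\{1/\mu,W_{exact}/M\}$ involves no true-volume prefix sums. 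Your exchange/coupling argument therefore targets an obstacle that exists only in the burst proof.

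\emph{Splitting the queuing volume.} The paper writes $\mathbb{E}[vol\cdot W_q]=\mathbb{E}[\hat v\,W_q]+\mathbb{E}[\Delta\,W_q]$. It applies Chebyshev to the first term, which is valid because priority is by $\hat v$. For the second term it does not use Harris. It keeps only the floor $\mathbb{E}[\Delta\,W_q]\ge\Delta_{\min}\mathbb{E}[W_q]$. The reason is that $\Delta$ depends only on the class bit while $\hat v$ also depends on $s$. So $\Delta$ is not monotone in the priority, and your co-monotonicity claim fails.

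\emph{Resulting coefficient.} The coefficient of $\mathbb{E}[W_q]$ becomes $\mathbb{E}[\hat v]+\Delta_{\min}=\mathbb{E}[vol]-\epsilon$. Here $\epsilon=\mathbb{E}[\Delta]-\Delta_{\min}=p_2(\Delta_2-\Delta_1)=\tfrac12\bigl(\tfrac{p_2\theta}{p_1}\bigr)^2$. This uses $\Delta_1<\Delta_2$, which follows from $\mathbb{E}[\Delta]=\Delta_2-\tfrac{p_2\theta^2}{2p_1}$.
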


% \textbf{Remark.} Together, Theorem \ref{thm:svf_poisson} and Theorem \ref{thm:svf_1bit_poisson} theoretically validate the robustness and efficiency of our geometric scheduling approach under stochastic steady-state conditions. Theorem \ref{thm:svf_poisson} first establishes the fundamental expected efficiency capacity limit of the full-information SVF baseline. Building upon this foundation, Theorem \ref{thm:svf_1bit_poisson} reveals a striking comparative phenomenon: the performance gap between the minimalist 1-bit variant and the full-oracle SVF is entirely governed by the volumetric distortion multiplier $\frac{\mathbb{E}[vol]}{\mathbb{E}[vol] - \epsilon}$. Crucially, our derivation proves that the classification penalty $\epsilon$ decays \textbf{exponentially} with respect to the threshold $\theta$. This mathematically answers why our lightweight paradigm is so highly effective in practice: despite operating with drastically less predictive information, the exponential decay rapidly forces $\epsilon \to 0$, enabling the 1-bit SVF to achieve a nearly identical expected performance bound to the full-information baseline. Complete rigorous proofs are deferred to Appendix \ref{sec:appendix_proof_poisson}.

\begin{remark}
Theorems \ref{thm:svf_poisson} and \ref{thm:svf_1bit_poisson} establish a steady-state performance guarantee, ensuring that the expected system latency remains bounded under continuous Poisson arrival processes. This formally characterizes the long-term stability of the geometry-aware paradigm. Crucially, the comparison reveals that the theoretical degradation introduced by the minimalist 1-bit variant is entirely governed by the volumetric distortion penalty $\epsilon$. As demonstrated in our derivation, this penalty decays \textbf{exponentially} with respect to the threshold $\theta$. While these upper bounds serve as theoretical limits rather than exact performance metrics, this exponential decay provides a profound mathematical guarantee: aggressively compressing the predictive information into a single bit does not compromise the fundamental stability of the system. %The theoretical safety net of the 1-bit paradigm tightly tracks the full-information algorithm, rigorously justifying its high efficacy and robustness. 
Complete proofs are deferred to Appendix \ref{sec:appendix_proof_poisson}.
\end{remark}

\section{Experiments}

\begin{table*}[t]
\centering
\caption{Performance comparison under burst-arrival scenario across different models and benchmarks. Avg/P95 Latency in s/tok; Throughput in tok/s. Best non-oracle result per column is \textbf{bold}.}
\label{tab:burst_main_expanded}
\small
\resizebox{\textwidth}{!}{%
\begin{tabular}{l rrr|rrr | rrr|rrr}
\toprule
\multirow{3}{*}{\textbf{Algorithm}} & \multicolumn{6}{c|}{\textbf{Llama-3.1-8B-Instruct}} & \multicolumn{6}{c}{\textbf{Llama-3.1-70B-Instruct}} \\
\cmidrule(lr){2-7} \cmidrule(lr){8-13}
& \multicolumn{3}{c|}{\textbf{LMSYS}} & \multicolumn{3}{c|}{\textbf{LongBench}} & \multicolumn{3}{c|}{\textbf{LMSYS}} & \multicolumn{3}{c}{\textbf{LongBench}} \\
\cmidrule(lr){2-4} \cmidrule(lr){5-7} \cmidrule(lr){8-10} \cmidrule(lr){11-13}
& Avg Lat. & P95 Lat. & Thpt. & Avg Lat. & P95 Lat. & Thpt. & Avg Lat. & P95 Lat. & Thpt. & Avg Lat. & P95 Lat. & Thpt. \\
\midrule
FCFS & 2.4592 & 10.2364 & 3338.1 & 125.8883 & 393.1413 & 23.9 & 3.0666 & 14.0373 & 2467.9 & 226.2472 & 721.3823 & 10.9 \\
SJF & 1.0087 & 4.8632 & 3229.9 & 106.1588 & 300.2542 & 25.9 & 1.8275 & 11.4283 & 2444.3 & 173.9461 & 442.1720 & 11.9 \\
1-bit SVF & 1.5614 & 6.2247 & \textbf{3361.8} & 72.5786 & 249.2350 & \textbf{28.7} & 1.9850 & \textbf{6.8657} & \textbf{2492.1} & 141.4835 & 524.7749 & 12.4 \\
SVF & \textbf{0.8901} & \textbf{3.7570} & 3239.4 & \textbf{70.0169} & \textbf{215.1756} & 28.2 & \textbf{1.6090} & 9.0097 & 2360.6 & \textbf{140.4193} & \textbf{420.2404} & \textbf{12.5} \\
\midrule
Oracle SJF & 0.1181 & 0.1497 & -- & 74.3010 & 134.4113 & -- & 0.2045 & 0.2886 & -- & 144.3262 & 242.7121 & -- \\
Oracle SVF & 0.1103 & 0.1950 & -- & 52.5263 & 128.7662 & -- & 0.1792 & 0.3348 & -- & 108.4169 & 251.4195 & -- \\
\bottomrule
\end{tabular}%
}
\end{table*}

\begin{figure}[t] % TODO：感觉后续两个字图又可以合并，因为可以共享图例把图例变大点
    \centering
    \begin{subfigure}[b]{0.49\linewidth}
        \centering
        \includegraphics[width=\linewidth]{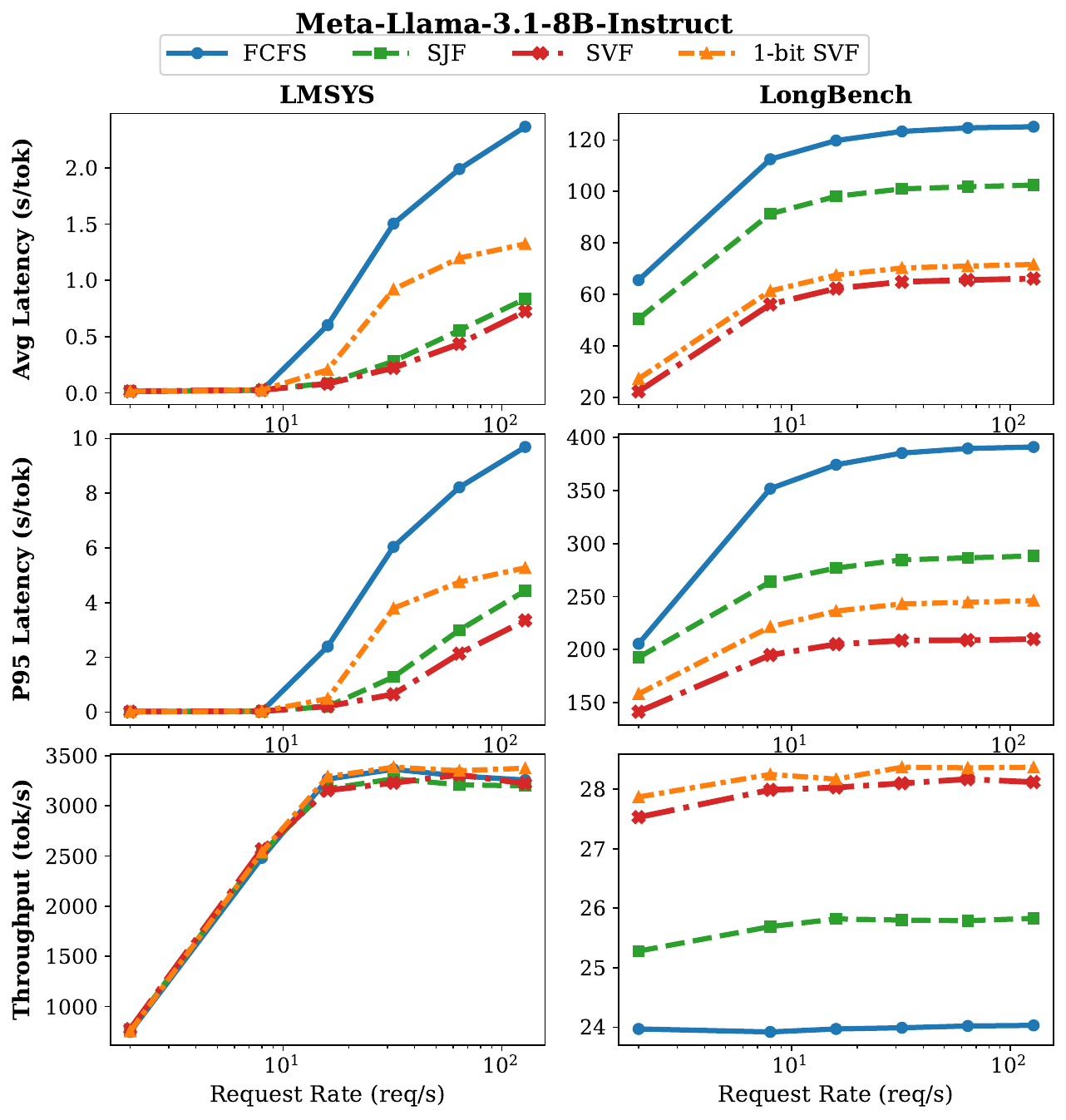}
        \caption{Llama-3.1-8B-Instruct}
        \label{fig:subfig_a} % 用于引用第一个子图的标签
    \end{subfigure}
    \begin{subfigure}[b]{0.49\linewidth}
        \centering
        \includegraphics[width=\linewidth]{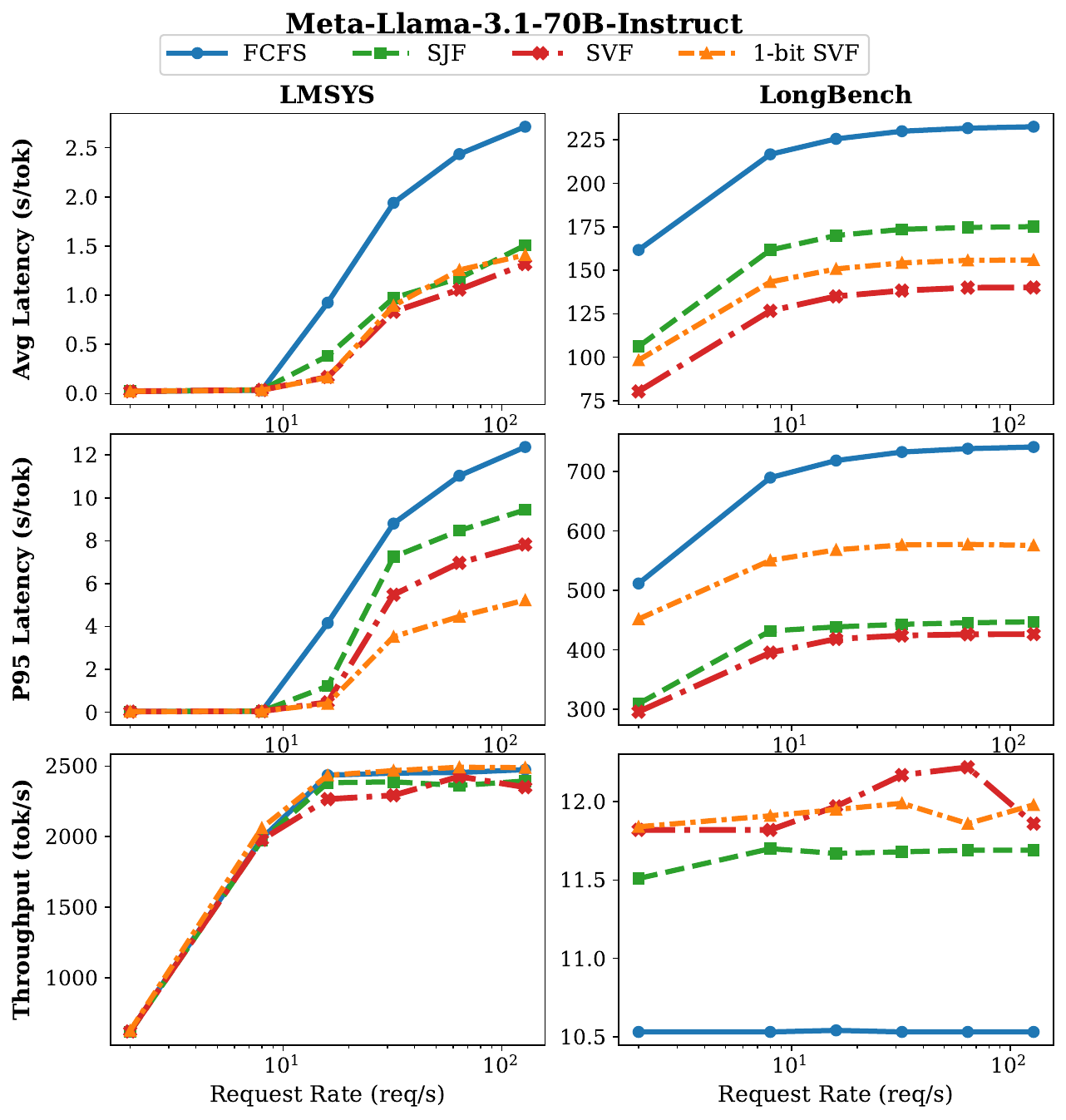}
        \caption{Llama-3.1-70B-Instruct}
        \label{fig:subfig_b} % 用于引用第二个子图的标签
    \end{subfigure}
    % --- 主标题 ---
    \caption{Practical Performance under Poisson Arrivals}
    \label{fig:practical_performance}
\end{figure}

\subsection{Evaluation Setup}

\textbf{Testbed.} 
All experiments are conducted on a single high-performance compute node equipped with 8 NVIDIA A100 (80GB) GPUs. The underlying continuous batching engine is built upon vLLM.

\textbf{Serving Models.} 
We evaluate our scheduling algorithms using two open-source models: Meta-Llama-3.1-8B-Instruct and Meta-Llama-3.1-70B-Instruct~\cite{grattafiori2024llama}. The 8B model is deployed on a single GPU, whereas the 70B model is served across all 8 GPUs utilizing tensor parallelism~\cite{shoeybi2019megatron}. Both models operate in FP16/BF16 precision, with the context window configured to 65,536 tokens and the maximum generation length capped at 4,096 tokens per request.

\textbf{Workloads.} %这里得简单 但又不深入的提起选取benchmark的motivation
We simulate real-world heterogeneous traffic using two distinct datasets: \textit{LMSYS-Chat}~\cite{zheng2023lmsyschat1m}, which represents the high-concurrency, interactive traffic common in chat applications, and \textit{LongBench}~\cite{bai2023longbench}, which introduces the memory-intensive jobs that often challenge serving systems. For each dataset, we randomly sample 2,000 requests exclusively for serving evaluation, reserving the non-overlapping remainder strictly for training the length predictors.% The random seed is fixed at 42 to ensure exact reproducibility. 

\textbf{Arrival Patterns.} 
To comprehensively assess scheduling robustness, we evaluate two distinct arrival models. We first employ an \textbf{Adversarial Burst} scenario where all requests arrive simultaneously at $t=0$, designed to probe the system's peak scheduling capacity and head-of-line blocking resilience. We also simulate a \textbf{Stochastic Poisson} process with query-per-second rates QPS $\in \{2, 8, 16, 32, 64, 128\}$, to analyze steady-state performance under various levels of system load.

\textbf{Predictor Setup.} %Predictor Training 作为实验setup的一个部分，就讲述借鉴了什么训练出来的predictor，由于我们的卖点是算法而不是predictor，我们并没有对此进行精细的调优。具体predictor的性能见附录。
Since our core contribution lies in the novel scheduling paradigm, we employ an \textit{off-the-shelf} architecture for our length predictors from \cite{qiu2024efficient} to ensure a fair evaluation focused squarely on our algorithmic innovations.
For regression prediction, we employ a \texttt{BERT-base} backbone~\cite{devlin2019bert}. In contrast, for our lightweight \textit{1-bit} variants, we deploy a much smaller \texttt{BERT-tiny}~\cite{DBLP:journals/corr/abs-1908-08962} classifier, thereby drastically minimizing the prediction inference overhead. 
%Detailed performance metrics for the predictors are deferred to Appendix \ref{sec:appendix_predictor}.
% TODO:有空补充下附录。那个“只是排序”的描述不能加……

% \textbf{Scheduling Baselines.} % TODO：有点顾虑baselines这一块会不会被说，需不需要防御性表达
% The evaluated baselines include:
% \begin{itemize}
%     \item \textbf{FCFS:} The native vLLM baseline, prioritizing requests strictly by their arrival time.
%     \item \textbf{SJF:} The traditional 1D heuristic, prioritizing requests by the exact output length predicted.
%     \item \textbf{Oracle-SJF \& Oracle-SVF:} Idealized schedulers providing a idealized upper bound. They leverage ground-truth output lengths for both optimal prioritization~\cite{tao2025prompt} and for setting each request's maximum generation length. This ensures a perfect match between pre-allocated resources and actual consumption, serving as an idealized performance ceiling.

% \end{itemize}
\textbf{Scheduling Baselines.} 
To strictly isolate our algorithmic contributions from composite system engineering, we evaluate our proposed paradigm against fundamental scheduling primitives. %While recent end-to-end systems introduce complex execution mechanisms (e.g., proactive swapping), 
For recent complex execution mechanisms (e.g., proactive swapping), our geometry-aware approach serves as a \textit{composable scheduling policy} that can be superimposed atop these optimizations, potentially yielding additional gains. Thus, we select \textbf{FCFS} as the native baseline and \textbf{SJF} as the classical 1D heuristic. 
To establish the absolute theoretical limits, we introduce \textbf{Oracle-SJF} and \textbf{Oracle-SVF} as idealized schedulers, leveraging ground-truth output lengths for both optimal prioritization~\cite{tao2025prompt} and for setting each request's maximum generation length. This ensures a match between pre-allocated resources and actual consumption, serving as an idealized performance ceiling.
%regardless of the underlying execution mechanisms they are coupled with.

% We evaluate our proposed algorithms against several representative baselines. \textbf{FCFS} serves as the native vLLM baseline, prioritizing requests strictly by their arrival time. \textbf{SJF} represents the traditional 1D time-centric heuristic, which prioritizes requests based on exact predicted output lengths. To establish the absolute theoretical limits, we introduce \textbf{Oracle-SJF} and \textbf{Oracle-SVF} as idealized schedulers. They leverage ground-truth output lengths for both optimal prioritization~\cite{tao2025prompt} and for setting each request's maximum generation length. This ensures a perfect match between pre-allocated resources and actual consumption, serving as an idealized performance ceiling.

\textbf{Evaluation Metrics.} % TODO：等实验结果出来后，这部分还要优化下表达
To eliminate the inherent bias introduced by widely varying generation lengths across the datasets, we focus on \textit{Per-token Latency}, defined as the end-to-end latency of a request divided by its actual output token count. We report both the Mean and the 95th percentile (P95) tail per-token latency. Additionally, we track \textit{Token Throughput}, calculated as the total number of generated tokens divided by the total experimental duration.

% \begin{figure}
%     \centering
%     \includegraphics[width=0.80\linewidth]{Sources/fig1_poisson_main.pdf}
%     \caption{Practical Performance under Poisson Arrivals [waiting for the results of 70B]}
%     \label{fig:practical_perf}
% \end{figure}

\begin{figure}[t] % TODO：感觉后续两个字图又可以合并，因为可以共享图例把图例变大点
    \centering
    \begin{subfigure}[b]{0.49\linewidth}
        \centering
        \includegraphics[width=\linewidth]{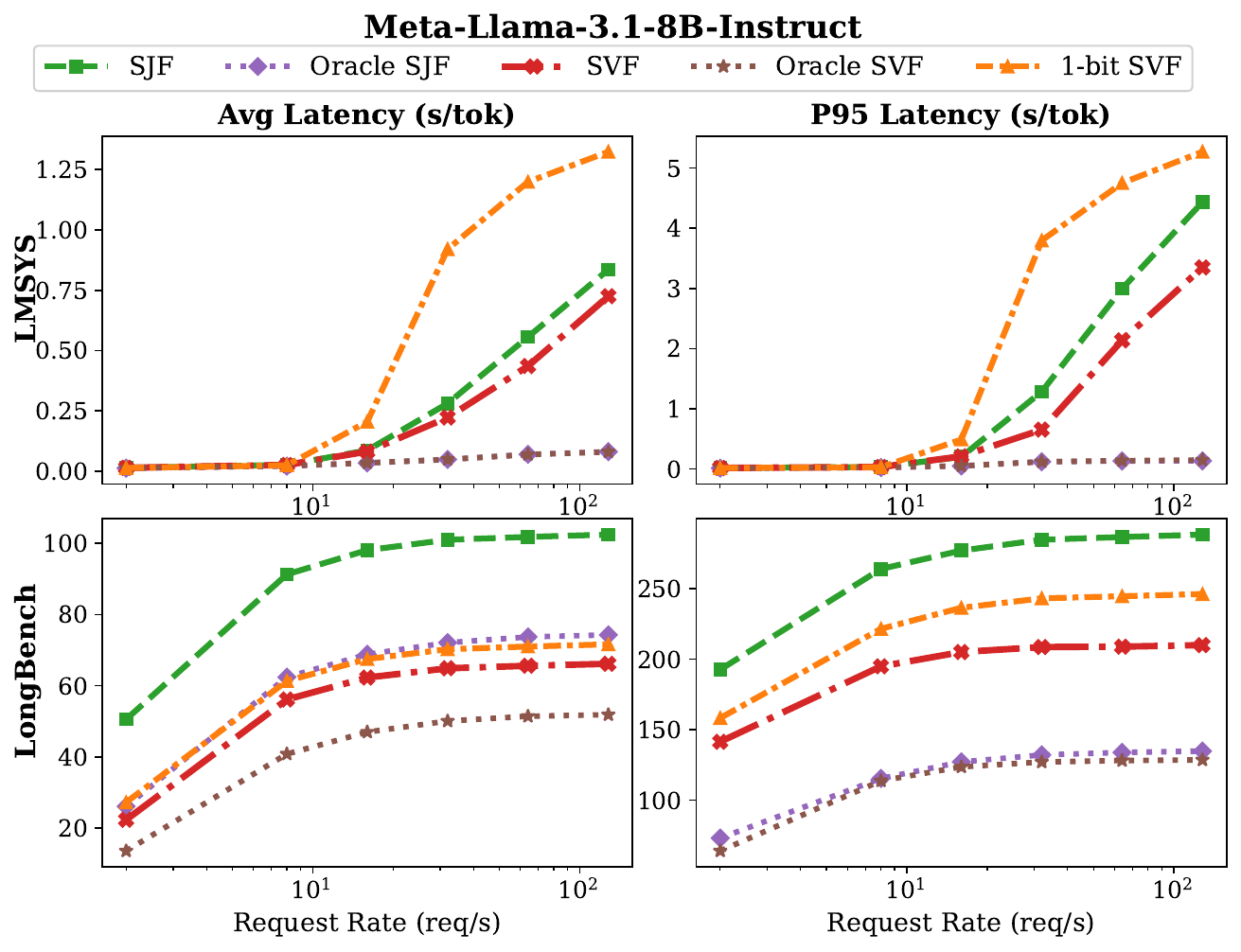}
        \caption{Llama-3.1-8B-Instruct}
        \label{fig:subfig_a} % 用于引用第一个子图的标签
    \end{subfigure}
    \begin{subfigure}[b]{0.49\linewidth}
        \centering
        \includegraphics[width=\linewidth]{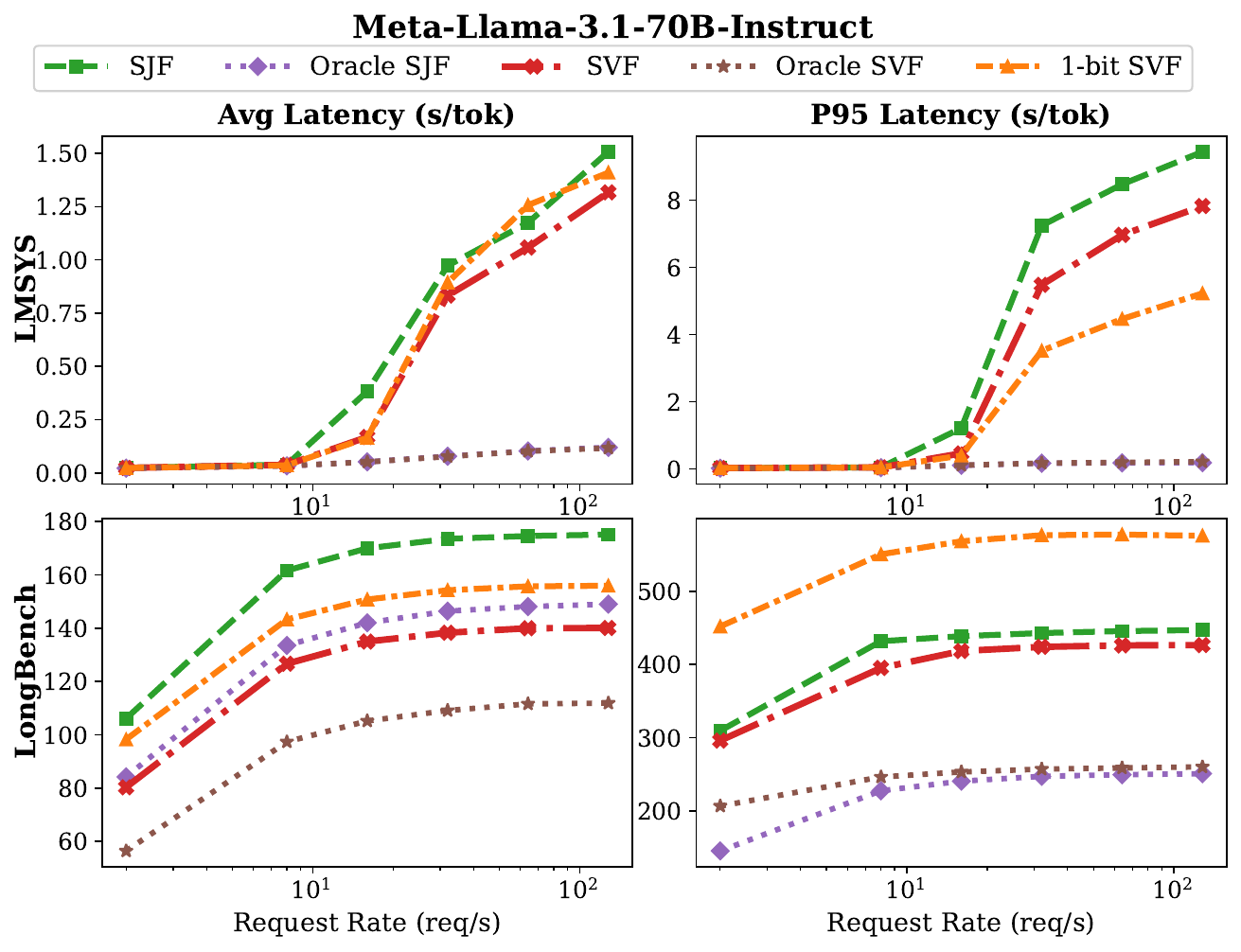}
        \caption{Llama-3.1-70B-Instruct}
        \label{fig:subfig_b} % 用于引用第二个子图的标签
    \end{subfigure}
    % --- 主标题 ---
    \caption{Validation of Theoretical Claims under Poisson Arrivals}
    \label{fig:main_figure} % 用于引用整个图组的标签
\end{figure}

\subsection{End-to-End Scheduling Performance}
\label{subsec:end_to_end}

% We first evaluate the end-to-end performance of our geometry-aware paradigm across both burst and Poisson arrival patterns. To dissect the performance gains, we structure our analysis into two phases: assessing practical deployment viability and evaluating proximity to theoretical bounds.
We first evaluate the end-to-end performance of our geometry-aware paradigm across both burst and Poisson arrival patterns. To dissect the performance gains, we structure our analysis into two phases: assessing practical deployment viability and validating theoretical soundness. Throughput is omitted for the Oracle baselines as they set maximum generation tokens to the ground-truth length, rendering any online throughput calculation fundamentally artificial.

\textbf{Practical Performance.} 
%To demonstrate the immediate practical benefits of our algorithms, we compare the deployment-ready policies: \{FCFS, SJF, SVF, 1-bit SVF\}. As shown in Table \ref{tab:burst_main_expanded}, under the adversarial burst scenario, [result analysis] 
% 1. 跨到达场景模型，SVF始终保持最低延迟。 并且保持最低尾部延迟，除了70Blmsys下略高于1bit SVF。
% 2. 跨到达场景，SVF的吞吐量始终高于SJF，除了70Blmsys。
% 3. 跨到达场景，1bit SVF相比于FCFS始终实现更低平均、更低尾部和更高吞吐。
% 4. 跨到达场景，1bit SVF的吞吐稳定高于SJF。1bit由于信息精度的不足，在lmsys这种chatbot场景下平均延迟略高于SJF，但在longbench中，由于SJF的一维局限性超越了信息精度的优势，1bit实现了在平均延迟对于SJF的超越。
To demonstrate the practical benefits of our algorithms, we compare the deployment-ready policies: \{FCFS, SJF, SVF, 1-bit SVF\} and the empirical results are shown in Table \ref{tab:burst_main_expanded} and Figure \ref{fig:practical_performance}. % reveal a profound structural consistency in their performance dynamics regardless of the underlying workload pressure.
First, across all arrival patterns, benchmarks, and model scales, SVF consistently achieves the \textbf{lowest average latency}. Furthermore, it largely secures the lowest P95 tail latency and maintains a higher system throughput than SJF, with the sole exception being the 70B model under the LMSYS workload where 1-bit SVF marginally leads in tail latency and throughput. Second, 1-bit SVF outperforms FCFS, delivering lower average latency, lower tail latency, and higher throughput across all scenarios. When evaluated against SJF, 1-bit SVF consistently achieves higher throughput. In terms of latency, the inherent precision loss of binary classification leads to slightly higher average latency in chatbot workloads (LMSYS) compared to SJF. However, in memory-intensive scenarios (LongBench), the fundamental dimensional flaw of SJF's 1D time-centric metric completely overshadows the precision disadvantage, allowing 1-bit SVF to surpass SJF in average latency.

% As shown in Table \ref{tab:burst_main_expanded}, under the adversarial burst scenario, SVF establishes clear dominance: it consistently secures the lowest average latency across both benchmarks and model scales. Furthermore, SVF largely captures the lowest P95 tail latency and maintains a higher system throughput than SJF, with the sole exception being the 70B model under the LMSYS workload. 
% Meanwhile, 1-bit SVF consistently surpasses FCFS across all key metrics in burst scenarios. Against SJF, it lags slightly in latency on chatbot workloads (LMSYS) due to precision loss, but this is overcome by SJF's 1D limitation on long-context tasks (LongBench), where it delivers superior average latency and throughput.

% Under the Poisson arrivals, Figure \ref{fig:practical_performance} illustrates that SVF sustains its dominance, securing the lowest average and P95 latencies while delivering higher throughput than SJF. Interestingly, 1-bit SVF exhibits distinct advantages based on the dataset: in LMSYS, it achieves the highest throughput despite trailing SJF in latency; in LongBench, both SVF and 1-bit SVF significantly outperform FCFS and SJF. Notably, 1-bit SVF stably ranks second in both average and P95 latencies while marginally surpassing exact SVF in throughput, demonstrating exceptional efficiency under heavy workloads.

\textbf{Remark on Tail Latency and Scheduling Fairness.}
In classical 1D scheduling, greedy heuristics like SJF inherently compromise fairness by starving long requests. Counter-intuitively, our geometry-aware greedy strategy achieves the opposite. By greedily executing requests with the smallest 2D volume, SVF rapidly releases bounded KV-cache memory back to the system. This accelerated memory recuperation effectively breaks global head-of-line blocking, allowing larger requests to be admitted significantly earlier. The empirical P95 latencies perfectly confirm that our volume-centric greediness practically guarantees system-wide fairness without indefinite starvation.

\textbf{Validation of Theoretical Claims.} 
%To strictly validate our theoretical claims regarding the fundamental flaws of SJF, we conduct an ablation on \{SJF, SVF, Oracle-SJF, Oracle-SVF\}. As depicted in Figure \ref{fig:main_figure} and Table \ref{tab:burst_main_expanded}, [result analysis]
% 先做中文的结果分析：切入维度为
% 1. oracle相互对比的话，oracle SVF的平均延迟稳定为最低。
% 2. 值得注意的是，burst场景（head of blocking testbed）和poisson场景下longbench场景中，SJF的一维局限性被充分暴露出来，其oracle的平均延迟低于SVF和1bit SVF，除了70B longbench中1bit SVF高于oracle SJF。
To validate our theoretical claims, we conduct an ablation on \{SJF, SVF, Oracle-SJF, Oracle-SVF\}. As depicted in Figure \ref{fig:main_figure} and Table \ref{tab:burst_main_expanded}, Oracle-SVF establishes the performance floor, stably achieving the lowest average latency.
%Most crucially, under both the burst and the Poisson scenario in LongBench setting, the fatal dimensional flaw of SJF is fully exposed: the theoretical Oracle-SJF performs worse in average latency than our practical, non-omniscient 1-bit SVF and SVF.
Most crucially, in memory-intensive workloads (LongBench) under both arrival patterns, the fatal dimensional flaw of SJF is fully exposed: Oracle-SJF with perfect future knowledge of output lengths yields worse average latency than our non-omniscient SVF. This empirically proves that time-centric metrics are inherently misaligned with LLM dynamics, and 2D geometry is a mandatory paradigm shift for optimal scheduling.

\subsection{System Overhead Profiling}
\label{subsec:overhead}

\textbf{Experimental Design.} 
In LLM serving systems, auxiliary modules must execute with low latency to ensure that GPU compute resources are dedicated to model inference. To demonstrate the lightweight nature of our approach, we profile both the full-length predictor and the lightweight classifier on Meta-Llama-3.1-8B-Instruct by submitting $N{=}200$ concurrent requests. This simultaneous submission~\cite{fu2024efficient} forces the predictors to process the pending queue in dense batches, faithfully capturing the amortized computational cost under realistic, high-concurrency production workloads.

% 这里直接放个表格，在caption上写overhead的计算方式。
\begin{table}[t]
  \centering
  \small
  \caption{Computational overhead of the predictors. $\text{Overhead} = \text{Predictor Time} / \text{E2E Time}$.}
  \label{tab:overhead}
  \begin{tabular}{llrrrr}
    \toprule
    Predictor & Benchmark & E2E Time(s) & Predictor Time (s) & Prefill Time(s) & Overhead (\%) \\
    \midrule
    \multirow{2}{*}{regression} & lmsys & 1930.28 & 1.1394 & 131.81 & 0.06 \\
     & longbench & 13306.42 & 1.8283 & 11941.31 & 0.01 \\
    % \midrule
    \multirow{2}{*}{classifier} & lmsys & 1741.27 & 0.1043 & 123.50 & 0.01 \\
     & longbench & 11976.57 & 0.7905 & 10846.66 & 0.01 \\
    \bottomrule
  \end{tabular}
\end{table}

\textbf{Results.}
As demonstrated in Table \ref{tab:overhead}, the latency overhead from our predictive mechanisms is minimal, with the full-length predictor consuming at most $0.06\%$ and the lightweight classifier consuming $\le 0.01\%$ of the end-to-end latency.
%This empirically confirms that our prediction-based policies function as strictly lightweight, plug-and-play modules. The infinitesimal overhead guarantees that the substantial latency reductions and throughput improvements yielded by our geometry-aware sorting translate entirely into system performance gains.
The infinitesimal overhead empirically confirms our algorithms as lightweight, plug-and-play modules, ensuring that the substantial performance gains from our geometry-aware sorting translate directly into system-wide improvements.

% \subsection{Fairness Analysis}
% \label{subsec:fairness}

% \textbf{Experimental Design.} 
%A known vulnerability of shortest-job-first paradigms is the indefinite starvation of longer requests. To evaluate the fairness and anti-starvation properties of our geometry-aware metric, we track the latency degradation of the longest 10\% of requests within the LongBench dataset under a sustained, high-pressure Poisson workload ($\lambda = 128$). 

% \textbf{Results.} 
%As demonstrated by the latency cumulative distribution function (CDF) in Figure \ref{fig:fairness_cdf}, SJF aggressively penalizes long-context requests, pushing their P99 latencies to unacceptable extremes due to persistent KV cache eviction and admittance denial. In contrast, the SVF and 1-bit SVF metrics inherently balance request volume. Because volume grows non-linearly with length, our geometry-aware metrics organically penalize requests that hoard memory for extended periods, creating a self-regulating scheduling dynamic. Consequently, SVF and 1-bit SVF significantly truncate the long tail of the latency distribution, effectively resolving the starvation problem and ensuring system-wide fairness without requiring explicit, hand-crafted aging mechanisms (e.g., MLFQ).

\section{Conclusion}

In this work, we have established a convergent theoretical and empirical framework for LLM inference scheduling under memory constraints. 
%Our SVF algorithm and its minimalist 1-bit variant successfully bridge the gap between rigorous mathematical stability and system-level efficiency. 
Our SVF algorithm and its minimalist 1-bit variant embody our geometry-aware approach, successfully translating insights from the rigorous theoretical bounds into tangible gains in system practice. Beyond its technical merits, our geometry-aware approach offers significant implications for sustainable and equitable AI deployment; we provide a dedicated discussion on these broader societal impacts in Appendix \ref{checklist10}.
%In this work, we identify the fundamental structural misalignment between legacy 1D time-centric scheduling and the 2D spatio-temporal growth of the KV cache. Based on a comprehensive 2D geometric modeling, we formally prove that our volume-centric approach achieves a constant worst-case competitive ratio ($CR \le 5$) under adversarial traffic scales. Practical evaluations on the vLLM engine demonstrate that our paradigm yields substantial net gains in average latency, tail latency. 
Looking forward, while our current implementation focuses on single-worker settings, extending this geometry-aware paradigm to multi-GPU clusters presents a promising future direction, necessitating new load-balancing strategies that account for inter-worker communication and evolving 2D memory footprints across nodes.

% \section*{Acknowledgments}
% This was was supported in part by......

%%%%%%%%%%%%%%%%%%%%%%%%%%%%%%%%%%%%%%%%%%%%%%%%%%%%%%%%%%%%
\newpage
%Bibliography
\bibliographystyle{unsrt}  
\bibliography{references}  

@inproceedings{yu2022orca,
  title={Orca: A distributed serving system for $\{$Transformer-Based$\}$ generative models},
  author={Yu, Gyeong-In and Jeong, Joo Seong and Kim, Geon-Woo and Kim, Soojeong and Chun, Byung-Gon},
  booktitle={16th USENIX symposium on operating systems design and implementation (OSDI 22)},
  pages={521--538},
  year={2022}
}

@inproceedings{kwon2023efficient,
  title={Efficient memory management for large language model serving with pagedattention},
  author={Kwon, Woosuk and Li, Zhuohan and Zhuang, Siyuan and Sheng, Ying and Zheng, Lianmin and Yu, Cody Hao and Gonzalez, Joseph and Zhang, Hao and Stoica, Ion},
  booktitle={Proceedings of the 29th symposium on operating systems principles},
  pages={611--626},
  year={2023}
}

@inproceedings{kaffes2019shinjuku,
  title={Shinjuku: Preemptive Scheduling for $\{$$\mu$second-scale$\}$ Tail Latency},
  author={Kaffes, Kostis and Chong, Timothy and Humphries, Jack Tigar and Belay, Adam and Mazi{\`e}res, David and Kozyrakis, Christos},
  booktitle={16th USENIX Symposium on Networked Systems Design and Implementation (NSDI 19)},
  pages={345--360},
  year={2019}
}

@article{wu2023fast,
  title={Fast distributed inference serving for large language models},
  author={Wu, Bingyang and Zhong, Yinmin and Zhang, Zili and Liu, Shengyu and Liu, Fangyue and Sun, Yuanhang and Huang, Gang and Liu, Xuanzhe and Jin, Xin},
  journal={arXiv preprint arXiv:2305.05920},
  year={2023}
}

@article{jin2023s,
  title={$ S^{3}$: Increasing GPU Utilization during Generative Inference for Higher Throughput},
  author={Jin, Yunho and Wu, Chun-Feng and Brooks, David and Wei, Gu-Yeon},
  journal={Advances in Neural Information Processing Systems},
  volume={36},
  pages={18015--18027},
  year={2023}
}

@article{fu2024efficient,
  title={Efficient llm scheduling by learning to rank},
  author={Fu, Yichao and Zhu, Siqi and Su, Runlong and Qiao, Aurick and Stoica, Ion and Zhang, Hao},
  journal={Advances in Neural Information Processing Systems},
  volume={37},
  pages={59006--59029},
  year={2024}
}

@article{zheng2023response,
  title={Response length perception and sequence scheduling: An llm-empowered llm inference pipeline},
  author={Zheng, Zangwei and Ren, Xiaozhe and Xue, Fuzhao and Luo, Yang and Jiang, Xin and You, Yang},
  journal={Advances in Neural Information Processing Systems},
  volume={36},
  pages={65517--65530},
  year={2023}
}

@inproceedings{stojkovic2025dynamollm,
  title={Dynamollm: Designing llm inference clusters for performance and energy efficiency},
  author={Stojkovic, Jovan and Zhang, Chaojie and Goiri, {\'I}{\~n}igo and Torrellas, Josep and Choukse, Esha},
  booktitle={2025 IEEE International Symposium on High Performance Computer Architecture (HPCA)},
  pages={1348--1362},
  year={2025},
  organization={IEEE}
}

@inproceedings{cheng2024enabling,
  title={Enabling efficient batch serving for lmaas via generation length prediction},
  author={Cheng, Ke and Hu, Wen and Wang, Zhi and Du, Peng and Li, Jianguo and Zhang, Sheng},
  booktitle={2024 IEEE International Conference on Web Services (ICWS)},
  pages={853--864},
  year={2024},
  organization={IEEE}
}

@article{qiu2024efficient,
  title={Efficient interactive llm serving with proxy model-based sequence length prediction},
  author={Qiu, Haoran and Mao, Weichao and Patke, Archit and Cui, Shengkun and Jha, Saurabh and Wang, Chen and Franke, Hubertus and Kalbarczyk, Zbigniew T and Ba{\c{s}}ar, Tamer and Iyer, Ravishankar K},
  journal={arXiv preprint arXiv:2404.08509},
  year={2024}
}

@inproceedings{qiu2024power,
  title={Power-aware deep learning model serving with $\{$$\mu$-Serve$\}$},
  author={Qiu, Haoran and Mao, Weichao and Patke, Archit and Cui, Shengkun and Jha, Saurabh and Wang, Chen and Franke, Hubertus and Kalbarczyk, Zbigniew and Ba{\c{s}}ar, Tamer and Iyer, Ravishankar K},
  booktitle={2024 USENIX Annual Technical Conference (USENIX ATC 24)},
  pages={75--93},
  year={2024}
}

@article{shahout2024don,
  title={Don't Stop Me Now: Embedding Based Scheduling for LLMs},
  author={Shahout, Rana and Malach, Eran and Liu, Chunwei and Jiang, Weifan and Yu, Minlan and Mitzenmacher, Michael},
  journal={arXiv preprint arXiv:2410.01035},
  year={2024}
}

@article{hu2024inference,
  title={Inference without interference: Disaggregate llm inference for mixed downstream workloads},
  author={Hu, Cunchen and Huang, Heyang and Xu, Liangliang and Chen, Xusheng and Xu, Jiang and Chen, Shuang and Feng, Hao and Wang, Chenxi and Wang, Sa and Bao, Yungang and others},
  journal={arXiv preprint arXiv:2401.11181},
  year={2024}
}

@article{schrage1968proof,
  title={A proof of the optimality of the shortest remaining processing time discipline},
  author={Schrage, Linus},
  journal={Operations Research},
  volume={16},
  number={3},
  pages={687--690},
  year={1968},
  publisher={INFORMS}
}

@article{tao2025prompt,
  title={Prompt-Aware Scheduling for Low-Latency LLM Serving},
  author={Tao, Yiheng and Zhang, Yihe and Dearing, Matthew T and Wang, Xin and Fan, Yuping and Lan, Zhiling},
  journal={arXiv preprint arXiv:2510.03243},
  year={2025}
}

@article{xie2026predicting,
  title={Predicting LLM output length via entropy-guided representations},
  author={Xie, Huanyi and Chen, Yubin and Wang, Liangyu and Hu, Lijie and Wang, Di},
  journal={arXiv preprint arXiv:2602.11812},
  year={2026}
}

@article{jaillet2025online,
  title={Online scheduling for llm inference with kv cache constraints},
  author={Jaillet, Patrick and Jiang, Jiashuo and Mellou, Konstantina and Molinaro, Marco and Podimata, Chara and Zhou, Zijie},
  journal={arXiv preprint arXiv:2502.07115},
  year={2025}
}

@article{chen2025adaptively,
  title={Adaptively robust llm inference optimization under prediction uncertainty},
  author={Chen, Zixi and Ye, Yinyu and Zhou, Zijie},
  journal={arXiv preprint arXiv:2508.14544},
  year={2025}
}

@article{wang2025llm,
  title={Llm serving optimization with variable prefill and decode lengths},
  author={Wang, Meixuan and Ye, Yinyu and Zhou, Zijie},
  journal={arXiv preprint arXiv:2508.06133},
  year={2025}
}

@article{ao2025optimizing,
  title={Optimizing llm inference: Fluid-guided online scheduling with memory constraints},
  author={Ao, Ruicheng and Luo, Gan and Simchi-Levi, David and Wang, Xinshang},
  journal={arXiv preprint arXiv:2504.11320},
  year={2025}
}

@article{liu2025greenllm,
  title={GreenLLM: SLO-Aware Dynamic Frequency Scaling for Energy-Efficient LLM Serving},
  author={Liu, Qunyou and Huang, Darong and Zapater, Marina and Atienza, David},
  journal={arXiv preprint arXiv:2508.16449},
  year={2025}
}

@article{pang2025optimizing,
  title={Optimizing LLM inference throughput via memory-aware and SLA-constrained dynamic batching},
  author={Pang, Bowen and Li, Kai and Wang, Feifan},
  journal={arXiv preprint arXiv:2503.05248},
  year={2025}
}

@inproceedings{li2024llm,
  title={Llm inference serving: Survey of recent advances and opportunities},
  author={Li, Baolin and Jiang, Yankai and Gadepally, Vijay and Tiwari, Devesh},
  booktitle={2024 IEEE High Performance Extreme Computing Conference (HPEC)},
  pages={1--8},
  year={2024},
  organization={IEEE}
}

@article{achiam2023gpt,
  title={Gpt-4 technical report},
  author={Achiam, Josh and Adler, Steven and Agarwal, Sandhini and Ahmad, Lama and Akkaya, Ilge and Aleman, Florencia Leoni and Almeida, Diogo and Altenschmidt, Janko and Altman, Sam and Anadkat, Shyamal and others},
  journal={arXiv preprint arXiv:2303.08774},
  year={2023}
}

@article{xiong2024search,
  title={When search engine services meet large language models: visions and challenges},
  author={Xiong, Haoyi and Bian, Jiang and Li, Yuchen and Li, Xuhong and Du, Mengnan and Wang, Shuaiqiang and Yin, Dawei and Helal, Sumi},
  journal={IEEE Transactions on Services Computing},
  volume={17},
  number={6},
  pages={4558--4577},
  year={2024},
  publisher={IEEE}
}

@article{liu2024deepseek,
  title={Deepseek-v3 technical report},
  author={Liu, Aixin and Feng, Bei and Xue, Bing and Wang, Bingxuan and Wu, Bochao and Lu, Chengda and Zhao, Chenggang and Deng, Chengqi and Zhang, Chenyu and Ruan, Chong and others},
  journal={arXiv preprint arXiv:2412.19437},
  year={2024}
}

@article{murali2024ai,
  title={Ai-assisted code authoring at scale: Fine-tuning, deploying, and mixed methods evaluation},
  author={Murali, Vijayaraghavan and Maddila, Chandra and Ahmad, Imad and Bolin, Michael and Cheng, Daniel and Ghorbani, Negar and Fernandez, Renuka and Nagappan, Nachiappan and Rigby, Peter C},
  journal={Proceedings of the ACM on Software Engineering},
  volume={1},
  number={FSE},
  pages={1066--1085},
  year={2024},
  publisher={ACM New York, NY, USA}
}

@inproceedings{zhen2025taming,
  title={Taming the titans: A survey of efficient llm inference serving},
  author={Zhen, Ranran and Li, Juntao and Ji, Yixin and Yang, Zhenlin and Liu, Tong and Xia, Qingrong and Duan, Xinyu and Wang, Zhefeng and Huai, Baoxing and Zhang, Min},
  booktitle={Proceedings of the 18th International Natural Language Generation Conference},
  pages={522--541},
  year={2025}
}

@techreport{smith1955various,
  title={Various optimizers for single-stage production},
  author={Smith, Wayne E},
  year={1955}
}

@article{grattafiori2024llama,
  title={The llama 3 herd of models},
  author={Grattafiori, Aaron and Dubey, Abhimanyu and Jauhri, Abhinav and Pandey, Abhinav and Kadian, Abhishek and Al-Dahle, Ahmad and Letman, Aiesha and Mathur, Akhil and Schelten, Alan and Vaughan, Alex and others},
  journal={arXiv preprint arXiv:2407.21783},
  year={2024}
}

@article{shoeybi2019megatron,
  title={Megatron-lm: Training multi-billion parameter language models using model parallelism},
  author={Shoeybi, Mohammad and Patwary, Mostofa and Puri, Raul and LeGresley, Patrick and Casper, Jared and Catanzaro, Bryan},
  journal={arXiv preprint arXiv:1909.08053},
  year={2019}
}

@misc{zheng2023lmsyschat1m,
      title={LMSYS-Chat-1M: A Large-Scale Real-World LLM Conversation Dataset}, 
      author={Lianmin Zheng and Wei-Lin Chiang and Ying Sheng and Tianle Li and Siyuan Zhuang and Zhanghao Wu and Yonghao Zhuang and Zhuohan Li and Zi Lin and Eric. P Xing and Joseph E. Gonzalez and Ion Stoica and Hao Zhang},
      year={2023},
      eprint={2309.11998},
      archivePrefix={arXiv},
      primaryClass={cs.CL}
}

@misc{bai2023longbench,
      title={LongBench: A Bilingual, Multitask Benchmark for Long Context Understanding}, 
      author={Yushi Bai and Xin Lv and Jiajie Zhang and Hongchang Lyu and Jiankai Tang and Zhidian Huang and Zhengxiao Du and Xiao Liu and Aohan Zeng and Lei Hou and Yuxiao Dong and Jie Tang and Juanzi Li},
      year={2023},
      eprint={2308.14508},
      archivePrefix={arXiv},
      primaryClass={cs.CL}
}

@inproceedings{devlin2019bert,
  title={Bert: Pre-training of deep bidirectional transformers for language understanding},
  author={Devlin, Jacob and Chang, Ming-Wei and Lee, Kenton and Toutanova, Kristina},
  booktitle={Proceedings of the 2019 conference of the North American chapter of the association for computational linguistics: human language technologies, volume 1 (long and short papers)},
  pages={4171--4186},
  year={2019}
}

@article{DBLP:journals/corr/abs-1908-08962,
  author    = {Iulia Turc and
               Ming{-}Wei Chang and
               Kenton Lee and
               Kristina Toutanova},
  title     = {Well-Read Students Learn Better: The Impact of Student Initialization
               on Knowledge Distillation},
  journal   = {CoRR},
  volume    = {abs/1908.08962},
  year      = {2019},
  url       = {http://arxiv.org/abs/1908.08962},
  eprinttype = {arXiv},
  eprint    = {1908.08962},
  bibsource = {dblp computer science bibliography, https://dblp.org}
}

@article{xiang2025servegen,
  title={Servegen: Workload characterization and generation of large language model serving in production},
  author={Xiang, Yuxing and Li, Xue and Qian, Kun and Yu, Wenyuan and Zhai, Ennan and Jin, Xin},
  journal={arXiv preprint arXiv:2505.09999},
  year={2025}
}

@misc{zhou2026positionllmservingneeds,
      title={Position: LLM Serving Needs Mathematical Optimization and Algorithmic Foundations, Not Just Heuristics}, 
      author={Zijie Zhou},
      year={2026},
      eprint={2605.01280},
      archivePrefix={arXiv},
      primaryClass={cs.DC},
      url={https://arxiv.org/abs/2605.01280}, 
}

%%%%%%%%%%%%%%%%%%%%%%%%%%%%%%%%%%%%%%%%%%%%%%%%%%%%%%%%%%%%
\newpage
\appendix
\section{Related Works}
\label{related_work}
% LLM inference，本身分为system层次的改进和schedule算法/predictor的改进。
% schedule算法调研，主要涉及online schedule算法、1-bit排队论。以此说明LLM inference的独特challenges【不确定还需不需要，如果有第三部分的话】
% 基于zijie建模后的工作以及同期工作。
% 【Q：是否控制篇幅使得2、3部分不要超过第一部分，契合下community的taste】

% deep research一下（后续按survey改，然后参考下几篇强相关的文章的related work——今晚完成收工去写intro和实验！！

% 记得相关论点在intro叙述过的话，这里就换种叙述方式。只讲迭代
\textbf{LLM inference scheduling}.
Efficient serving of LLMs is pivotal for interactive applications. Early systems such as Orca~\cite{yu2022orca} and vLLM~\cite{kwon2023efficient} introduced iteration-level scheduling to optimize memory management. However, their reliance on FCFS policies often leads to severe head-of-line blocking under heterogeneous workloads~\cite{kaffes2019shinjuku, wu2023fast}. To mitigate this, FastServe~\cite{wu2023fast} employed multi-level feedback queues for dynamic prioritization, though frequent preemptions incur significant overhead in KV cache management. Recent prediction-based scheduling methods typically fall into three paradigms: formulating length prediction as a classification problem~\cite{jin2023s, stojkovic2025dynamollm, zheng2023response, hu2024inference, qiu2024power, shahout2024don}, adopting regression-based techniques to estimate output lengths~\cite{cheng2024enabling, qiu2024efficient, xie2026predicting}, or employing a Learning-to-Rank approach to evaluate relative request sizes~\cite{fu2024efficient, tao2025prompt}. Nevertheless, these prediction mechanisms are mainly integrated with time-centric heuristics %, such as SJF or SRPT, 
to govern request prioritization and engine admittance.

\textbf{Theoretical foundations of memory-constrained scheduling}.
Classical scheduling theory has long established the optimality of time-centric heuristics—notably SJF and SRPT—for minimizing average latency in traditional single-machine environments~\cite{smith1955various, schrage1968proof}. However, recognizing the unique dynamics of LLM serving, recent research has begun to formalize its underlying mathematical modeling. \cite{jaillet2025online} established a foundational memory-bound scheduling model that strictly abstracts the autoregressive KV cache blow-up and continuous batching mechanisms. Based on this framework, \cite{wang2025llm} formally proved that SJF yields an unbounded worst-case competitive ratio. This theoretical breakthrough highlights the urgent need to move beyond traditional time-centric heuristics. 
% Recognized for accurately capturing the underlying mathematical dynamics between system throughput and out-of-memory (OOM) bottlenecks, 
Besides, this modeling framework has inspired numerous derivatives~\cite{chen2025adaptively, ao2025optimizing, liu2025greenllm, pang2025optimizing}, fully proving its central role in reshaping the modern LLM inference ecosystem.

\section{Discussion on Why We Chose Greedy Algorithms}
% \subsection{Why Greedy Algorithms}
\label{discussion:greedy}
We deliberately design SVF as an explicit geometry-aware heuristic for the following reasons:

\textbf{Robustness Against Non-Stationary Traffic.} 
LLM traffic often exhibits unpredictable out-of-distribution (OOD) bursts and strong diurnal effects. Learning-based policies that rely on historical windows struggle to generalize to these sudden distribution shifts. In contrast, SVF decouples the \textit{prediction of request features} (which are semantically-driven and relatively stationary) from the \textit{scheduling policy}. By proving that SVF maintains a constant worst-case competitive ratio, we ensure system stability regardless of traffic arrival dynamics.

\textbf{Complexity of 2D Spatio-Temporal Packing.} 
Continuous batching under KV cache limits is a highly stateful 2D geometric packing problem. A single large request casts a long ``memory shadow'' over future scheduling decisions, creating long-horizon dependencies that are notoriously difficult for gradient-based updates to capture reliably. SVF explicitly leverages the intrinsic geometry of the scheduling problem, providing a lightweight and system-safe solution.

\section{Broader Impacts}
\label{checklist10}
The primary contribution of our work is improving the system-level efficiency of LLM serving. On the positive side, our geometry-aware scheduling paradigm significantly reduces the latency and memory requirements for LLM inference. This democratization of computing resources can lower the operational costs and reduce the carbon footprint and energy consumption associated with large-scale AI deployments. On the negative side, by enhancing the throughput and reducing the cost of continuous batching, our system inherently makes it cheaper and faster to generate text at scale. Consequently, this could inadvertently lower the barrier for malicious actors to deploy LLMs for generating misinformation, spam, or harmful content. We encourage the community to deploy such high-throughput inference engines in tandem with robust content moderation and AI safety guardrails.

\section{Pseudocode of SVF and 1-bit SVF}
\label{code}

\begin{algorithm}[H]
\caption{Smallest Volume First (SVF) Scheduling}
\label{alg:svf}
\begin{algorithmic}[1]
\Require Request stream $\mathcal{R}$ arriving online
\Require Output length predictor $\mathcal{P}$ with max batch size $B_{max}$
\State \textbf{Initialize:} Prediction queue $Q_{pred} \leftarrow \emptyset$, Waiting queue $Q_{wait} \leftarrow \emptyset$
\State \quad On arrival of $r_i$ with prompt size $s_i$:
\State \quad Push $r_i$ to $Q_{pred}$

\Statex
\State \textbf{Process 1: Background Batched Predictor Worker}
\State \quad \textbf{while} True \textbf{do}
\State \quad \quad Wait until $Q_{pred}$ is not empty
\State \quad \quad $B_{req} \leftarrow $ Drain up to $B_{max}$ requests from $Q_{pred}$ \Comment{Greedy aggregation}
\State \quad \quad $\mathbf{\hat{o}} \leftarrow \mathcal{P}(\text{prompts of } B_{req})$ 
\State \quad \quad \textbf{for} each $r_i \in B_{req}$ and its prediction $\hat{o}_i \in \mathbf{\hat{o}}$ \textbf{do}
\State \quad \quad \quad $r_i.\text{priority} \leftarrow s_i \cdot \hat{o}_i + \frac{\hat{o}_i^2 + \hat{o}_i}{2}$ \Comment{Compute KV cache volume}
\State \quad \quad \quad Insert $r_i$ into $Q_{wait}$ in ascending order of $\text{priority}$
\State \quad \quad \textbf{end for}
\State \quad \textbf{end while}

\Statex
\State \textbf{Process 2: LLM Inference Engine} \Comment{System Native}
\State \quad \textbf{while} System KV cache has available capacity \textbf{do}
\State \quad \quad \textbf{if} $Q_{wait}$ is empty \textbf{then} \textbf{break}
\State \quad \quad $r_{next} \leftarrow \text{Pop}(Q_{wait})$ \Comment{Pops request with highest priority}
\State \quad \quad Add $r_{next}$ to running batch $Q_{run}$
\State \quad \textbf{end while}
\State \quad Execute one engine step (prefill or decode) for all requests in $Q_{run}$
\end{algorithmic}
\end{algorithm}

\begin{algorithm}[H]
\caption{1-bit Smallest Volume First (1-bit SVF) Scheduling}
\label{alg:onebit}
\begin{algorithmic}[1]
\Require Request stream $\mathcal{R}$ arriving online
\Require 1-bit length classifier $\mathcal{C}$ with max batch size $B_{max}$
\Require Global proxy lengths $O_0, O_1$
\State \textbf{Initialize:} Prediction queue $Q_{pred} \leftarrow \emptyset$, Waiting queue $Q_{wait} \leftarrow \emptyset$
\State \quad On arrival of $r_i$ with prompt size $s_i$:
\State \quad Push $r_i$ to $Q_{pred}$

\Statex
\State \textbf{Process 1: Background Batched Classifier Worker}
\State \quad \textbf{while} True \textbf{do}
\State \quad \quad Wait until $Q_{pred}$ is not empty
\State \quad \quad $B_{req} \leftarrow$ Drain up to $B_{max}$ requests from $Q_{pred}$ \Comment{Greedy aggregation}
\State \quad \quad $\mathbf{b} \leftarrow \mathcal{C}(\text{prompts of } B_{req})$ \Comment{Binary class: $0 = $ short, $1 = $ long}
\State \quad \quad \textbf{for} each $r_i \in B_{req}$ and its class $b_i \in \mathbf{b}$ \textbf{do}
\State \quad \quad \quad $r_i.\text{priority} \leftarrow s_i \cdot O_{b_i} + \frac{O_{b_i}^2 + O_{b_i}}{2}$
\State \quad \quad \quad Insert $r_i$ into $Q_{wait}$ in ascending order of $\text{priority}$
\State \quad \quad \textbf{end for}
\State \quad \textbf{end while}

\Statex
\State \textbf{Process 2: LLM Inference Engine} \Comment{System Native}
\State \quad \textbf{while} System KV cache has available capacity \textbf{do}
\State \quad \quad \textbf{if} $Q_{wait}$ is empty \textbf{then} \textbf{break}
\State \quad \quad $r_{next} \leftarrow \text{Pop}(Q_{wait})$ \Comment{Pops request with highest priority}
\State \quad \quad Add $r_{next}$ to running batch $Q_{run}$
\State \quad \textbf{end while}
\State \quad Execute one engine step (prefill or decode) for all requests in $Q_{run}$
\end{algorithmic}
\end{algorithm}

\section{Detailed Proofs for Burst Arrival Scenario}
\label{sec:appendix_burst_proofs}

In this section, we provide the rigorous algebraic proofs for the burst arrival analysis presented in Section \ref{sec:burst}.

\subsection{Proof of Proposition \ref{prop:opt_lower_bound}}
\label{sec:appendix_proof_opt_lb}

\begingroup 
\def\theproposition{\ref{prop:opt_lower_bound}} 
\begin{proposition*}[Lower Bound of OPT, Restated] Let $vol_1 \le vol_2 \le \dots \le vol_N$ be the requests sorted by their geometric volumes. The total end-to-end latency of any offline optimal schedule (OPT) satisfies: 
\begin{equation}     
\text{TEL(OPT)} \ge \frac{1}{M} \sum_{j=1}^N \sum_{i=1}^j vol_i 
\end{equation} 
\end{proposition*}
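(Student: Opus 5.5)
The plan is a direct exchange-style argument on completion times: view constraint \eqref{eq:c2} as a memory-time budget and use it to lower-bound the $j$-th smallest completion time.

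First, I will check that the geometric volume is exactly a request's cumulative memory-time consumption in the model. Under burst arrivals $a_i=0$. If request $i$ starts at step $k$ (i.e.\ $x_{i,k}=1$), constraint \eqref{eq:c2} charges it a footprint of $s_i + t - k$ at each step $t \in \{k+1,\dots,k+o_i\}$. Summing over these $o_i$ steps gives
\begin{equation*}
\sum_{\ell=1}^{o_i} (s_i+\ell) = s_i o_i + \frac{o_i^2+o_i}{2} = vol_i .
\end{equation*}
The completion time is $C_i = k + o_i$, and all of this consumption lies in the window $[1, C_i]$. Also, since $a_i=0$, request $i$'s latency equals $C_i$, so $\mathrm{TEL}(\mathrm{OPT}) = \sum_i C_i$.

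Second, the capacity certificate. Fix any feasible schedule, in particular an optimal one, and relabel requests so that $C_{(1)} \le C_{(2)} \le \dots \le C_{(N)}$. The requests $(1),\dots,(j)$ have all finished by time $C_{(j)}$, so their full volumes were consumed within steps $1,\dots,C_{(j)}$. Summing \eqref{eq:c2} over $t=1,\dots,C_{(j)}$, and discarding the nonnegative contributions of all other requests, yields
\begin{equation*}
\sum_{i=1}^{j} vol_{(i)} \le C_{(j)} \cdot M .
\end{equation*}

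Third, the rearrangement step. Any $j$ distinct requests have total volume at least the sum of the $j$ smallest volumes, so $\sum_{i=1}^j vol_{(i)} \ge \sum_{i=1}^j vol_i$ under the volume-sorted indexing of the statement. Hence $C_{(j)} \ge \frac{1}{M}\sum_{i=1}^j vol_i$ for every $j$. Summing over $j=1,\dots,N$ and using $\mathrm{TEL}=\sum_j C_{(j)}$ gives the claimed bound. Ties in completion times or volumes do not matter, since the inequalities hold for any consistent ordering.

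The only delicate point is the first step: the discrete indexing of \eqref{eq:c2} must make each request's per-step footprints sum to exactly $vol_i$, with every charged step falling at or before $C_i$. I would verify this against the summation range $k \in [\max\{a_i,t-o_i\}, t-1]$, which makes the active steps $t=k+1,\dots,k+o_i$. If an off-by-one discrepancy appeared (for example, an extra prefill step), it would only strengthen the bound, since extra consumption enlarges the left-hand side of the certificate.
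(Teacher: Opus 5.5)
Your proof is correct and follows the paper's argument: it lower-bounds the $j$-th completion time of OPT via the memory-time budget $C_{(j)} M \ge \sum_{i\le j} vol_{(i)}$, then compares with the volume-sorted prefix sums. The differences are cosmetic. You apply the smallest-$j$-volumes bound to each prefix separately, whereas the paper first sums and then invokes the rearrangement inequality on $\sum_i (N-i+1)\, vol_{\pi(i)}$. You also explicitly check that the per-step charges in \eqref{eq:c2} sum exactly to $vol_i$ within $[1, C_i]$, which the paper leaves implicit.
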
 
\endgroup

\begin{proof}
Consider the actual schedule generated by an omniscient offline optimal algorithm (OPT). Let the completion time sequence of the requests in its true execution be $C_1 \le C_2 \le \dots \le C_N$. Let the volume sequence corresponding to this completion order be $vol_{\pi(1)}, vol_{\pi(2)}, \dots, vol_{\pi(N)}$, where $\pi$ represents the specific permutation generated by the OPT schedule.

For any request completing at the $j$-th position, at its completion time $C_j$, the system must have completely processed all $j$ requests scheduled before or at this time. Due to the physical memory limits of the system, the maximum processing volume the system can provide within the time interval $[0, C_j]$ is strictly bounded by $C_j \cdot M$. This yields the inequality $C_j \cdot M \ge \sum_{i=1}^j vol_{\pi(i)}$, which can be rearranged as:
\begin{equation}
    C_j \ge \frac{1}{M} \sum_{i=1}^j vol_{\pi(i)}
\end{equation}

Summing the completion times over all $N$ requests gives the total end-to-end latency of OPT:
\begin{equation}
    \text{TEL(OPT)} = \sum_{j=1}^N C_j \ge \frac{1}{M} \sum_{j=1}^N \sum_{i=1}^j vol_{\pi(i)}
\end{equation}

Focusing on the right-hand side of the inequality, during the double summation, the volume of the first completed request $vol_{\pi(1)}$ is added exactly $N$ times; the second $vol_{\pi(2)}$ is added $N-1$ times, and so forth. This double summation is algebraically equivalent to a single summation with corresponding weights:
\begin{equation}
    \text{TEL(OPT)} \ge \frac{1}{M} \sum_{i=1}^N (N-i+1) \cdot vol_{\pi(i)}
\end{equation}

Still focusing on the right-hand side, the strategy $\pi$ that minimizes this weighted sum is to assign the largest weights to the smallest volumes. That is, the summation achieves its global minimum over the entire permutation space if and only if the sequence is sorted in strictly ascending order of their true volumes. Without loss of generality, we define $vol_1 \le vol_2 \le \dots \le vol_N$ (for requests with identical volumes, a unique arbitrary order is assigned to ensure strict indexing). Therefore, applying the rearrangement inequality, we obtain:
\begin{equation}
    \sum_{i=1}^N (N-i+1) \cdot vol_{\pi(i)} \ge \sum_{i=1}^N (N-i+1) \cdot vol_i = \sum_{j=1}^N \sum_{i=1}^j vol_i
\end{equation}

Ultimately, substituting this minimal sum back yields the tight lower bound for OPT:
\begin{equation}
    \text{TEL(OPT)} \ge \frac{1}{M} \sum_{j=1}^N \sum_{i=1}^j vol_i
\end{equation}
\end{proof}

% ==========================================================================
\subsection{Proof of Theorem \ref{thm:svf-burst}}
\label{sec:appendix_proof_svf_cr}

% --- 重述定理内容 --- 
\begingroup 
\def\thetheorem{\ref{thm:svf-burst}} 
\begin{theorem}[Worst-case Bound of SVF, Restated] 
Under the burst arrival model with $p_i = s_i + o_i \le \alpha M$,
SVF achieves a worst-case competitive ratio
\begin{equation}
  \text{CR} = \frac{\text{TEL}(\text{SVF})}{ \text{TEL}(\text{OPT})} \le  1 + \frac{2}{1-\alpha}.
\end{equation}
\end{theorem}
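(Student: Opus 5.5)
The plan is to split SVF's total latency into queuing delay plus decoding time. Under burst arrivals, request $j$ admitted at time $W_j$ completes at $W_j+o_j$, so
\[
\mathrm{TEL}(\mathrm{SVF})=\sum_j W_j+\sum_j o_j .
\]
The decoding term is handled immediately. Any schedule must spend $o_j$ steps on request $j$, so $\sum_j o_j\le \mathrm{TEL}(\mathrm{OPT})$. It therefore suffices to show
\[
\sum_j W_j\le \frac{2}{1-\alpha}\,\mathrm{TEL}(\mathrm{OPT}),
\]
and for that we will compare against the prefix-volume lower bound of Proposition~\ref{prop:opt_lower_bound}.

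\textbf{Step 1: blocking certificate.} Fix $j$ and any step $t<W_j$, and let $A(t)$ be the active set at that step. The current footprint of each active request is at most its peak $p_i$, and $p_j\le\alpha M$. So if $\sum_{i\in A(t)}p_i\le(1-\alpha)M$, then admitting $j$ could never overflow memory during $j$'s entire lifetime: the others' footprints never exceed their peaks, and departures only free memory. This relies on SVF's admission test being conservative in the peak sense, i.e.\ admit if the sum of peaks fits. I will state this reading of the admission rule explicitly, since it is what makes $j$ admissible in that situation. The remaining possibility is that some request ahead of $j$ in the queue is blocked instead, but SVF admits in volume order, so that request has a smaller index than $j$. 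I will handle this by applying the certificate to the head-of-queue request $h$ at step $t$. Its blocking gives $\sum_{A(t)}p_i>(1-\alpha)M$. In either case this holds at every step in which $j$ is still waiting, provided the queue head is blocked rather than idle, and SVF is work-conserving, so the head is blocked whenever $j$ waits.

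\textbf{Step 2: integration and charging.} Summing the certificate over $t<W_j$ gives
\[
(1-\alpha)M\,W_j<\sum_{t<W_j}\sum_{i\in A(t)}p_i .
\]
Swapping the order of summation, each $i$ is active for at most $o_i$ steps. The key claim is that every $i$ active before $W_j$ lies in the SVF prefix $P_j=\{i: vol_i\le vol_j\}$. This holds because under a burst SVF admits strictly in volume order, so anything admitted before $j$ precedes $j$ in sorted order. Next,
\[
p_i o_i=s_io_i+o_i^2\le 2\Bigl(s_io_i+\tfrac{o_i^2+o_i}{2}\Bigr)=2\,vol_i .
\]
Together these give
\[
W_j<\frac{2}{(1-\alpha)M}\sum_{i\le j}vol_i .
\]
Summing over $j$ and invoking Proposition~\ref{prop:opt_lower_bound} yields $\sum_j W_j\le\frac{2}{1-\alpha}\mathrm{TEL}(\mathrm{OPT})$. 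Adding the decoding term then gives the claimed bound $1+\frac{2}{1-\alpha}$.

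\textbf{Main obstacle.} The delicate step is Step 1: the certificate must be valid under the precise admission semantics. If SVF admits using current rather than peak footprint, a blocked step only certifies that the current usage is near $M$, not that the peak mass exceeds $(1-\alpha)M$. In that case I would instead argue that the current footprint lower-bounds the peaks of the active requests, and accept the peak-sum reservation as the model's admission rule. A secondary check is head-of-line order: requests of index $>j$ are never admitted before $j$, so every step of $A(t)$ during $j$'s wait contains only prefix requests, and the charging to $P_j$ is exact.
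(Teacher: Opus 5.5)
Your proposal is correct and follows the paper's proof essentially step for step: the per-step blocking certificate $\sum_{i\in A(t)}p_i>(1-\alpha)M$, the swap of summation that charges each prefix request at most $o_i$ times, the bound $p_io_i<2\,vol_i$, and then Proposition~\ref{prop:opt_lower_bound} together with $\sum_j o_j\le\mathrm{TEL}(\mathrm{OPT})$. Your reduction to the blocked queue head $h\prec j$ makes explicit a step the paper phrases only as ``if $j$ were admitted,'' and your concern about admission semantics does not arise in this model, because the paper's forward-looking check passes whenever the peak sum fits.
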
 
\addtocounter{theorem}{-1} 
\endgroup 
% ------------------

To establish the competitive ratio of SVF, we first introduce a second, natural lower bound for the offline optimal (OPT), which complements the area conservation bound derived in Proposition \ref{prop:opt_lower_bound}.

\begin{lemma}
\label{lemma:opt_lb_execution}
The total end-to-end latency of any offline optimal schedule (OPT) is strictly lower-bounded by the sum of the absolute minimum processing times of all requests:
\begin{equation}
    \text{TEL(OPT)} \ge \sum_{i=1}^N o_i
\end{equation}
\end{lemma}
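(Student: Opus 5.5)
The plan is a per-request argument followed by summation, using only the objective \eqref{eq:obj} and the feasibility constraints of the model.

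First, fix any feasible schedule, and in particular an offline optimal one. By constraint \eqref{eq:c1}, each request $i$ is assigned exactly one start time $t_i$ with $x_{i,t_i}=1$. Feasibility also requires $t_i \ge a_i$. Under the burst model all $a_i = 0$, so $t_i \ge 0$.

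Second, read off each request's latency from the objective. Execution is non-preemptive and the model generates one token per discrete step, so request $i$ occupies memory during the $o_i$ consecutive steps after $t_i$ and finishes at $C_i = t_i + o_i$. Its contribution to the objective is therefore $(t_i + o_i) - a_i$. Since $t_i \ge a_i$, this gives $C_i - a_i \ge o_i$. In words, no request can finish faster than its own decoding length, however much memory is free.

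Third, sum over all $i \in [N]$. This yields $\text{TEL}(\text{OPT}) = \sum_i (t_i + o_i - a_i) \ge \sum_i o_i$. Because this holds for every feasible schedule, it holds in particular for the optimal one.

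There is no real obstacle in this argument; it does not even use the memory constraint \eqref{eq:c2}. The one point to state carefully is the convention that a request's latency counts all $o_i$ decode steps starting from its admission. That is exactly what the $+o_i$ term in \eqref{eq:obj} encodes, and it matches how Proposition~\ref{prop:opt_lower_bound} measures completion times $C_j$.

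I would also remark that this lemma is the bound used to absorb the pure-decoding part $\sum_j o_j$ of SVF's latency in the proof of Theorem~\ref{thm:svf-burst}, where it accounts for the additive $1$ in $1+\frac{2}{1-\alpha}$.
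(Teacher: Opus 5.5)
Your proof is correct and uses the same idea as the paper: every request needs its $o_i$ consecutive non-preemptive decode steps, so its latency is at least $o_i$, and summing over requests gives the bound. Your version is slightly more formal, because you read the per-request latency $(t_i+o_i)-a_i$ directly off the objective and use $t_i \ge a_i$, whereas the paper states the same point informally.
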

\begin{proof}
In a non-preemptive discrete-time scheduling model, every request $i$ requires exactly $o_i$ consecutive active time steps to complete its decoding process. Thus, even if all requests could be processed instantaneously upon arrival without any queuing delay, the sum of their unavoidable execution times is $\sum_{i=1}^N o_i$. Any valid schedule must at least expend this baseline latency.
\end{proof}

\subsubsection{Algorithm Upper Bound}

For each request $i$, we define its specific metrics as follows:
\begin{itemize}
    \item \textbf{Peak Memory:} $p_i = s_i + o_i\le \alpha M$
    \item \textbf{Sorting Metric (Volume):} $vol_i = p_i \cdot o_i - \frac{o_i^2}{2} + \frac{o_i}{2}$
\end{itemize}

The SVF algorithm greedily selects requests to join the running batch in ascending order of their volume. Without loss of generality, we define a strict ordering:
\begin{equation}
    vol_1 \le vol_2 \le \dots \le vol_N
\end{equation}
For requests with identical volumes, an arbitrary but unique tie-breaking sequence is consistently applied to ensure strict indexing. If request $i$ strictly precedes request $j$ in this sequence, we denote it as $i \prec_\pi j$. The strict prefix set of requests processed before $j$ is defined as $\mathcal{P}_j = \{i \mid i \prec_\pi j\}$.

\begin{lemma}[Algorithmic Waiting Time of SVF]
\label{lemma:svf_wait_time}
In the discrete scheduling model under burst arrivals, the queuing waiting time $W_j$ of any request $j$ is strictly bounded by the aggregated physical volume of its preceding requests: 
\begin{equation}
    W_j < \frac{2}{(1-\alpha)M}\sum_{i \in \mathcal{P}_j} vol_i
\end{equation}
\end{lemma}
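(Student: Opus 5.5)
The proof will turn every step during which $j$ is blocked into a certificate that a large amount of peak memory is in use, and then charge that memory to the volumes of $j$'s SVF predecessors. Fix a request $j$. Under burst arrivals every request is present at $t=0$, so $W_j$ is simply the start time of $j$ under SVF, and $j$ waits during the steps $t=0,1,\dots,W_j-1$.

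\textbf{Step 1: the blocking certificate.} Let $A(t)$ be the running set at a step $t<W_j$. Because SVF admits requests greedily in volume order, $j$ not being admitted at step $t$ means that either a request ahead of $j$ in the queue could not fit, or $j$ itself could not fit. Either way, some waiting request $k$ with $p_k\le\alpha M$ was refused.

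Admission is safe whenever reserving peak memory keeps the total within capacity. Each active request $i$ never uses more than $p_i$, so if $\sum_{i\in A(t)}p_i+p_k\le M$ then $k$ could be inserted without ever violating \eqref{eq:c2}. I will read the admission rule in the paper's ``reserve peak memory'' sense, i.e.\ a request is admitted exactly when the peak reservations fit; this is the one modelling point that needs care. Under that reading, refusal certifies
\[
\sum_{i\in A(t)}p_i>(1-\alpha)M .
\]

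\textbf{Step 2: the active set lies in the prefix.} While $j$ waits, every request that has ever been admitted precedes $j$ in the SVF order, since admission is in queue order and $j$ is still queued. Hence $A(t)\subseteq\mathcal P_j$ for all $t<W_j$.

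\textbf{Step 3: sum over time and swap.} Summing the certificate over the $W_j$ waiting steps gives
\[
(1-\alpha)M\,W_j<\sum_{t<W_j}\sum_{i\in A(t)}p_i\le\sum_{i\in\mathcal P_j}p_i\,o_i ,
\]
where the second inequality swaps the two sums and uses that each $i$ is active for exactly $o_i$ steps in total.

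\textbf{Step 4: convert peak-times-duration into volume.} From the definition,
\[
vol_i=p_io_i-\tfrac{o_i^2}{2}+\tfrac{o_i}{2}=\tfrac12p_io_i+\tfrac12\bigl(p_io_i-o_i^2+o_i\bigr)=\tfrac12p_io_i+\tfrac12 o_i(s_i+1).
\]
Since $o_i\ge1$, the last term is positive, so $p_io_i<2\,vol_i$. Substituting into Step 3 yields the claimed bound on $W_j$.

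\textbf{Main obstacle.} Step 1 is the delicate part. If the engine admitted requests based on the current rather than the peak memory footprint, refusal would not certify large reserved peaks. I would also need to handle the possibility that the head-of-queue request blocking $j$ is not $j$ itself; this is resolved by applying the certificate to whichever request was refused, since every request satisfies $p_k\le\alpha M$. A minor edge case is $W_j=0$ (no waiting steps): the sum in Step 3 is then empty and the strict inequality would fail, but the bound is nonnegative and only matters for $W_j>0$, so the statement should be read with $\le$ there or restricted to $W_j\ge1$.
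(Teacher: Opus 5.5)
Your proposal is correct and follows the paper's proof essentially step for step: the per-step blocking certificate $\sum_{i\in A(t)}p_i>(1-\alpha)M$, the inclusion $A(t)\subseteq\mathcal{P}_j$, summing over the $W_j$ waiting steps and swapping the sums so each predecessor contributes at most $p_io_i$, and finally $p_io_i<2\,vol_i$. Two of your points tighten the paper's argument without changing its route: you apply the certificate to whichever queued request $k$ was actually refused, where the paper asserts that admitting $j$ itself would violate capacity, and you flag the strictness caveat at $W_j=0$ (which genuinely fails only when $\mathcal{P}_j=\emptyset$, where both sides vanish).
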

\begin{proof}
Consider any arbitrary discrete time step $t \in \{0, 1, \dots, W_j-1\}$ while request $j$ is lingering in the waiting queue. Let $A(t)$ denote the set of active requests concurrently executing in the GPU at time $t$. Due to the strict greedy nature of the SVF algorithm, it must hold that $A(t) \subseteq \mathcal{P}_j$. Furthermore, guided by the forward-looking memory check from \eqref{eq:c2}, if request $j$ were to be admitted at time $t$, there would exist some future time $t' \ge t$ where the aggregated memory exceeds the global capacity $M$.

For any request $i$, its dynamic memory footprint $m_i(\tau)$ throughout its lifecycle satisfies $m_i(\tau) \le p_i$. Consequently, the maximum possible memory sum of the active set $A(t)$ at any future point will not exceed $\sum_{i \in A(t)} p_i$. 

Suppose, for the sake of contradiction, that $\sum_{i \in A(t)} p_i \le (1-\alpha) M$. Even if the delayed request $j$ reached its peak memory $p_j$, the system's total memory usage would be bounded by:
\begin{equation}
    \sum_{i \in A(t)} p_i + p_j \le (1-\alpha)M + \alpha M = M
\end{equation}
This contradicts the premise of the forward-looking memory violation. Therefore, for every single time step $t$ during $j$'s waiting period, the system guarantees a minimum memory utilization rate:
\begin{equation}
    \sum_{i \in A(t)} p_i > (1-\alpha)M
\end{equation}

Integrating this inequality over every discrete time step within $j$'s entire waiting period $[0, W_j-1]$, we obtain:
\begin{equation}
    \sum_{t=0}^{W_j-1} \sum_{i \in A(t)} p_i > \sum_{t=0}^{W_j-1} (1-\alpha)M = (1-\alpha)MW_j
\end{equation}

The left-hand side sums over time first, then iterates through active requests. We can mathematically swap this summation order to first enumerate the requests, then calculate each request's active duration. Since $A(t) \subseteq \mathcal{P}_j$, any request $i \in \mathcal{P}_j$ remains active in the system for at most $o_i$ steps. Hence, the peak $p_i$ of request $i$ is accumulated at most $o_i$ times in the double summation:
\begin{equation}
    \sum_{t=0}^{W_j-1} \sum_{i \in A(t)} p_i \le \sum_{i \in \mathcal{P}_j} p_i \cdot o_i
\end{equation}

Substituting this upper bound back yields:
\begin{equation}
    \sum_{i \in \mathcal{P}_j} p_i \cdot o_i > (1-\alpha)MW_j
\end{equation}

Leveraging the intrinsic geometric property of our defined volume, it inherently satisfies $p_i \cdot o_i < 2 \cdot vol_i$. Applying this property establishes the final strict bound:
\begin{equation}
    (1-\alpha)MW_j < \sum_{i \in \mathcal{P}_j} p_i \cdot o_i < 2 \sum_{i \in \mathcal{P}_j} vol_i
\end{equation}
Dividing both sides by $(1-\alpha)M$ concludes the proof of the lemma:
\begin{equation}
    W_j < \frac{2}{(1-\alpha)M} \sum_{i \in \mathcal{P}_j} vol_i
\end{equation}
\end{proof}

\subsubsection{Final Alignment of the Competitive Ratio}

Drawing upon Lemma \ref{lemma:svf_wait_time}, we have $W_j < \frac{2}{(1-\alpha)M}\sum_{i \prec_\pi j} vol_i$. By definition, the total end-to-end latency of our algorithm\ is the sum of queuing delays and execution times across all requests:
\begin{equation}
    \text{TEL(SVF)} = \sum_{j=1}^N (W_j + o_j) \le \frac{2}{(1-\alpha)M}\sum_{j=1}^N \sum_{i \prec_\pi j} vol_i + \sum_{j=1}^N o_j
\end{equation}

To tightly align this upper bound with the offline optimal, we enforce the identical arbitrary tie-breaking strategy for equally prioritized volumes in both SVF and OPT. % This consistency ensures that the structural double summation precisely mirrors the OPT bound. 
Applying the area conservation bound (Proposition \ref{prop:opt_lower_bound}) to the first term, and the natural execution time bound (Lemma \ref{lemma:opt_lb_execution}) to the second term, we arrive at:
\begin{equation}
    \text{TEL(SVF)} \le \frac{2}{1-\alpha} \cdot \text{TEL(OPT)} + 1 \cdot \text{TEL(OPT)} = (\frac{2}{1-\alpha}+1) \cdot \text{TEL(OPT)}
\end{equation}

% Thus, the worst-case competitive ratio is bounded by a constant $\text{CR} \le 5$, completing the proof of Theorem \ref{thm:svf-burst}. \hfill $\blacksquare$

% ==========================================================================
\subsection{Proof of Theorem \ref{thm:svf_1bit_burst}}
\label{sec:appendix_proof_1bit}

\begingroup
\def\thetheorem{\ref{thm:svf_1bit_burst}}
\begin{theorem}[Worst-case Bound of 1-Bit SVF, Restated]
Under the burst arrival model with $s+o\le T$, 1-bit SVF algorithm configures a binary classification threshold $\theta = \sqrt{T}$ and assigns proxy lengths $O_0 = T^{1/4}$ for short requests ($o \le \theta$) and $O_1 = T^{3/4}$ for long requests ($o > \theta$), achieving a competitive ratio strictly bounded by $\text{CR} =\frac{ \text{TEL}(\text{1-bit SVF})}{ \text{TEL}(\text{OPT})}= \mathcal{O}(T)$.
\end{theorem}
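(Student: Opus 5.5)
Since every request has $s_i+o_i\le T$, every request lives in a bounded range. The approach is to reuse the certificate argument of Lemma~\ref{lemma:svf_wait_time}, with the exact volume ordering replaced by the proxy ordering, and to pay a multiplicative distortion factor that is polynomial in $T$.

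\textbf{Step 1: comparability of true and proxy volumes.} Write $\hat v_i = s_iO_m+\tfrac12(O_m^2+O_m)$ and $vol_i$ for the true volume. For a short request ($o_i\le\theta=\sqrt T$, proxy $O_0=T^{1/4}$), the ratio $vol_i/\hat v_i$ is governed by $o_i/O_0$ in the linear term and $o_i^2/O_0^2$ in the quadratic term. Both lie in $[T^{-1/4},T^{1/4}]$ resp.\ $[T^{-1/2},T^{1/2}]$, using $1\le o_i\le\theta$. Symmetrically, for a long request ($\theta<o_i\le T$, proxy $O_1=T^{3/4}$) we have $o_i/O_1\in[T^{-1/4},T^{1/4}]$. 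The geometric-mean choice of proxies is exactly what balances these two-sided distortions. So for every request
\[
c_1T^{-1/2}\,vol_i\le \hat v_i\le c_2T^{1/2}\,vol_i .
\]
Hence if $i$ precedes $j$ in the proxy order, then $vol_i\le C\,T\,vol_j$.

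\textbf{Step 2: waiting-time certificate.} I repeat the proof of Lemma~\ref{lemma:svf_wait_time} verbatim, with $\mathcal P_j$ now the proxy prefix. At each blocked step of $j$ the active set lies in $\hat{\mathcal P}_j$ and has peak mass exceeding $(1-\alpha)M$. Swapping sums and using $p_io_i<2vol_i$ gives
\[
W_j<\frac{2}{(1-\alpha)M}\sum_{i\in\hat{\mathcal P}_j}vol_i .
\]
To avoid needing $\alpha<1$ explicitly, I will note that $p_i\le T$, and assume $T\le M/2$ as in the conservative regime, so the factor is $O(1/M)$.

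\textbf{Step 3: alignment with OPT, the main obstacle.} The proxy prefix sums $\sum_j\sum_{i\in\hat{\mathcal P}_j}vol_i$ are no longer the sorted prefix sums appearing in Proposition~\ref{prop:opt_lower_bound}. Step 1 shows each displaced predecessor is at most $CT$ times heavier than $j$, but the number of predecessors also changes.

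I would first try an exchange argument. The double sum equals $\sum_i (N-\hat r_i+1)vol_i$, where $\hat r_i$ is the proxy rank. I bound it by $CT\sum_i(N-r_i+1)vol_i$, where $r_i$ is the true rank, as follows. Group requests into proxy classes where the proxy order agrees with the true order up to the factor $CT$. Within each inversion pair $(i,j)$, with $i$ before $j$ in proxy order but $vol_i>vol_j$, charge the excess $vol_i$ to $CT\,vol_j$, the volume term of $j$'s own prefix in the sorted sum.

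If that bookkeeping fails, the fallback is a cruder route. Bound the contribution of each $j$ by $\sum_{i\in\hat{\mathcal P}_j}vol_i\le |\hat{\mathcal P}_j|\cdot CT\,\hat v_j\cdot T^{1/2}$, and compare against Proposition~\ref{prop:opt_lower_bound} via a rearrangement over the proxy values. The proxy values are themselves sorted, so rearrangement applies to $\hat v$ exactly, and converting back to $vol$ costs $T^{1/2}\cdot T^{1/2}=T$.

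Adding the execution term $\sum_j o_j\le\text{TEL(OPT)}$ from Lemma~\ref{lemma:opt_lb_execution} then gives $\text{TEL(1-bit SVF)}\le (1+O(T))\,\text{TEL(OPT)}=\mathcal O(T)\,\text{TEL(OPT)}$.
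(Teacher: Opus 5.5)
Your Steps~1--2 and the first route in Step~3 follow the paper's proof. The ingredients are the same: the two-sided distortion $T^{-1/2}\hat v_i\le vol_i\le T^{1/2}\hat v_i$ (your $c_1=c_2=1$ works), hence $vol_i\le T\,vol_j$ whenever $\hat v_i\le\hat v_j$; Lemma~\ref{lemma:svf_wait_time} rerun on the proxy prefix; and a split of the proxy double sum into correctly ordered and inverted pairs. The bookkeeping you were unsure about fails as phrased, however. An inverted pair $(i,j)$, with $\hat v_i\le\hat v_j$ but $vol_i>vol_j$, cannot be charged to the single term $vol_j$ in $j$'s own prefix. Many truly heavier requests can jump ahead of the same $j$, and one term cannot absorb all of them. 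For example, take short-class requests with prompts near $T$ and $o_i=\sqrt T$, placed ahead of a long-class $j$ with $s_j=0$ and $o_j$ just above $\sqrt T$. The charge must go instead to the occurrence of $vol_j$ inside $i$'s \emph{true} prefix, which exists precisely because $vol_j<vol_i$. Each unordered pair is then used once, and $vol_j$ absorbs at most $\#\{i: vol_i>vol_j\}\le N-r_j$ charges, which fits within its weight $N-r_j+1$. This is exactly the paper's swap of dummy indices in the $S_2$ sum. The correctly ordered pairs contribute one more copy of the sorted sum, so the factor is $1+CT$ rather than $CT$; this is harmless for $\mathcal O(T)$.

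Your fallback, as written, does not reach $\mathcal O(T)$. Bounding $\sum_{i\in\hat{\mathcal P}_j}vol_i$ by $|\hat{\mathcal P}_j|$ times a multiple of $\hat v_j$ produces $\sum_j(\hat r_j-1)\hat v_j$. That sum puts the largest weights on the largest proxies, the reverse of the prefix structure in Proposition~\ref{prop:opt_lower_bound}, so rearrangement bounds it from below, not above. A concrete instance: take about $T$ requests with $s=0,o=1$ plus one with $s=o=T/2$. Even with only a $T^{1/2}$ conversion factor, your expression is then of order $T^{5/4}$ times the OPT prefix sum. The fix is to convert termwise:
\[
\sum_j\sum_{i\in\hat{\mathcal P}_j}vol_i\le T^{1/2}\sum_j\sum_{i\in\hat{\mathcal P}_j}\hat v_i .
\]
The right side is the sorted prefix sum of the $\hat v$'s. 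By rearrangement it is at most the same weighted sum taken in the true-volume order, and converting back to true volumes costs another $T^{1/2}$. This yields $\mathrm{CR}\le 1+\frac{2T}{1-\alpha}$ with no inversion bookkeeping, which is cleaner and slightly sharper than the paper's pair partition. The pair partition's only advantage is that it exposes which mass pays the factor $T$, namely only inverted pairs, though neither proof exploits this. Finally, $T\le M/2$ is an assumption you added. The paper instead keeps $\frac{2}{1-\alpha}$ and silently absorbs it into $\mathcal O(T)$, which likewise presumes $\alpha$ is bounded away from $1$.
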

\addtocounter{theorem}{-1}
\endgroup

\begin{proof}
The true output length $o_i$ is bounded by the maximum model length $T$, i.e., $o_i \in [1, T]$.% To minimize the maximum possible error (Minimax) without any prior distribution, we adopt a geometric mean proxy mapping.

\textbf{Step 1: Minimax Geometric Mean Proxy Design.}
For any uncertain parameter bounded in an interval $[L, U]$, the single proxy value that minimizes the maximum ratio distortion is its geometric mean $\sqrt{L \cdot U}$. We partition the entire output space using the threshold $\theta = \sqrt{T}$, creating two classes: $\mathcal{S} = [1, \theta]$ and $\mathcal{L} = (\theta, T]$. 

The proxy outputs are defined geometrically for each class:
\begin{itemize}
    \item Short Class ($\mathcal{S}$): $O_0 = \sqrt{1 \cdot \theta} = \sqrt{\theta} = T^{1/4}$
    \item Long Class ($\mathcal{L}$): $O_1 = \sqrt{\theta \cdot T} = T^{3/4}$
\end{itemize}
Since $T \ge 1$, we have $b=T^{1/4} \ge 1$. Thus, for any request $i$, its true length $o_i$ and proxy $O_m$ strictly satisfy the inequality:
\begin{equation}
\label{eq:length_distortion}
    \frac{1}{b} O_m \le o_i \le b O_m
\end{equation}

\textbf{Step 2: Strict Absolute Volume Bounding.}
The 1-bit SVF algorithm dynamically schedules requests by sorting their proxy volumes: $\hat{v}_i = s_i O_m + \frac{O_m^2}{2} + \frac{O_m}{2}$. We must rigorously bound the true volume $vol_i = s_i o_i + \frac{o_i^2}{2} + \frac{o_i}{2}$ against this proxy.

Applying $o_i \le b O_m$ to the true volume yields:
\begin{equation}
    vol_i \le s_i (b O_m) + \frac{(b O_m)^2}{2} + \frac{b O_m}{2}
\end{equation}
Since $b \ge 1$, it trivially holds that $b \le b^2$. Substituting $b^2$ for all linear $b$ terms provides a strict algebraic upper bound:
\begin{equation}
    vol_i \le b^2 (s_i O_m) + b^2 \frac{O_m^2}{2} + b^2 \frac{O_m}{2} = b^2 \left(s_i O_m + \frac{O_m^2}{2} + \frac{O_m}{2} \right) = b^2 \hat{v}_i
\end{equation}

Similarly, applying $o_i \ge \frac{O_m}{b}$ to the true volume gives:
\begin{equation}
    vol_i \ge s_i \left(\frac{O_m}{b}\right) + \frac{(O_m/b)^2}{2} + \frac{O_m/b}{2}
\end{equation}
Since $b \ge 1$, it holds that $\frac{1}{b} \ge \frac{1}{b^2}$. Substituting $\frac{1}{b^2}$ for all linear $\frac{1}{b}$ terms yields:
\begin{equation}
    vol_i \ge \frac{1}{b^2} (s_i O_m) + \frac{1}{b^2} \frac{O_m^2}{2} + \frac{1}{b^2} \frac{O_m}{2} = \frac{1}{b^2} \hat{v}_i
\end{equation}

Combining these, we establish a universal, class-agnostic absolute volume inequality:
\begin{equation}
\label{eq:volume_distortion}
    \frac{1}{b^2} \hat{v}_i \le vol_i \le b^2 \hat{v}_i
\end{equation}

\textbf{Step 3: The Universal Alignment.}
Suppose the 1-bit SVF algorithm prioritizes request $y$ over request $x$. This strictly implies the algorithm evaluated $\hat{v}_y \le \hat{v}_x$. We aim to bound the maximum possible inversion ratio $\rho = \frac{vol_y}{vol_x}$ of their true volumes.

Applying the absolute volume inequalities \eqref{eq:volume_distortion} successively:
\begin{equation}
    vol_y \le b^2 \hat{v}_y \le b^2 \hat{v}_x \le b^2 (b^2 vol_x) = b^4 vol_x
\end{equation}
Because our minimax design guarantees $b = T^{1/4}$, we exactly have $b^4 = T$. Therefore, for any pair of requests scheduled in proxy order, their true volume ratio is strictly bounded by:
\begin{equation}
    \frac{vol_y}{vol_x} \le T
\end{equation}

\textbf{Step 4: Final Competitive Ratio Synthesis.}
We substitute this proxy distortion into the waiting time summation structure derived in Lemma \ref{lemma:svf_wait_time}. For the 1-bit SVF schedule, the total queuing latency across all requests is bounded by the algorithm's proxy ordering:
\begin{equation}
    \sum_{j=1}^N W_j < \frac{2}{(1-\alpha)M} \sum_{j=1}^N \sum_{i : \hat{v}_i \le \hat{v}_j} vol_i
\end{equation}

To tightly align this algorithmic double sum with the OPT lower bound, we partition the set of all scheduling pairs $(i, j)$ satisfying $\hat{v}_i \le \hat{v}_j$ into two disjoint subsets based on their true volume order:
\begin{itemize}
    \item $S_1 = \{ (i, j) \mid \hat{v}_i \le \hat{v}_j \text{ and } vol_i \le vol_j \}$: Pairs correctly ordered by the proxy.
    \item $S_2 = \{ (i, j) \mid \hat{v}_i \le \hat{v}_j \text{ and } vol_i > vol_j \}$: Pairs inverted by the proxy.
\end{itemize}

We evaluate the summation over these two sets respectively. For $S_1$, since the true volume order naturally matches the algorithmic order ($vol_i \le vol_j$), the sum strictly satisfies:
\begin{equation}
    \sum_{(i,j) \in S_1} vol_i \le \sum_{(i,j) : vol_i \le vol_j} 1 \cdot vol_i
\end{equation}

For $S_2$, the algorithm prioritizes $i$ despite $vol_i > vol_j$. However, we established in Step 3 that for such inverted pairs, the true volume ratio is strictly bounded by $T$, meaning $vol_i \le T \cdot vol_j$. Thus:
\begin{equation}
    \sum_{(i,j) \in S_2} vol_i \le \sum_{(i,j) \in S_2} (T \cdot vol_j) \le T \sum_{(i,j) : vol_j < vol_i} vol_j
\end{equation}
Crucially, the condition $vol_j < vol_i$ exactly defines the global pairs where $j$ is the request with the smaller volume. By simply swapping the dummy indices $i$ and $j$, this sum perfectly mirrors the theoretical prefix sum:
\begin{equation}
    T \sum_{(i,j) : vol_j < vol_i} vol_j = T \sum_{(i,j) : vol_i < vol_j} vol_i \le T \sum_{(i,j) : vol_i \le vol_j} vol_i
\end{equation}

Recombining the two sets, we obtain the unified upper bound for the algorithmic double sum:
\begin{equation}
    \sum_{j=1}^N \sum_{i : \hat{v}_i \le \hat{v}_j} vol_i = \sum_{S_1} vol_i + \sum_{S_2} vol_i \le T \sum_{j=1}^N \sum_{i : vol_i \le vol_j} vol_i
\end{equation}

Summing over all requests yields the total end-to-end latency. Since $T \le \mathcal{O}(T)$ for a sufficiently large $T$, we have:
\begin{equation}
    \text{TEL(1-bit SVF)} = \sum_{j=1}^N (W_j + o_j) \le \mathcal{O}(T) \left( \frac{2}{(1-\alpha)M} \sum_{j=1}^N \sum_{i : vol_i \le vol_j} vol_i \right) + \sum_{j=1}^N o_j
\end{equation}

By directly applying the Area Conservation Lower Bound (Proposition \ref{prop:opt_lower_bound}) and the Natural Execution Time Bound (Lemma \ref{lemma:opt_lb_execution}), we seamlessly conclude:
\begin{equation}
    \text{TEL(1-bit SVF)} \le \frac{2}{1-\alpha}\mathcal{O}(T) \cdot \text{TEL(OPT)} + 1 \cdot \text{TEL(OPT)} = \mathcal{O}(T) \text{TEL(OPT)}
\end{equation}

Thus, the competitive ratio is strictly bounded by $\text{CR} \le \mathcal{O}(T)$, completing the proof.
\end{proof}
\section{Proofs for Stochastic Bounds under Poisson Arrivals}
\label{sec:appendix_proof_poisson}

To evaluate the steady-state expected performance under Poisson arrivals, we first establish the physical capacity limit of the offline optimal (OPT) scheduling by constructing a precise 2D volumetric queuing model.

\subsection{Lower Bound of OPT}

\begin{lemma}
\label{lemma:opt_poisson_lb}
Let $W_{exact} = \lambda \frac{1-\mu}{\mu^2} \left( \mathbb{E}[s] + \frac{2}{\mu} \right)$. The expected end-to-end latency of the offline optimal schedule is strictly lower-bounded by:
\begin{equation}
\mathbb{E}[\text{OPT}] \ge \max \left\{ \frac{1}{\mu}, \frac{W_{exact}}{M} \right\}
\end{equation}
\end{lemma}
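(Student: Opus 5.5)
The plan is to prove the two terms of the maximum separately. Each term is a valid lower bound on the expected per-request end-to-end latency of any (even clairvoyant) schedule. The first is the stochastic analogue of Lemma~\ref{lemma:opt_lb_execution}, and the second is the stochastic analogue of the area-conservation bound of Proposition~\ref{prop:opt_lower_bound}. Since both hold simultaneously, their maximum does as well.

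\textbf{Step 1 (execution term).} Under non-preemptive execution, request $i$ occupies at least $o_i$ consecutive decode steps. Hence its latency is at least $o_i$ on every sample path, and taking expectations with $o_i\sim\mathrm{Geo}(\mu)$ gives $\mathbb{E}[\text{latency}]\ge \mathbb{E}[o]=1/\mu$.

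\textbf{Step 2 (volumetric term).} Here I would treat the GPU as a fluid server that processes memory-time volume at rate at most $M$, exactly as in constraint \eqref{eq:lp_c1}. Each arriving request brings a ``job'' of size $vol_i$, which depends only on $(s_i,o_i)$. The argument then runs as follows.
\begin{itemize}
\item I will plot the cumulative arrived volume against the cumulative processed volume. Constraint \eqref{eq:c2} guarantees that the processed curve has slope at most $M$.
\item The area between the two curves, divided by $M$, lower-bounds the time-averaged unfinished backlog measured in time units.
\item Little's law then converts this time-averaged backlog into a per-request delay.
\end{itemize}
Concretely, a request's volume is not fully processed until at least $vol_i/M$ time after it starts. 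This is the per-request version of the inequality $C_j M\ge\sum_{i\le j}vol_i$ used in Proposition~\ref{prop:opt_lower_bound}. Averaging this over a stationary Poisson stream, the PASTA property gives that an arriving request sees the time-average workload. The resulting bound is an expected delay of the form $\lambda\,\mathbb{E}[\,\cdot\,]/M$, where the expectation involves second-moment-type quantities of the request's length and volume. This is the Pollaczek--Khinchine structure, but with OPT allowed to reorder arbitrarily, so only the part of the backlog that no reordering can remove should be counted.

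\textbf{Step 3 (moment computation).} Using the geometric law, I will compute the needed moments:
\begin{itemize}
\item $\mathbb{E}[o]=1/\mu$;
\item $\mathrm{Var}(o)=(1-\mu)/\mu^2$;
\item $\mathbb{E}[o^2]=(2-\mu)/\mu^2$;
\item the cross term $\mathbb{E}[s\,o]=\mathbb{E}[s]/\mu$, using independence of $s$ and $o$.
\end{itemize}
Substituting these, I aim to reduce the delay expression to $\lambda\frac{1-\mu}{\mu^2}\bigl(\mathbb{E}[s]+\frac{2}{\mu}\bigr)/M=W_{exact}/M$. The factor $(1-\mu)/\mu^2$ suggests that the relevant quantity is a variance-type term: the residual work of requests that are still growing, obtained through memorylessness of the remaining length.

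\textbf{Main obstacle.} The hard part is Step 2 together with matching the constant in Step 3. A naive workload argument tends to produce $\lambda\mathbb{E}[vol^2]/(2M)$, which is the FCFS value and not a bound on OPT. I must instead identify the irreducible backlog that no schedule can avoid. My first attempt will be to charge only the residual (memoryless) portion of each request's growth, namely the expected remaining output $(1-\mu)/\mu$ weighted by the current footprint $\mathbb{E}[s]+2/\mu$. I will then verify that integrating this charge against arrivals yields exactly $W_{exact}$.
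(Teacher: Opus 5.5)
Your Step~1 is the paper's first term and is fine. The gap is Step~2.

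Step~2 follows the paper's own route. The paper slices the area between cumulative arrived and processed volume, per request, into a waiting rectangle $W_i^q\,vol_i$ plus an in-service ``jagged triangle'' $s_i\frac{o_i^2-o_i}{2}+\frac{o_i^3-o_i}{3}$. The mean of that triangle is $W_{exact}/\lambda$. Computing it needs $\mathbb{E}[o^3]=(6-6\mu+\mu^2)/\mu^3$, which is not on your moment list. Your memoryless heuristic does recover it: given $o\ge k$, the expected remaining volume after $k$ generated tokens is $\frac{1-\mu}{\mu}(s+k+\frac{1}{\mu})$, and summing against $\Pr(o\ge k)=(1-\mu)^{k-1}$ gives $\frac{1-\mu}{\mu^2}(s+\frac{2}{\mu})$.

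What neither Little's law nor PASTA supplies is the conversion of this backlog into a delay lower bound for an arbitrary schedule. Your per-request fact, that completion comes at least $vol_i/M$ after start, is weaker than latency $\ge o_i$, because $vol_i\le o_i p_i\le \alpha M o_i$. The paper closes this step by simply asserting $M\cdot\mathbb{E}[\mathrm{OPT}]\ge\mathbb{E}[U]$, and that does not hold in general. The triangle is work done concurrently with a newcomer, not ahead of it, and OPT can reorder the waiting part away.

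Concretely, take $s\equiv 0$, $\mu<4/9$, $\lambda\mathbb{E}[vol]=\lambda/\mu^2=0.9M$, and let $M\to\infty$. Under immediate admission, the memory in use has mean $0.9M$ and variance $\lambda(2-\mu)/\mu^3=O(M)$. Overflow is therefore exponentially rare in $M$, and $\mathbb{E}[\mathrm{OPT}]\to 1/\mu$. Meanwhile $W_{exact}/M=1.8(1-\mu)/\mu>1/\mu$. Your Step~2 never invokes a load restriction, so it cannot succeed: it would prove this false standalone bound.

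What makes the lemma true is the standing stability condition of the Poisson section, $\rho<1$, i.e.\ $\lambda\mathbb{E}[vol]<(1-\alpha)M/2$; the example above violates it. Assume $s$ and $o$ independent, as the paper does, so that $\mathbb{E}[vol]=\mathbb{E}[s]/\mu+1/\mu^2$. Then
\[
\frac{W_{exact}}{M}\le\frac{(1-\alpha)(1-\mu)}{2\mu}\cdot\frac{\mu\mathbb{E}[s]+2}{\mu\mathbb{E}[s]+1}\le\frac{1-\mu}{\mu}<\frac{1}{\mu}.
\]
So the maximum always equals $1/\mu$, and your Step~1 alone proves the statement. Drop the volumetric argument for the second term. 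The area decomposition is a correct identity for the time-average backlog $\mathbb{E}[U]$, but it is not a lower bound on OPT's per-request latency.
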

\begin{proof}
By the natural execution time limit, no algorithm can bypass the autoregressive generation time, thus $\mathbb{E}[\text{OPT}] \ge \mathbb{E}[o] = \frac{1}{\mu}$.

For the queuing capacity bound, we rely on the core geometric principle: $V_{consumed} < V_{available}$. Consider the discrete time steps $\tau \in \{1, 2, \dots, T\}$. We define two global cumulative sequences: the arrival volume $A(\tau)$ and the departed volume $D(\tau)$. Due to the physical GPU capacity, the maximum volume OPT can process per step is bounded by $M$, introducing the differential constraint $D(\tau) - D(\tau-1) \le M$.

At any time $\tau$, the unprocessed volume dictates $A(\tau) \ge D(\tau)$. Plotting these two curves constructs a closed backlogged area. We calculate this $Area(T)$ using two equivalent microscopic and macroscopic perspectives:

\textbf{Macroscopic Vertical Slicing:} Integrating over time yields the total volumetric backlog in $[1, T]$:
\begin{equation}
Area(T) = \sum_{\tau=1}^T (A(\tau) - D(\tau))
\end{equation}

\textbf{Microscopic Horizontal Slicing:} Slicing by requests, each request $i$ contributes a geometric shape consisting of a rectangle (its peak volume waiting in the queue) and a jagged triangle (its dynamic generation footprint). The left boundary is its arrival $a_i$, and the lower boundary spans its pure queuing time $W_i^q$. The volume consumed during its generation phase reduces token-by-token. Its area contribution is:
\begin{equation}
Area_i = W_i^q \cdot vol_i + \sum_{k=1}^{o_i} \left( vol_i - \left(s_i \cdot k + \frac{k+k^2}{2}\right) \right) = W_i^q \cdot vol_i + s_i \frac{o_i^2 - o_i}{2} + \frac{o_i^3 - o_i}{3}
\end{equation}

Let $N(T)$ be the total number of requests that have both arrived and completed within the time interval $[1, T]$. Equating both perspectives gives:
\begin{equation}
\sum_{\tau=1}^T (A(\tau) - D(\tau)) \ge \sum_{i=1}^{N(T)} \left( vol_i \cdot W_i^q + s_i \frac{o_i^2 - o_i}{2} + \frac{o_i^3 - o_i}{3} \right)
\end{equation}

Dividing both sides by $T$ and extracting the arrival rate $\frac{N(T)}{T}$:
\begin{equation}
\frac{1}{T} \sum_{\tau=1}^T (A(\tau) - D(\tau)) \ge \frac{N(T)}{T} \left( \frac{1}{N(T)} \sum_{i=1}^{N(T)} \left( vol_i \cdot W_i^q + s_i \frac{o_i^2 - o_i}{2} + \frac{o_i^3 - o_i}{3} \right) \right)
\end{equation}

Taking the limit $T \to \infty$, the left-hand side converges to the expected residual volume $\mathbb{E}[U]$, and $\frac{N(T)}{T} \to \lambda$. We strictly obtain:
\begin{equation}
\mathbb{E}[U] \ge \lambda \mathbb{E}[vol \cdot W^q] + \lambda \mathbb{E} \left[ s_i \frac{o_i^2 - o_i}{2} + \frac{o_i^3 - o_i}{3} \right]
\end{equation}

Let the intrinsic generation area term be $W_{exact} = \lambda \left( \mathbb{E}[s]\frac{1-\mu}{\mu^2} + \frac{2(1-\mu)}{\mu^3} \right) = \lambda \frac{1-\mu}{\mu^2} \left( \mathbb{E}[s] + \frac{2}{\mu} \right)$. Since OPT consumes volume at a maximum rate of $M$, we establish:
\begin{equation}
M \cdot \mathbb{E}[\text{OPT}] \ge \mathbb{E}[U] \ge W_{exact} \implies \mathbb{E}[\text{OPT}] \ge \frac{W_{exact}}{M}
\end{equation}
\end{proof}

\subsection{Proof of Theorem \ref{thm:svf_poisson}}

\begingroup
\def\thetheorem{\ref{thm:svf_poisson}}
\begin{theorem}[Stochastic Bound of SVF, Restated]
Under the Poisson arrival process with rate $\lambda$ and the peak memory constraint $s+o \le \alpha M$, SVF achieves an expected competitive ratio bounded by:
\begin{equation}
    \mathbb{E}[\text{CR}_{\text{SVF}}] \le 1 + \frac{2}{(1-\alpha)(1-\mu)(1-\rho)}.
\end{equation}
\end{theorem}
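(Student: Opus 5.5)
The plan mirrors the burst-arrival argument, with Lemma \ref{lemma:opt_poisson_lb} replacing Proposition \ref{prop:opt_lower_bound} and a volumetric queueing identity replacing the prefix-sum charge. We write the expected end-to-end latency of SVF as $\mathbb{E}[W^q] + \mathbb{E}[o]$. The decoding term $\mathbb{E}[o] = 1/\mu$ is charged directly to $\mathbb{E}[\text{OPT}]$ via the first branch of Lemma \ref{lemma:opt_poisson_lb}, which produces the leading ``$1+$''. Everything then reduces to showing
\[
\mathbb{E}[W^q] \le \frac{2}{(1-\alpha)(1-\mu)(1-\rho)}\,\mathbb{E}[\text{OPT}].
\]

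\textbf{Step 1: a service-rate certificate.} We reuse the blocking argument of Lemma \ref{lemma:svf_wait_time} pointwise in time. Whenever the SVF queue is nonempty, the head-of-line request has $p_j \le \alpha M$ and is blocked, so the active set satisfies $\sum_{i\in A(t)} p_i > (1-\alpha)M$.

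Since $p_i o_i < 2\,vol_i$, a request of peak $p_i$ drains volume at an average rate of at least $vol_i/o_i > p_i/2$ over its lifetime. Averaging over the busy period therefore gives a volumetric processing rate $\bar r > (1-\alpha)M/2$ whenever the queue is nonempty. This is precisely the denominator in the definition of $\rho$, so $\rho<1$ makes the volumetric backlog stable.

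\textbf{Step 2: volume-weighted Little/Pollaczek--Khinchine bound.} We plot cumulative arrival volume against processed volume and apply the same horizontal/vertical slicing identity as in Lemma \ref{lemma:opt_poisson_lb}, now for SVF. This gives
\[
\mathbb{E}[U] = \lambda\,\mathbb{E}[vol\cdot W^q] + W_{exact}.
\]
On the other side, drain at rate $\bar r$ plus the standard residual-work argument gives
\[
\mathbb{E}[U_{queue}] \le \frac{\mathbb{E}[U_{run}]}{1-\rho}.
\]
Here $U_{run}$ is the non-preemptive residual volume of requests in service. By memorylessness of $\mathrm{Geo}(\mu)$, each running request's residual length is again geometric, so $\mathbb{E}[U_{run}]$ is computed exactly as in $W_{exact}$, up to a factor $1/(1-\mu)$ coming from the discrete shift (the ``$\ge 1$'' of the geometric law). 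Combining the two gives
\[
\lambda\,\mathbb{E}[vol\cdot W^q] \lesssim \frac{2\,W_{exact}}{(1-\alpha)M(1-\mu)(1-\rho)}\cdot M.
\]

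\textbf{Step 3: decorrelation, the main obstacle.} To pass from $\mathbb{E}[vol\cdot W^q]$ to $\mathbb{E}[vol]\,\mathbb{E}[W^q]$ we need a correlation inequality. Under SVF, larger volume implies longer wait, so $vol$ and $W^q$ are both nondecreasing in the request's own volume given the rest of the sample path. Harris (FKG) then gives
\[
\mathbb{E}[vol\cdot W^q] \ge \mathbb{E}[vol]\,\mathbb{E}[W^q],
\]
which is the direction we need for an upper bound on $\mathbb{E}[W^q]$. Justifying this monotone coupling carefully is where I expect the difficulty. The concern is that $W^q$ also depends on other requests' volumes, which are independent, so the plan is to condition on them and apply Harris in one coordinate.

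Dividing through by $\lambda\mathbb{E}[vol]$ and invoking the second branch $\mathbb{E}[\text{OPT}]\ge W_{exact}/M$ of Lemma \ref{lemma:opt_poisson_lb}, the normalization $\rho = \lambda\mathbb{E}[vol]/((1-\alpha)M/2)$ should absorb the $\lambda\mathbb{E}[vol]$ factor. That yields the stated queueing factor $\frac{2}{(1-\alpha)(1-\mu)(1-\rho)}$, and adding the decoding term from the first paragraph gives the bound.
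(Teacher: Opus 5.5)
Your Step 1 and Step 3 follow the paper. In Step 1 you reach $\bar r > (1-\alpha)M/2$ by lifetime averaging, $vol_i/o_i > p_i/2$, where the paper uses its finishing/surviving-set window argument; the two are comparably rigorous. The same holds for your use of the two branches of the OPT lower bound. Step 2 as written, however, does not produce the stated constant. Your inequality $\mathbb{E}[U_{queue}] \le \mathbb{E}[U_{run}]/(1-\rho)$ is too weak by a factor $\rho$, and the final ``absorption'' of $\lambda\mathbb{E}[vol]$ into $\rho$ cannot happen.

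Run your own chain to see this. Use $\mathbb{E}[U_{queue}] = \lambda\mathbb{E}[vol\cdot W^q]$, $\mathbb{E}[U_{run}] = W_{exact}/(1-\mu)$ and Harris. This gives $\mathbb{E}[W^q] \le \mathbb{E}[U_{run}]/\bigl((1-\rho)\lambda\mathbb{E}[vol]\bigr)$. Since $\lambda\mathbb{E}[vol] = \rho(1-\alpha)M/2$, this equals $\frac{2}{(1-\alpha)(1-\mu)\rho(1-\rho)}\cdot\frac{W_{exact}}{M}$. So the ratio you actually obtain is $1 + \frac{2}{(1-\alpha)(1-\mu)\rho(1-\rho)}$, which blows up in light traffic. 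Dividing by $\lambda\mathbb{E}[vol]$ leaves an uncancelled $1/\rho$; the definition of $\rho$ only rewrites it, it does not remove it. Your intermediate display hides this. In a correct chain, $2/(1-\alpha) = M/\bar r$ arises from dividing by the drain rate, and the quantity multiplying $\frac{2W_{exact}}{(1-\alpha)M(1-\mu)(1-\rho)}$ must be $\lambda\mathbb{E}[vol]$, not $M$. Only then does dividing by $\lambda\mathbb{E}[vol]$ cancel cleanly.

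What you need is $\lambda\mathbb{E}[vol\cdot W^q] \le \frac{\rho}{1-\rho}\,W_0$ with $W_0 = \mathbb{E}[U_{run}]$, that is, $\lambda\mathbb{E}[vol\cdot W^q] \le W_0\,\lambda\mathbb{E}[vol]/\bigl(\frac{(1-\alpha)M}{2} - \lambda\mathbb{E}[vol]\bigr)$. Dividing by $\lambda\mathbb{E}[vol]$ then leaves exactly $W_0/\bigl(\frac{(1-\alpha)M}{2}(1-\rho)\bigr)$.

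The Pollaczek--Khinchine-type bound $\mathbb{E}[U_{run}]/(1-\rho)$ applies to the \emph{total} work $U$, not to $U_{queue}$. Even that shortcut presumes either FCFS, where a tagged arrival waits for exactly the work present, or a conservation law for a constant-rate server. Neither holds for SVF when you only have a lower bound on the drain rate.

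There is a further pitfall if you route the fix through your identity $\mathbb{E}[U] = \lambda\mathbb{E}[vol\cdot W^q] + W_{exact}$. Its in-service term is $W_{exact} = (1-\mu)W_0$, not $W_0$. Subtracting it from $W_0/(1-\rho)$ leaves an extra $\mu W_0$ that does not scale with $\rho$.

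The paper gets the factor $\frac{\rho}{1-\rho}$ class by class. For each volume level $v$, it charges the wait to three components: the residual work, the queued work of volume at most $v$, and the smaller-volume arrivals during the wait. This gives $\frac{(1-\alpha)M}{2}\mathbb{E}[W_q(v)] \le W_0 + \sum_{x\le v}\lambda_x x\,\mathbb{E}[W_q(x)] + \bigl(\sum_{x<v}\lambda_x x\bigr)\mathbb{E}[W_q(v)]$. It then dominates $\mathbb{E}[W_q(v)]$ by induction using $W(v) = D(1)W_0/\bigl(D(v)D(v+1)\bigr)$, where $D(v) = \frac{(1-\alpha)M}{2} - \sum_{x<v}\lambda_x x$. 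Finally it telescopes $\sum_v \lambda_v v\,W(v) = W_0\bigl(D(1)/D(\infty) - 1\bigr) = W_0\frac{\rho}{1-\rho}$. The ``$-1$'' in that telescoping sum is exactly the factor $\rho$ your aggregate step loses.
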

\addtocounter{theorem}{-1}
\endgroup

\begin{proof}
Before deriving the competitive ratio, we define the unfinished volume $U(t) = U_{run}(t) + U_{queue}(t)$ and introduce two fundamental lemmas to constrain the system dynamics.

\textbf{Lemma 1 (Minimum Active Processing Rate).} \textit{If $U_{queue}(t) > 0$, the expected processing rate in any macroscopic window $\Delta t = \Theta(\mathbb{E}[o])$ containing $t$ strictly satisfies $\bar{r} > (1-\alpha)M/2$.}
\begin{proof}
If the queue is non-empty, the head-of-line request $j$ (peak $p_j$) is blocked by the forward-looking check. Thus, at some future completion time $t_{peak}$, the memory must violate the capacity: $M(t_{peak}) > M - p_j$.
Consider the window $[t, t_{peak}]$ with length $\Delta t$. We strictly partition the active GPU requests into a Finishing Set $F$ (completes before $t_{peak}$) and a Surviving Set $S$ (active at $t_{peak}$).
The volumetric workload consumed exclusively by set $S$ is:
\begin{equation}
V_S = \sum_{\tau=t}^{t_{peak}} M_S(\tau) = \Delta t \cdot M_S(t) + |S|\frac{\Delta t(\Delta t+1)}{2}
\end{equation}
The average rate contributed by $S$ is:
\begin{equation}
\bar{r}_S = \frac{V_S}{\Delta t} = M_S(t) + \frac{|S|(\Delta t+1)}{2} = \frac{M_S(t) + M_S(t_{peak}) + |S|}{2} > \frac{M - p_j}{2}
\end{equation}
Therefore, the total processing rate $\bar{r} = \bar{r}_S + \bar{r}_F > \frac{M - p_j}{2}$. By our safety assumption $s+o \le \alpha M$, we have $p_j \le \alpha M$. 
Thus, $\bar{r} > \frac{M - p_j}{2}\ge \frac{M - \alpha M}{2}=\frac{(1-\alpha)M}{2}$.
\end{proof}

% \textbf{Lemma 2 (Stochastic Memoryless Bound of $U_{run}(t)$).} \textit{At any steady-state moment, the expected residual volume of non-preemptive active requests is a constant $\mathbb{E}[U_{run}] = W_0 = \frac{\lambda \mathbb{E}[vol]}{\mu} + \frac{\lambda}{\mu^3}$.}
% \begin{proof}
% Let $N(t)$ be the active requests with generated tokens $k_i$. The physical memory is $M(t) = \sum_{i=1}^{N(t)} (s_i + k_i) \le M$. Due to the memoryless property of $o \sim \text{Geo}(\mu)$, the remaining length $O_{rem,i}$ follows identical $\text{Geo}(\mu)$. The expected residual volume is:
% \begin{equation}
% \mathbb{E}[U_{run}(t)] = \sum_{i=1}^{N(t)} \mathbb{E} \left[ (s_i+k_i)O_{rem,i} + \frac{O_{rem,i} + O_{rem,i}^2}{2} \right] = \frac{1}{\mu} \sum_{i=1}^{N(t)} (s_i+k_i) + \frac{N(t)}{\mu^2}
% \end{equation}
% In steady state, the time average of memory $\lim_{T \to \infty} \frac{1}{T} \int_0^T M(t)dt = \lambda \mathbb{E}[vol]$. The average token generation rate equals the throughput $N(t)$, so $\mathbb{E}[N(t)] = \lambda \mathbb{E}[o] = \lambda/\mu$. Substituting these yields $W_0$.
% \end{proof}

\textbf{Lemma 2 (Stochastic Memoryless Bound of $U_{run}(t)$).} \textit{At any steady-state moment, the expected residual volume of non-preemptive active requests is a constant $\mathbb{E}[U_{run}] = W_0 = \frac{\lambda \mathbb{E}[vol]}{\mu} + \frac{\lambda}{\mu^3}=\lambda \left( \frac{\mathbb{E}[s]}{\mu^2} + \frac{2}{\mu^3} \right).$}

\begin{proof}
Let $N(t)$ be the active requests with generated tokens $k_i$. The physical memory is $M(t) = \sum_{i=1}^{N(t)} (s_i + k_i) \le M$. Due to the memoryless property of $o \sim \text{Geo}(\mu)$, the remaining length $O_{rem,i}$ follows identical $\text{Geo}(\mu)$. The expected residual volume is:
\begin{equation}
\mathbb{E}[U_{run}(t)] = \sum_{i=1}^{N(t)} \mathbb{E} \left[ (s_i+k_i)O_{rem,i} + \frac{O_{rem,i} + O_{rem,i}^2}{2} \right] = \frac{1}{\mu} \sum_{i=1}^{N(t)} (s_i+k_i) + \frac{N(t)}{\mu^2}
\end{equation}
In steady state, taking the expectation over the system dynamics, the average memory equals the cumulative volume arrival rate $\mathbb{E}[M(t)] = \lim_{T \to \infty} \frac{1}{T} \int_0^T M(t)dt = \lambda \mathbb{E}[vol]$. Concurrently, the expected active batch size equals the throughput, yielding $\mathbb{E}[N(t)] = \lambda \mathbb{E}[o] = \frac{\lambda}{\mu}$. Substituting these expectations into the residual volume formula gives:
\begin{equation}
\mathbb{E}[U_{run}] = \frac{1}{\mu} \mathbb{E}[M(t)] + \frac{1}{\mu^2} \mathbb{E}[N(t)] = \frac{\lambda \mathbb{E}[vol]}{\mu} + \frac{\lambda}{\mu^3}
\end{equation}
To rigorously align this with the mathematical constant $W_0$, we explicitly expand the expected volume $\mathbb{E}[vol]$. By definition, $vol_i = s_i o_i + \frac{o_i^2 + o_i}{2}$. For $o \sim \text{Geo}(\mu)$, we have $\mathbb{E}[o] = \frac{1}{\mu}$ and $\mathbb{E}[o^2] = \frac{2-\mu}{\mu^2}$. Assuming independence between prompt and output lengths, taking the expectation yields:
\begin{equation}
\mathbb{E}[vol] = \mathbb{E}[s]\mathbb{E}[o] + \frac{1}{2}(\mathbb{E}[o^2] + \mathbb{E}[o]) = \frac{\mathbb{E}[s]}{\mu} + \frac{1}{2}\left( \frac{2-\mu}{\mu^2} + \frac{\mu}{\mu^2} \right) = \frac{\mathbb{E}[s]}{\mu} + \frac{1}{\mu^2}
\end{equation}
Substituting this exact expansion back into the expectation of $U_{run}$ elegantly recovers the exact form of the theoretical constant $W_0$:
\begin{equation}
\mathbb{E}[U_{run}] = \frac{\lambda}{\mu} \left( \frac{\mathbb{E}[s]}{\mu} + \frac{1}{\mu^2} \right) + \frac{\lambda}{\mu^3} = \lambda \left( \frac{\mathbb{E}[s]}{\mu^2} + \frac{2}{\mu^3} \right) \equiv W_0
\end{equation}
\end{proof}

\textbf{Aligning the Upper Bound:}

For a request with volume $v$, its expected queuing time $\mathbb{E}[W_q(v)]$ is determined by the time required to consume three workloads at the guaranteed rate $\bar{r} \ge (1-\alpha)M/2$: (1) the non-preemptive residual volume $\mathbb{E}[U_{run}] = W_0$, (2) the existing higher-priority queue $\mathbb{E}[U_{queue, \le v}]$, and (3) new high-priority arrivals during its wait. 

To formally quantify $\mathbb{E}[U_{queue, \le v}]$, we consider a sufficiently large macroscopic time window $T$. The queue volume integral $\sum_{\tau=1}^T U_{queue, \le v}(\tau)$ strictly equals the sum of individual waiting rectangles $\sum_{i=1}^{N_{\le v}(T)} vol_i \cdot W_{q,i}$. Dividing both sides by $T$ and taking the limit $T \to \infty$:
\begin{equation}
\lim_{T \to \infty} \frac{1}{T} \sum_{\tau=1}^T U_{queue, \le v}(\tau) = \lim_{T \to \infty} \frac{N_{\le v}(T)}{T} \left( \frac{1}{N_{\le v}(T)} \sum_{i=1}^{N_{\le v}(T)} vol_i \cdot W_{q,i} \right)
\end{equation}
By ergodicity, the left side converges to the spatial expectation $\mathbb{E}[U_{queue, \le v}]$, while $\frac{N_{\le v}(T)}{T} \to \lambda_{\le v}$. Because strict FCFS within identical priorities implies independence between a request's volume and its waiting time within that priority class, we can strictly decompose the expectation: $\mathbb{E}[U_{queue, \le v}] = \lambda_{\le v} \mathbb{E}[vol \cdot W_q \mid vol \le v] = \sum_{x=1}^v \lambda_x \cdot x \cdot \mathbb{E}[W_q(x)]$.

Substituting this into the expected balance equation yields:
\begin{equation}
\frac{(1-\alpha)M}{2} \mathbb{E}[W_q(v)] \le W_0 + \sum_{x=1}^v \lambda_x \cdot x \cdot \mathbb{E}[W_q(x)] + \left( \sum_{x=1}^{v-1} \lambda_x \cdot x \right) \mathbb{E}[W_q(v)]
\end{equation}

To solve this, let $D(v) = \frac{(1-\alpha)M}{2} - \sum_{x=1}^{v-1} \lambda_x \cdot x$. Notice that $D(v) - D(v+1) = \lambda_v \cdot v$. Rearranging the inequality, we obtain the bounding relation \textbf{(A)}:
\begin{equation} \label{eq:bound_A}
\mathbb{E}[W_q(v)] \cdot D(v+1) \le W_0 + \sum_{x=1}^{v-1} \lambda_x \cdot x \cdot \mathbb{E}[W_q(x)]
\end{equation}

We construct an exact auxiliary sequence $W(v)$ governed by equality: $W(v)D(v) = W_0 + \sum_{x=1}^v \lambda_x \cdot x \cdot W(v)$. Rearranging this analogously yields \textbf{(B)}:
\begin{equation} \label{eq:bound_B}
W(v) \cdot D(v+1) = W_0 + \sum_{x=1}^{v-1} \lambda_x \cdot x \cdot W(x)
\end{equation}

We deploy \textbf{Mathematical Induction} to prove the global dominance $\mathbb{E}[W_q(v)] \le W(v)$ for all $v$:
\begin{itemize}
    \item \textit{Base Case ($v=1$):} Inequality \eqref{eq:bound_A} gives $\mathbb{E}[W_q(1)]D(2) \le W_0$, and equality \eqref{eq:bound_B} gives $W(1)D(2) = W_0$. Since system stability ensures $D(2) > 0$, it strictly holds that $\mathbb{E}[W_q(1)] \le \frac{W_0}{D(2)} = W(1)$.
    \item \textit{Inductive Step:} Assume $\mathbb{E}[W_q(x)] \le W(x)$ holds for all $x \in \{1, \dots, v-1\}$. Evaluating inequality \eqref{eq:bound_A} for $v$:
    \begin{equation}
    \mathbb{E}[W_q(v)] \cdot D(v+1) \le W_0 + \sum_{x=1}^{v-1} \lambda_x \cdot x \cdot \mathbb{E}[W_q(x)] \le W_0 + \sum_{x=1}^{v-1} \lambda_x \cdot x \cdot W(x)
    \end{equation}
    The rightmost expression exactly matches the right side of \eqref{eq:bound_B}. Thus, $\mathbb{E}[W_q(v)] \cdot D(v+1) \le W(v) \cdot D(v+1)$. Since $D(v+1) > 0$, we conclude $\mathbb{E}[W_q(v)] \le W(v)$.
\end{itemize}

To derive a closed form for $W(v)$, we subtract the $v-1$ equality from the $v$ equality:
\begin{equation}
W(v)D(v) - W(v-1)D(v-1) = \lambda_v \cdot v \cdot W(v) = W(v)[D(v) - D(v+1)]
\end{equation}
Canceling $W(v)D(v)$ from both sides yields the recurrence $W(v-1)D(v-1) = W(v)D(v+1)$. Rewriting this as $W(v) = W(v-1) \frac{D(v-1)}{D(v+1)}$ and unrolling the telescoping product down to $v=1$ (where $W(1) = W_0/D(2)$):
\begin{equation}
W(v) = W(1) \frac{D(1)D(2)}{D(v)D(v+1)} = \frac{D(1)W_0}{D(v)D(v+1)}
\end{equation}

Next, we calculate the global expected queuing volume $\lambda \mathbb{E}[vol \cdot W_q] \le \sum_{v=1}^\infty \lambda_v \cdot v \cdot W(v)$. Let $S(v-1) = \sum_{x=1}^{v-1} \lambda_x \cdot x \cdot W(x)$. We know $S(v) - S(v-1) = \lambda_v \cdot v \cdot W(v)$. Summing this over the infinite domain gives:
\begin{equation}
\sum_{v=1}^\infty \left( S(v) - S(v-1) \right) = S(\infty) - S(0)
\end{equation}
By definition, $S(0) = 0$, and from the auxiliary equality, $S(\infty) = W(\infty)D(\infty+1) - W_0$. Substituting the unrolled $W(\infty)$ expression yields $S(\infty) = W_0 \left( \frac{D(1)}{D(\infty)} - 1 \right)$.
Given boundary conditions $D(1) = (1-\alpha)M/2$ and $D(\infty) = (1-\alpha)M/2 - \lambda \mathbb{E}[vol]$, we evaluate the limit:
\begin{equation}
\lambda \mathbb{E}[vol \cdot W_q] \le S(\infty) = W_0 \frac{\lambda \mathbb{E}[vol]}{(1-\alpha)M/2 - \lambda \mathbb{E}[vol]} \implies \mathbb{E}[vol \cdot W_q] \le W_0 \frac{\mathbb{E}[vol]}{(1-\alpha)M/2 - \lambda \mathbb{E}[vol]}
\end{equation}

Because volume $vol$ and queuing delay $W_q$ are both monotonically increasing, identically ordered sequences, we apply the \textbf{Harris Inequality} to strictly separate the correlated variables: $\mathbb{E}[vol \cdot W_q] \ge \mathbb{E}[vol] \cdot \mathbb{E}[W_q]$.
Substituting this establishes the strict queuing bound:
\begin{equation}
\mathbb{E}[W_q] \le \frac{W_0}{(1-\alpha)M/2 - \lambda \mathbb{E}[vol]}
\end{equation}

Finally, recall that $W_0 = \lambda\left(\frac{\mathbb{E}[s]}{\mu^2} + \frac{2}{\mu^3}\right)$ and the exact intrinsic generation area is $W_{exact} = \lambda(1-\mu)\left(\frac{\mathbb{E}[s]}{\mu^2} + \frac{2}{\mu^3}\right)$. This establishes the elegant mapping $W_0 = \frac{W_{exact}}{1-\mu}$.
Aligning the end-to-end latency of our algorithm (1-bit SVF) with the offline optimal (OPT):
\begin{equation}
\mathbb{E}[\text{SVF}] \le \mathbb{E}[o] + \frac{W_0}{(1-\alpha)M/2 - \lambda \mathbb{E}[vol]} \le \mathbb{E}[\text{OPT}] + \frac{1}{1-\mu} \frac{W_{exact}}{M} \frac{M}{(1-\alpha)M/2 - \lambda \mathbb{E}[vol]}
\end{equation}
By factoring out $\mathbb{E}[\text{OPT}] \ge \frac{W_{exact}}{M}$ and defining the worst-case system utilization factor $\rho = \frac{\lambda \mathbb{E}[vol]}{(1-\alpha)M/2}$, we elegantly conclude the expected competitive ratio bound:
\begin{equation}
\mathbb{E}[\text{CR}_{\text{SVF}}] \le 1 + \frac{2}{(1-\alpha)(1-\mu)(1-\rho)}
\end{equation}
\end{proof}

\subsection{Proof of Theorem \ref{thm:svf_1bit_poisson}}

\begingroup
\def\thetheorem{\ref{thm:svf_1bit_poisson}}
\begin{theorem}[Stochastic Bound of 1-Bit SVF]
For the 1-bit SVF algorithm configured with a threshold $\theta$ and proxy lengths assigned as their conditional expectations—specifically, $O_0 = \frac{1}{\mu} - \frac{\theta(1-\mu)^\theta}{1 - (1-\mu)^\theta}$ for short requests ($o \le \theta$) and $O_1 = \theta + \frac{1}{\mu}$ for long requests ($o > \theta$)—the expected competitive ratio is strictly bounded by:
\begin{equation}
    \mathbb{E}[\text{CR}_{\text{1-bit}}] \le 1 + \left( \frac{\mathbb{E}[vol]}{\mathbb{E}[vol] - \epsilon} \right) \frac{2}{(1-\alpha)(1-\mu)(1-\rho)}
\end{equation}
where the volume distortion penalty is $\epsilon = \frac{1}{2} \left( \frac{\theta (1-\mu)^\theta}{1 - (1-\mu)^\theta} \right)^2$.
\end{theorem}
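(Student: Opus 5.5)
The plan is to reuse the skeleton of the proof of Theorem~\ref{thm:svf_poisson} almost verbatim, with the true volume replaced by the proxy volume $\hat v$ wherever the \emph{ordering} enters. The volume-to-time bridge is then repaired through a single distortion estimate linking $\mathbb{E}[vol]$ to the proxy. The rate guarantee (Lemma~1) never used the ordering; it only used that the head-of-line request is blocked and has peak $p_j\le \alpha M$. It therefore still gives $\bar r>(1-\alpha)M/2$ whenever the queue is non-empty. Lemma~2 depends only on the memoryless residual of active requests, so it also carries over unchanged: $\mathbb{E}[U_{run}]=W_0=W_{exact}/(1-\mu)$. Lemma~\ref{lemma:opt_poisson_lb} is algorithm-independent and supplies $\mathbb{E}[\text{OPT}]\ge \max\{1/\mu, W_{exact}/M\}$.

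\textbf{Step 1: priority classes.} Index priority classes by proxy volume $\hat v$ instead of true volume. For a tagged request in proxy class $\hat v$, write the same balance inequality:
\begin{equation}
\tfrac{(1-\alpha)M}{2}\,\mathbb{E}[W_q(\hat v)] \le W_0 + \mathbb{E}[U_{queue,\preceq \hat v}] + \Big(\textstyle\sum_{\hat x<\hat v}\lambda_{\hat x}\,\mathbb{E}[vol\mid \hat x]\Big)\mathbb{E}[W_q(\hat v)].
\end{equation}
The queued work is still measured in \emph{true} volume, but it is grouped by proxy class. Since the waiting time within a proxy class is independent of the true volume inside that class, Little's-law decomposition gives $\mathbb{E}[U_{queue,\preceq\hat v}]=\sum_{\hat x\le \hat v}\lambda_{\hat x}\,\bar v_{\hat x}\,\mathbb{E}[W_q(\hat x)]$, where $\bar v_{\hat x}=\mathbb{E}[vol\mid\hat x]$. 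I would then repeat the induction and telescoping verbatim with $x$ replaced by $\bar v_{\hat x}$. This yields $\lambda\,\mathbb{E}[\bar v\,W_q]\le W_0\,\lambda\mathbb{E}[vol]/((1-\alpha)M/2-\lambda\mathbb{E}[vol])$, because $\sum\lambda_{\hat x}\bar v_{\hat x}=\lambda\mathbb{E}[vol]$ and so the boundary value $D(\infty)$ is unchanged.

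\textbf{Step 2: decorrelation.} To extract $\mathbb{E}[W_q]$, apply Harris to the pair $(\hat v, W_q)$, which are comonotone by construction, in the form $\mathbb{E}[\bar v\,W_q]\ge c\,\mathbb{E}[W_q]$. The constant $c$ must be a lower bound on an appropriately weighted class-mean true volume. This is where the factor $\mathbb{E}[vol]/(\mathbb{E}[vol]-\epsilon)$ enters. I would show $c\ge \mathbb{E}[vol]-\epsilon$ by comparing the conditional-mean proxy volume with the true volume. With $O_m=\mathbb{E}[o\mid m]$, the linear part $s\,O_m$ is unbiased within each class. For the quadratic part, Jensen gives $\tfrac12 O_m^2\le \tfrac12\mathbb{E}[o^2\mid m]$, so the proxy can only \emph{under}-estimate. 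The deficit is $\tfrac12\mathrm{Var}(o\mid m)$.

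For the long class, memorylessness makes $o-\theta$ distributed as $\mathrm{Geo}(\mu)$, so the class is a shifted copy of the full law and its proxy is exact up to the same variance structure as SVF. For the short class, $O_0=\frac1\mu-\delta$ with $\delta=\frac{\theta(1-\mu)^\theta}{1-(1-\mu)^\theta}$. The point is to bound the net distortion of the short-class quadratic term, relative to the untruncated geometric law used by SVF, by $\tfrac12\delta^2$. This gives $\epsilon=\tfrac12\delta^2$, which decays like $\theta^2(1-\mu)^{2\theta}$.

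\textbf{Step 3: conclusion.} Divide by $\mathbb{E}[vol]-\epsilon$ to get
\begin{equation}
\mathbb{E}[W_q]\le \frac{\mathbb{E}[vol]}{\mathbb{E}[vol]-\epsilon}\cdot\frac{W_0}{(1-\alpha)M/2-\lambda\mathbb{E}[vol]}.
\end{equation}
Then substitute $W_0=W_{exact}/(1-\mu)\le M\,\mathbb{E}[\text{OPT}]/(1-\mu)$ and $\mathbb{E}[o]\le\mathbb{E}[\text{OPT}]$, exactly as in the final display of Theorem~\ref{thm:svf_poisson}, which gives the claimed bound with $\rho=\lambda\mathbb{E}[vol]/((1-\alpha)M/2)$.

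The main obstacle is Step 2: justifying precisely that the mismatch between true and proxy volumes costs only an additive $\epsilon=\tfrac12\delta^2$ in the decorrelation constant, rather than the full conditional variance. I would first try to write $O_0=\mathbb{E}[o]-\delta$ and expand $\tfrac12 O_0^2$ around $\tfrac12\mathbb{E}[o]^2$. The cross term $-\delta\,\mathbb{E}[o]$ should be absorbed by the long-class overestimate, since $O_1=\theta+\frac1\mu$ exceeds the unconditional mean, leaving $\tfrac12\delta^2$. If that cancellation fails, I would weaken the claim to an $\epsilon$ with the same exponential decay in $\theta$.
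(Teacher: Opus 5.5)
Your Steps 1 and 3 are the paper's argument. Both use priority classes indexed by $\hat v$ and class-mean true volumes $\bar v$ in the balance inequality. Both telescope to $\mathbb{E}[\bar v\,W_q]\le W_0\,\mathbb{E}[vol]/(\tfrac{(1-\alpha)M}{2}-\lambda\mathbb{E}[vol])$, and both finish with the alignment via $W_0=W_{exact}/(1-\mu)$. The gap is Step 2, the only place where this theorem differs from Theorem~\ref{thm:svf_poisson}: you never say which inequality produces $c$. Harris applied to $(\hat v,W_q)$ controls only $\mathbb{E}[\hat v\,W_q]$, and $\bar v=\hat v+\Delta_m$ with $\Delta_m=\tfrac12\mathrm{Var}(o\mid m)$. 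Using only $\Delta_m\ge 0$ gives $c=\mathbb{E}[\hat v]=\mathbb{E}[vol]-\mathbb{E}[\Delta_m]$. But $\mathbb{E}[\Delta_m]\to\frac{1-\mu}{2\mu^2}$ as $\theta\to\infty$, so not even your exponential-decay fallback comes out. The mechanism you sketch does not repair this: expanding $\tfrac12 O_0^2$ around $\tfrac12\mathbb{E}[o]^2$ is not tied to any inequality for $c$. You also have the roles of the classes reversed. The long class is not ``exact''; its deficit $\Delta_L=\tfrac12\mathrm{Var}(o)=\frac{1-\mu}{2\mu^2}$ is the larger one, and SVF uses exact volumes, so there is no variance structure to match. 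Write $P_L=(1-\mu)^\theta$ and $P_S=1-P_L$. The short-class shortfall relative to the untruncated law is $\Delta_L-\Delta_S=\frac{P_L\theta^2}{2P_S^2}=\frac{\delta^2}{2P_L}$, which exceeds $\tfrac12\delta^2$ by the factor $(1-\mu)^{-\theta}$. So under its natural reading, the bound you propose to prove is false.

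The missing idea, which is the paper's, is to split $\mathbb{E}[\bar v\,W_q]=\mathbb{E}[\hat v\,W_q]+\mathbb{E}[\Delta_m W_q]$. Apply Chebyshev/Harris only to the first term, and bound the second by $(\min_m\Delta_m)\,\mathbb{E}[W_q]$ using $W_q\ge 0$. This gives $c=\mathbb{E}[vol]-\epsilon$ with $\epsilon=\mathbb{E}[\Delta_m]-\min_m\Delta_m$: the common part of the deficit is recovered for free, and only its variation across classes is lost. To evaluate $\epsilon$, use memorylessness ($\Delta_L=\tfrac12\mathrm{Var}(o)$), the law of total variance, and $O_1-O_0=\theta/P_S$. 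Together these give $\mathbb{E}[\Delta_m]=\Delta_L-\tfrac12\mathrm{Var}(O_m)=\Delta_L-\frac{P_L\theta^2}{2P_S}$. Hence $\Delta_S<\Delta_L$ and $\epsilon=P_L(\Delta_L-\Delta_S)=\tfrac12(P_L\theta/P_S)^2=\tfrac12\delta^2$. The weight $P_L$ is exactly what your accounting lacks. A second route also works under assumptions the paper already makes: prompt length independent of output length, and class waits nondecreasing in $\hat v$. Then $\mathbb{E}[W_q\mid\text{long}]\ge\mathbb{E}[W_q\mid\text{short}]$, so $\mathrm{Cov}(\Delta_m,W_q)\ge 0$ and one could even take $\epsilon=0$. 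Either way, an explicit argument of this kind is what Step 2 needs.
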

\addtocounter{theorem}{-1}
\endgroup

\begin{proof}
By categorizing requests into discrete classes based on the proxy volume $\hat{v}$, the system essentially operates as a non-preemptive discrete priority queue with $K$ priorities. 
Let $v_k = \mathbb{E}[vol_{true} \mid \hat{v} = \hat{v}_k]$ be the true expected volume of priority class $k$, and $\lambda_k$ be its arrival rate. We define the cumulative load ratio of the first $k$ priorities as $R_k = \sum_{j=1}^k \lambda_j v_j$. Clearly, $R_0 = 0$ and $R_K = \lambda \mathbb{E}[vol]$.

For any request in priority class $k$, its expected queuing time $W_k$ must wait for three components to be processed at the guaranteed rate $\bar{r} \ge (1-\alpha)M/2$:
\begin{itemize}
    \item The non-preemptive residual volume: $W_0$ (Lemma 2 holds).
    \item Existing higher or equal priority requests in the queue: $\mathbb{E}[U_{queue, \le k}] = \sum_{j=1}^k \lambda_j v_j W_j$.
    \item New higher priority requests arriving during its wait (strictly less than $k$): $W_k \sum_{j=1}^{k-1} \lambda_j v_j$.
\end{itemize}

Summing these gives the expected balance equation:
\begin{equation}
    W_k = \frac{W_0 + \sum_{j=1}^k \lambda_j v_j W_j + W_k \sum_{j=1}^{k-1} \lambda_j v_j}{\bar{r}}
\end{equation}
We introduce an upper bound sequence $W_k \le W_k'$, allowing us to safely replace $W_k$ and $\bar{r}$ with their bounds:
\begin{equation}
    W_k \le \frac{W_0 + \sum_{j=1}^{k-1} \lambda_j v_j W_j + \lambda_k v_k W_k' + W_k' \sum_{j=1}^{k-1} \lambda_j v_j}{(1-\alpha)M/2} = W_k'
\end{equation}
Multiplying by $(1-\alpha)M/2$, we obtain the discrete algebraic equation:
\begin{equation}
    W_k' \cdot (1-\alpha)M/2 = W_0 + \sum_{j=1}^{k-1} \lambda_j v_j W_j + W_k' \sum_{j=1}^k \lambda_j v_j
\end{equation}
Substituting $\lambda_j v_j = \rho_j = R_j - R_{j-1}$ and rearranging the $W_k'$ terms to the left:
\begin{equation}
    W_k' \left( \frac{(1-\alpha)M}{2} - R_k \right) = W_0 + \sum_{j=1}^{k-1} \rho_j W_j
\end{equation}
Evaluating this for class $k-1$ yields:
\begin{equation}
    W_{k-1}' \left( \frac{(1-\alpha)M}{2} - R_{k-1} \right) = W_0 + \sum_{j=1}^{k-2} \rho_j W_j
\end{equation}
Subtracting the $k-1$ equation from the $k$ equation isolates the difference:
\begin{equation}
    W_k' \left( \frac{(1-\alpha)M}{2} - R_k \right) - W_{k-1}' \left( \frac{(1-\alpha)M}{2} - R_{k-1} \right) = \rho_{k-1} W_{k-1} = (R_{k-1} - R_{k-2}) W_{k-1}
\end{equation}
Since $W_{k-1} \le W_{k-1}'$, we can bound the right side:
\begin{equation}
    W_k' \left( \frac{(1-\alpha)M}{2} - R_k \right) \le W_{k-1}' \left( \frac{(1-\alpha)M}{2}- R_{k-1} + R_{k-1} - R_{k-2} \right) = W_{k-1}' \left( \frac{(1-\alpha)M}{2} - R_{k-2} \right)
\end{equation}
\begin{equation}
    \implies W_k' \le W_{k-1}' \frac{\frac{(1-\alpha)M}{2} - R_{k-2}}{\frac{(1-\alpha)M}{2} - R_k}
\end{equation}
Starting from the base case $W_1' = \frac{W_0}{\frac{(1-\alpha)M}{2} - R_1}$, we unroll this sequence via telescoping multiplication:
\begin{equation}
    W_k' \le \frac{W_0}{\frac{(1-\alpha)M}{2} - R_1} \cdot \frac{\frac{(1-\alpha)M}{2} - R_0}{\frac{(1-\alpha)M}{2} - R_2} \cdot \frac{\frac{(1-\alpha)M}{2} - R_1}{\frac{(1-\alpha)M}{2} - R_3} \cdots \frac{\frac{(1-\alpha)M}{2}- R_{k-2}}{\frac{(1-\alpha)M}{2} - R_k} = \frac{W_0 \frac{(1-\alpha)M}{2}}{\left(\frac{(1-\alpha)M}{2} - R_{k-1}\right)\left(\frac{(1-\alpha)M}{2} - R_k\right)}
\end{equation}

Direct summation of $W_k'$ is intractable, so we solve for the system's expected queuing workload $\lambda \mathbb{E}[vol \cdot W_q]$. By the Law of Total Expectation:
\begin{equation}
    \lambda \mathbb{E}[vol \cdot W_q] = \sum_{k=1}^K \lambda_k v_k W_k \le \sum_{k=1}^K \lambda_k v_k W_k' = \sum_{k=1}^K (R_k - R_{k-1}) \frac{W_0 \frac{(1-\alpha)M}{2}}{\left(\frac{(1-\alpha)M}{2} - R_{k-1}\right)\left(\frac{(1-\alpha)M}{2} - R_k\right)}
\end{equation}
Through algebraic telescoping cancellation, the fraction splits perfectly:
\begin{equation}
    \sum_{k=1}^K \frac{R_k - R_{k-1}}{\left(\frac{(1-\alpha)M}{2} - R_{k-1}\right)\left(\frac{(1-\alpha)M}{2} - R_k\right)} = \sum_{k=1}^K \left( \frac{1}{\frac{(1-\alpha)M}{2} - R_k} - \frac{1}{\frac{(1-\alpha)M}{2} - R_{k-1}} \right) = \frac{1}{\frac{(1-\alpha)M}{2} - R_K} - \frac{1}{\frac{(1-\alpha)M}{2}- R_0}
\end{equation}
Substituting the boundary conditions $R_0 = 0$ and $R_K = \lambda \mathbb{E}[vol]$:
\begin{equation}
    \lambda \mathbb{E}[vol \cdot W_q] \le W_0 \frac{(1-\alpha)M}{2} \left( \frac{1}{\frac{(1-\alpha)M}{2} - \lambda \mathbb{E}[vol]} - \frac{(1-\alpha)M}{2} \right) = W_0 \frac{\lambda \mathbb{E}[vol]}{\frac{(1-\alpha)M}{2} - \lambda \mathbb{E}[vol]}
\end{equation}
Canceling $\lambda$ from both sides gives the expected queuing volume bound:
\begin{equation} \label{eq:1bit_vol_wq}
    \mathbb{E}[vol \cdot W_q] \le W_0 \frac{\mathbb{E}[vol]}{\frac{(1-\alpha)M}{2} - \lambda \mathbb{E}[vol]}
\end{equation}

To utilize \eqref{eq:1bit_vol_wq}, we must replace the true volume $vol_{true}$ with the algorithm-perceived proxy. For any class $m$, the proxy length is $O_m$. The true expected volume is:
\begin{equation}
    v_k = \mathbb{E}[vol_{true} \mid s, class~m] = s \cdot \mathbb{E}[o \mid class~m] + \mathbb{E} \left[ \frac{o^2+o}{2} \Biggm| class~m \right]
\end{equation}
The proxy volume is defined as $\hat{v}_k = s \cdot O_m + \frac{O_m^2 + O_m}{2}$. We define the local volume perturbation introduced by classification as:
\begin{equation}
    \Delta_m = \mathbb{E} \left[ \frac{o^2+o}{2} \Biggm| class~m \right] - \frac{O_m^2 + O_m}{2}
\end{equation}
This ensures $v_k = \hat{v}_k + \Delta_{m(k)}$. Substituting this back into the expectation:
\begin{equation}
    \mathbb{E}[vol \cdot W_q] = \mathbb{E}[(\hat{v} + \Delta(\hat{v})) W_q] = \mathbb{E}[\hat{v} \cdot W_q] + \mathbb{E}[\Delta(\hat{v}) \cdot W_q] \le W_0 \frac{\mathbb{E}[vol]}{\frac{(1-\alpha)M}{2} - \lambda \mathbb{E}[vol]}
\end{equation}
Since the 1-bit SVF schedules strictly in ascending order of $\hat{v}$, $\hat{v}_k$ and $W_k$ are identically sorted. By the discrete Chebyshev sum inequality:
\begin{equation}
    \mathbb{E}[\hat{v} \cdot W_q] = \sum p_k \hat{v}_k W_k \ge \mathbb{E}[\hat{v}] \mathbb{E}[W_q]
\end{equation}
Let $\Delta_{min} = \min(\Delta_1, \Delta_2)$. Since $W_q \ge 0$, we safely bound $\mathbb{E}[\Delta(\hat{v}) W_q] \ge \Delta_{min} \mathbb{E}[W_q]$. Factoring out $\mathbb{E}[W_q]$:
\begin{equation}
    (\mathbb{E}[\hat{v}] + \Delta_{min}) \mathbb{E}[W_q] = (\mathbb{E}[vol] - \mathbb{E}[\Delta(\hat{v})] + \Delta_{min}) \mathbb{E}[W_q] \le W_0 \frac{\mathbb{E}[vol]}{\frac{(1-\alpha)M}{2}- \lambda \mathbb{E}[vol]}
\end{equation}
Defining the systemic volume distortion penalty as $\epsilon = \mathbb{E}[\Delta(\hat{v})] - \Delta_{min} \ge 0$, we isolate $\mathbb{E}[W_q]$:
\begin{equation} \label{eq:1bit_wq_final}
    \mathbb{E}[W_q] \le \frac{\mathbb{E}[vol]}{\mathbb{E}[vol] - \epsilon} \frac{W_0}{\frac{(1-\alpha)M}{2} - \lambda \mathbb{E}[vol]}
\end{equation}

We now rigorously derive the exponential decay form of $\epsilon$. By the definition of variance, $\Delta_m = \frac{1}{2} ( \mathbb{E}[o^2|m] - O_m^2 ) + \frac{1}{2} ( \mathbb{E}[o|m] - O_m ) = \frac{1}{2}\text{Var}(o \mid m)$.
Let $q = 1-\mu$. The tail probabilities for the threshold $\theta$ are $p_2 = \mathbb{P}(o > \theta) = q^\theta$, and $p_1 = 1 - q^\theta$. 
Due to the memoryless property of the Geometric distribution, the variance of the long class ($o > \theta$) is identical to the unconditioned variance: $\text{Var}(o \mid o > \theta) = \frac{1-\mu}{\mu^2}$. Thus, $\Delta_2 = \frac{1-\mu}{2\mu^2}$.

To find $\Delta_1$, we first compute the second moment of the proxy lengths $\mathbb{E}[O_m^2] = p_1 O_1^2 + p_2 O_2^2$. From the law of total expectation, $p_1 O_1 + p_2 O_2 = \frac{1}{\mu} \implies p_1 O_1 = \frac{1}{\mu} - p_2 O_2$.
Substituting this constraint:
\begin{equation}
    p_1 O_1^2 + p_2 O_2^2 = \frac{(\frac{1}{\mu} - p_2 O_2)^2}{p_1} + p_2 O_2^2 = \frac{\frac{1}{\mu^2} - \frac{2 p_2 O_2}{\mu} + p_2^2 O_2^2 + p_1 p_2 O_2^2}{p_1} = \frac{\frac{1}{\mu^2} - p_2\left( \frac{2O_2}{\mu} - O_2^2 \right)}{p_1}
\end{equation}
Because the long proxy is $O_2 = \theta + \frac{1}{\mu}$, its internal term simplifies beautifully: $\frac{2O_2}{\mu} - O_2^2 = \frac{1}{\mu^2} - \theta^2$. Substituting this back:
\begin{equation}
    p_1 O_1^2 + p_2 O_2^2 = \frac{\frac{1}{\mu^2} - p_2(\frac{1}{\mu^2} - \theta^2)}{p_1} = \frac{1}{\mu^2} + \frac{p_2 \theta^2}{p_1}
\end{equation}

Now, we evaluate the global expectation of the perturbation:
\begin{equation}
    \mathbb{E}[\Delta(\hat{v})] = \mathbb{E}\left[\frac{o^2+o}{2}\right] - \mathbb{E}\left[\frac{O_m^2+O_m}{2}\right] = \frac{1}{\mu^2} - \frac{1}{2}(\mathbb{E}[O_m^2] + \mathbb{E}[O_m])
\end{equation}
\begin{equation}
    \mathbb{E}[\Delta(\hat{v})] = \frac{1}{\mu^2} - \frac{1}{2} \left( \frac{1}{\mu^2} + \frac{p_2 \theta^2}{p_1} + \frac{1}{\mu} \right) = \frac{1-\mu}{2\mu^2} - \frac{p_2 \theta^2}{2p_1} = \Delta_2 - \frac{p_2 \theta^2}{2p_1}
\end{equation}
Since $\mathbb{E}[\Delta(\hat{v})] < \Delta_2$, and we know by definition $\mathbb{E}[\Delta(\hat{v})] = p_1 \Delta_1 + p_2 \Delta_2$, it strictly necessitates that $\Delta_1 < \Delta_2$. Therefore, $\Delta_{min} = \Delta_1$.
The penalty simplifies to:
\begin{equation}
    \epsilon = \mathbb{E}[\Delta(\hat{v})] - \Delta_1 = p_1 \Delta_1 + p_2 \Delta_2 - \Delta_1 = p_2 (\Delta_2 - \Delta_1)
\end{equation}
From $\mathbb{E}[\Delta(\hat{v})] = \Delta_2 - \frac{p_2 \theta^2}{2p_1}$, we also have $\Delta_2 - \Delta_1 = \frac{p_2 \theta^2}{2p_1^2}$. Substituting this into $\epsilon$:
\begin{equation}
    \epsilon = p_2 \left( \frac{p_2 \theta^2}{2p_1^2} \right) = \frac{1}{2} \left( \frac{p_2 \theta}{p_1} \right)^2 = \frac{1}{2} \left( \frac{\theta q^\theta}{1 - q^\theta} \right)^2
\end{equation}

Finally, substituting $\epsilon$ back into the latency alignment equation \eqref{eq:1bit_wq_final}:
\begin{equation}
    \mathbb{E}[\text{1-bit SVF}] \le \mathbb{E}[o] + \frac{\mathbb{E}[vol]}{\mathbb{E}[vol] - \epsilon} \frac{W_0}{\frac{(1-\alpha)M}{2} - \lambda \mathbb{E}[vol]}
\end{equation}
Using the established exact OPT bound $\mathbb{E}[\text{OPT}] \ge \frac{W_{exact}}{M}$ where $W_{exact} = (1-\mu) W_0$:
\begin{equation}
    \mathbb{E}[\text{1-bit SVF}] \le \mathbb{E}[\text{OPT}] + \frac{\mathbb{E}[vol]}{\mathbb{E}[vol] - \epsilon} \frac{1}{1-\mu} \frac{W_{exact}}{M} \frac{M}{\frac{(1-\alpha)M}{2} - \lambda \mathbb{E}[vol]}
\end{equation}
Factoring out $\mathbb{E}[\text{OPT}]$ and defining $\rho = \frac{\lambda \mathbb{E}[vol]}{(1-\alpha)M/2} \in [0, 1)$, we elegantly conclude:
\begin{equation}
    \mathbb{E}[\text{CR}_{\text{1-bit}}] \le 1 + \frac{\mathbb{E}[vol]}{\mathbb{E}[vol] - \epsilon} \frac{2}{(1-\alpha)(1-\mu)(1-\rho)}
\end{equation}
\end{proof}

% \section{Predictor Performance}
% \label{sec:appendix_predictor}
% \begin{table}[t]
% \centering
% \small
% \caption{Predictor accuracy (regression)}
% \label{tab:predictor_regression}
% \begin{tabular}{l r r r r r}
% \toprule
% \textbf{Predictor} & MAE & Median AE & Acc@10 (%) & Acc@50 (%) & Acc@100 (%) \\
% \midrule
% llama-3-8b / lmsys & 205.60 & 91.45 & 18.92 & 39.44 & 51.90 \\
% llama-3-8b / longbench & 19.25 & 4.10 & 70.22 & 94.67 & 96.78 \\
% \bottomrule
% \end{tabular}
% \end{table}

% \begin{table}[t]
% \centering
% \small
% \caption{Predictor accuracy (classifier)}
% \label{tab:predictor_classifier}
% \begin{tabular}{l r r r}
% \toprule
% \textbf{Predictor} & Accuracy (%) & Precision (long) & Recall (long) \\
% \midrule
% lmsys / burst (thr=64) & 79.03 & 0.8569 & 0.8200 \\
% longbench / burst (thr=64) & 93.78 & 0.0000 & 0.0000 \\
% lmsys / poisson (thr=777) & 82.73 & 0.3588 & 0.6026 \\
% longbench / poisson (thr=36) & 77.33 & 0.3022 & 0.8265 \\
% \bottomrule
% \end{tabular}
% \end{table}

\end{document}